\definecolor{Gred}{RGB}{219, 50, 54}
\definecolor{ToCgreen}{RGB}{0, 128, 0}
\newtheorem{definition}{Definition}
\newtheorem{remark}{Remark}
\newtheorem{theorem}{Theorem}
\newtheorem{assumption}{Assumption}
\newtheorem{example}{Example}
\newtheorem{lemma}[theorem]{Lemma}
\newcommand{\TV}{\mathrm{TV}}
\newcommand{\KL}{\mathrm{KL}}
\newcommand{\Hellinger}{\mathrm{H}}
\newcommand{\LC}{\mathrm{LC}}
\newcommand{\Wtwo}{\mathrm{W}_2}
\newcommand{\sigmamin}{\sigma^2_{\mathrm{min}}}
\newcommand{\sigmamax}{\sigma^2_{\mathrm{max}}}
\newcommand{\SLC}{\mathrm{SLC}}
\newcommand{\subG}{\mathrm{subG}}
\newcommand{\Sinit}{S_{\mathrm{init}}}
\newcommand{\Send}{S_{\mathrm{target}}}
\newcommand{\wh}{\widehat}
\title{Critical windows: non-asymptotic theory for feature emergence in diffusion models}
\author{
    Marvin Li\thanks{Email:
    \texttt{marvinli@college.harvard.edu}} \\
    Harvard College
    \and
    Sitan Chen\thanks{Email: \texttt{sitan@seas.harvard.edu}, supported in part by NSF Award 2331831} \\
    Harvard SEAS
}
\newcommand{\shortleft}{\scriptscriptstyle\leftarrow}
\newcommand{\R}{\mathbb{R}}
\newcommand{\D}{\mathrm{d}}
\newcommand{\forward}{X}
\newcommand{\reverse}{X^{\shortleft}}
\newcommand{\revsub}[1]{X^{\shortleft,#1}}
\newcommand{\pathrev}[1]{P^{\shortleft,#1}}
\newcommand{\modrev}[3]{X^{\shortleft}_{#3}[#1^{\langle #2\rangle}]}
\newcommand{\modrevpath}[2]{P^{\shortleft}[#1^{\langle #2\rangle}]}
\newcommand{\norm}[1]{\|#1\|}
\newcommand{\Tupper}{T_{\mathrm{upper}}}
\newcommand{\Tlower}{T_{\mathrm{lower}}}
\newcommand{\modrevlaw}[3]{\ifthenelse{\equal{#3}{}}{p[#1^{\langle #2\rangle}]}{p_{T-#3}[#1^{\langle #2\rangle}]}}
\begin{document}
\maketitle

\begin{abstract}

We develop theory to understand an intriguing property of diffusion models for image generation that we term \emph{critical windows}. Empirically, it has been observed that there are narrow time intervals in sampling during which particular features of the final image emerge, e.g. the image class or background color~\citep{ho2020denoising,meng2022sdedit,9879163,raya2023spontaneous,georgiev2023journey,sclocchi2024phase,biroli2024dynamical}. While this is advantageous for interpretability as it implies one can localize properties of the generation to a small segment of the trajectory, it seems at odds with the continuous nature of the diffusion. 

We propose a formal framework for studying these windows and show that for data coming from a mixture of strongly log-concave densities, these windows can be provably bounded in terms of certain measures of inter- and intra-group separation. We also instantiate these bounds for concrete examples like well-conditioned Gaussian mixtures. Finally, we use our bounds to give a rigorous interpretation of diffusion models as hierarchical samplers that progressively ``decide'' output features over a discrete sequence of times. 

We validate our bounds with synthetic experiments. Additionally, preliminary experiments on Stable Diffusion suggest critical windows may serve as a useful tool for diagnosing fairness and privacy violations in real-world diffusion models.
\end{abstract}

\section{Introduction}

Diffusion models currently stand as the predominant approach to generative modeling in audio and image domains~\citep{sohl2015deep,dhariwal2021diffusion,song2020score,ho2020denoising}. At their core is a ``forward process'' that transforms data into noise, and a learned ``reverse process'' that progressively undoes this noise, thus generating fresh samples. Recently, a series of works has established rigorous convergence guarantees for diffusion models for arbitrary data distributions~\citep{DBLP:conf/iclr/ChenC0LSZ23,lee2023convergence,chen2023improved,benton2023linear}. While these results prove that in some sense diffusion models are entirely principled, the generality with which they apply suggests further theory is needed to explain the rich behaviors of diffusion models specific to the \emph{real-world} distributions on which they are trained.

In this work, we focus on a phenomenon that we term \emph{critical windows}. In the context of image generation, it has been observed that there are narrow time intervals along the reverse process during which certain features of the final image are determined, e.g. the class, color, background~\citep{ho2020denoising,meng2022sdedit,9879163,raya2023spontaneous,georgiev2023journey,sclocchi2024phase,biroli2024dynamical}. This suggests that even though the reverse process operates in continuous time, there is a series of discrete ``jumps'' during the sampling process during which the model ``decides'' on certain aspects of the output. The existence of these critical windows is highly convenient from an interpretability standpoint, as it lets one zoom in on specific parts of the diffusion model trajectory to understand how some feature of the generated output emerged. 

Despite the strong empirical evidence for the existence of critical windows (e.g. the striking Figures 3, B.6, and B.10 from~\citet{georgiev2023journey} and Figures 1 and 2 from~\cite{sclocchi2024phase}), our mathematical understanding of critical windows is very immature. Indeed, from the perspective of prior theory,\footnote{See Section~\ref{sec:related} for a discussion of concurrent works.} the different times of the reverse process largely behave as equal-class citizens, outside the realm of very simple toy models of data. We thus ask:
\begin{center}
	\emph{Can we \textbf{prove} the existence of critical windows in the reverse process for a rich family of data distributions?}
\end{center}
Before stating our theoretical findings, we outline the framework we adopt (see Section~\ref{sec:noisedenoise} for a formal treatment). Also, as issues of discretization, score error, and the support of the data distribution lying on a lower-dimensional submanifold are orthogonal to this paper, throughout we will conflate the data distribution with the output distribution of the model and assume the reverse process is run in continuous time with perfect score.

\subsection{General framework} 
\label{sec:framework_intro}

\begin{figure}
    \centering
    \includegraphics[width=\textwidth]{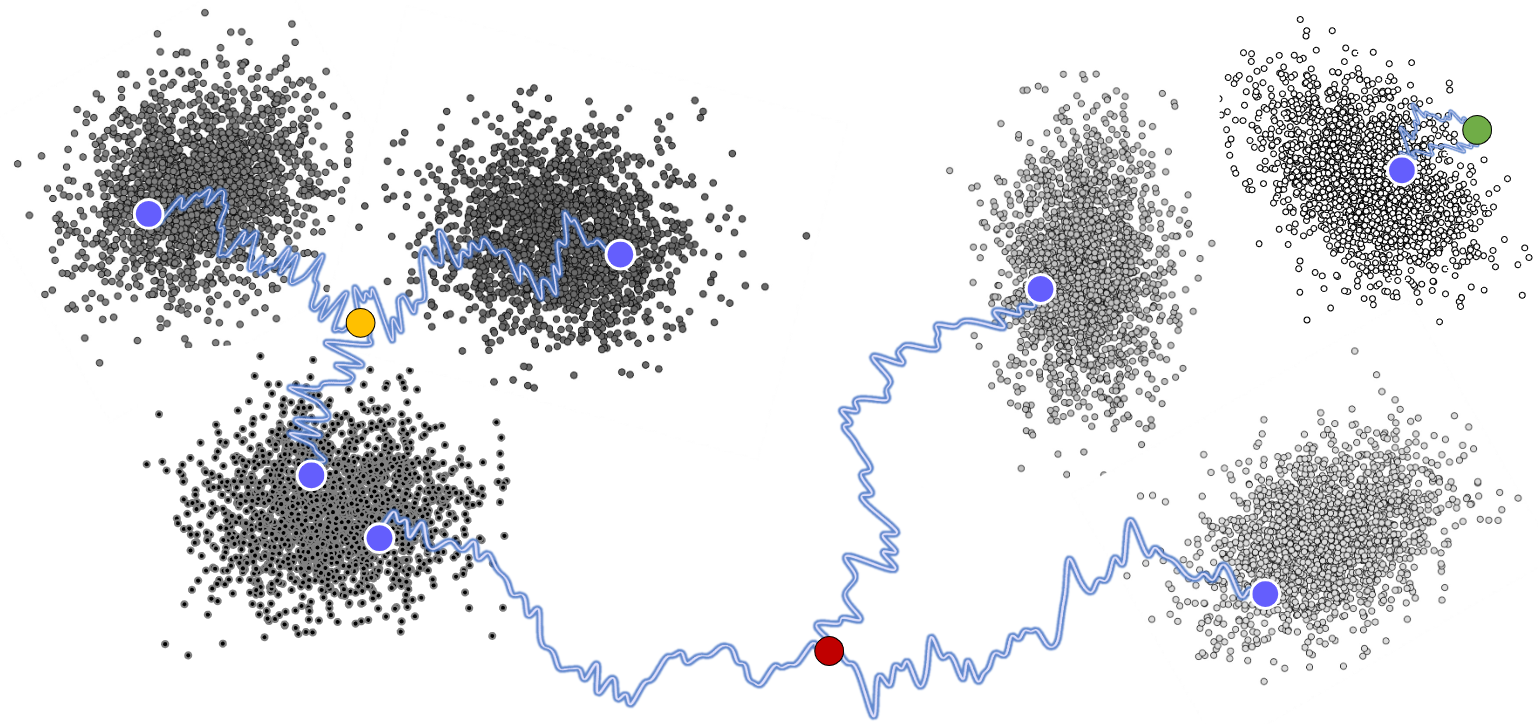}
    \caption{Cartoon depiction of running forward process for time $t$ (to produce one of the red/yellow/green dots corresponding to large/medium/small $t$) and then running reverse process (trajectories in blue) for time $t$ to sample from some sub-mixture. \vspace{-1em}}
    \label{fig:cartoon}
\end{figure}

Our starting point is a setup related to one in~\citet{georgiev2023journey} and also explored in the concurrent work of~\cite{sclocchi2024phase} \--- see Section~\ref{sec:related} for a comparison to these two works. Given a sample $x$ from the data distribution $p$, consider the following experiment. We run the forward process (see Eq.~\eqref{eq:forward} below) starting from $x$ for intermediate amount of time $t$ to produce a noisy sample $x_t$. We then run the reverse process (see Eq.~\eqref{eq:reverse}) for time $t$ starting from $x_t$ to produce a new sample $x'$ (see Section~\ref{sec:prelims} for formal definitions). Observe that as $t\to\infty$, the distribution over $x_t$ converges to Gaussian, and thus the resulting distribution over $x'$ converges to $p$. As $t\to 0$, the distribution over $x'$ converges to a point mass at $x$ \--- in this latter regime, it was empirically observed by~\cite{ho2020denoising} that for small $t$, the distribution over $x'$ is essentially given by randomly modifying low-level features of $x$.

\paragraph{Critical windows for mixture models.} Qualitatively, we can ask for the first, i.e. largest, time $t$ for which samples from the distribution over $x'$ mostly share a certain feature with $x$. To model this, we consider a data distribution $p$ given by a mixture of sub-populations $p^1,\ldots,p^K$. A natural way to quantify whether $x'$ shares a feature with $x$ is then to ask whether the distribution over $x'$ is close to a particular \emph{sub-mixture}. E.g., if $x$ is a cat image, and $\Send\subset\{1,\ldots,K\}$ denotes the sub-populations $p^i$ corresponding to cat images, then one can ask whether there is a critical window of $t$ such that the distribution over $x'$ is close to the sub-mixture given by $\Send$ (see Figure~\ref{fig:cartoon}). 

Finally, rather than reason about specific initial samples from $p$, we will instead marginalize out the randomness of $x$ so we can reason at a more ``population'' level. Concretely, we consider $x$ which is drawn from some $p^i$ for $i\in\Send$, or more generally from some sub-mixture indexed by a subset $\Sinit\subset\Send$, and consider the resulting marginal distribution over $x'$, which we denote by $\modrevlaw{\Sinit}{t}{}$. So if for instance $\Sinit$ denoted the sub-mixture of \emph{brown} cats, then if there is a critical window of times for which $\modrevlaw{\Sinit}{t}{}$ is close to the sub-mixture given by $\Send$, then one can interpret these times as the point at which, to generate a brown cat, the diffusion model ``decides'' its sample will be a cat.

\subsection{Our contributions}

Our results are threefold: (1) we give a general characterization of the critical window for a rich family of multimodal distributions, (2) we specialize these bounds to specific distribution classes to get closed-form predictions, (3) we use these to prove, under a distributional assumption, that the reverse process is a ``hierarchical sampler'' that makes a series of discrete feature choices to generate the output.

\paragraph{General characterization of critical window.} 
We consider distributions $p$ which are \emph{mixtures of strongly log-concave distributions} in $\R^d$. In Section~\ref{sec:master}, we give general bounds on the the critical window at which $\modrevlaw{\Sinit}{t}{}$ approximates the sub-mixture given by $\Send$ for any choice of $\Sinit$.  These bounds depend on the total variation (TV) distance between sub-populations inside and outside $\Sinit$ and $\Send$ along the forward process. We identify two endpoints (see Eqs.~\eqref{eq:Tlower} and~\eqref{eq:Tupper} for formal definitions):
\begin{itemize}[itemsep=0pt,leftmargin=*,topsep=0pt]
    \item $\Tlower$: the time in the forward process at which the initial sub-mixture indexed by $\Sinit$ and the target sub-mixture indexed by $\Send$ first become close in TV
    \item $\Tupper$: the time in the forward process at which a component in $\Send$ begins to exhibit non-negligible overlap with a component in the rest of the mixture\footnote{\emph{A priori} $T_{\mathrm{lower}}$ need not be smaller than $T_{\mathrm{upper}}$ In Section~\ref{sec:instantiate}, we show this holds when $\Send$ corresponds to a ``salient'' feature.}
\end{itemize}

\begin{theorem}[Informal, see Theorem~\ref{masters_theorem}]\label{thm:masters_informal}
    Suppose $p$ is a mixture of strongly log-concave distributions, and let $\Sinit \subset \Send$. For any $t\in[\Tlower,\Tupper]$, if one runs the forward process for time $t$ starting from the sub-mixture given by $\Sinit$, then runs the reverse process for time $t$, the result will be close in TV to the sub-mixture given by $\Send$.
\end{theorem}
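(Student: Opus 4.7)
The plan is to introduce $\modrevlaw{\Send}{t}{}$ as an intermediate distribution and apply the triangle inequality:
\[
\TV(\modrevlaw{\Sinit}{t}{}, p^{\Send}) \leq \TV(\modrevlaw{\Sinit}{t}{}, \modrevlaw{\Send}{t}{}) + \TV(\modrevlaw{\Send}{t}{}, p^{\Send}),
\]
and then bound each term using one of the two endpoints, $\Tlower$ for the first and $\Tupper$ for the second.

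For the first term, both $\modrevlaw{\Sinit}{t}{}$ and $\modrevlaw{\Send}{t}{}$ are obtained by pushing the noised sub-mixtures $p^{\Sinit}_t$ and $p^{\Send}_t$ through the \emph{same} reverse SDE, namely the one driven by the score $\nabla \log p_s$ of the full mixture. Since total variation is non-expansive under any Markov kernel, the data processing inequality gives
\[
\TV(\modrevlaw{\Sinit}{t}{}, \modrevlaw{\Send}{t}{}) \leq \TV(p^{\Sinit}_t, p^{\Send}_t),
\]
and the right-hand side is small for $t \geq \Tlower$ essentially by definition of $\Tlower$.

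For the second term, I would compare two reverse SDEs, both initialized at $p^{\Send}_t$: the ``true'' one driven by $\nabla \log p_s$, whose terminal law is $\modrevlaw{\Send}{t}{}$, and a ``restricted'' one driven by $\nabla \log p^{\Send}_s$, whose terminal law is exactly $p^{\Send}$ (this is the standard time-reversal identity applied to the forward OU process acting on the sub-mixture $p^{\Send}$). A Girsanov computation then bounds the KL divergence between the two path laws, and hence TV via Pinsker, by the time-integrated expected squared score difference along the restricted reverse trajectory. Writing $p_s = w_{\Send}\, p^{\Send}_s + \sum_{j \notin \Send} w_j p^j_s$, the pointwise difference $\nabla \log p_s(x) - \nabla \log p^{\Send}_s(x)$ is controlled by the ratio of the ``leak'' mass $\sum_{j \notin \Send} w_j p^j_s(x)$ to the in-class mass $w_{\Send}\, p^{\Send}_s(x)$, and it is exactly the non-overlap condition encoded by $t \leq \Tupper$ that makes this ratio small on typical paths.

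The main technical obstacle is making this second step quantitative: one must convert the TV-type overlap quantity defining $\Tupper$ into a pointwise (or in-expectation) bound on the squared score difference, and then integrate this along a reverse SDE whose marginals are the time-reversed forward marginals of $p^{\Send}$. Here I would lean on the assumption that each $p^i$ is strongly log-concave: this property is preserved (with degraded parameters) under the forward OU dynamics, which gives sub-Gaussian tail control for $p^{\Send}_s$ and moment bounds on the component scores. These in turn let one handle ``atypical'' paths where the ratio-based score bound fails by a union bound or truncation argument, absorbing them into a lower-order error and leaving the main contribution governed by the inter-group TV overlap at time $\Tupper$.
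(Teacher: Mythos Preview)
Your proposal is essentially the paper's proof: the same triangle-inequality split via $\modrevlaw{\Send}{t}{}$, data processing for the first term, and Girsanov/Pinsker for the second. The one refinement worth noting is that the paper avoids a truncation argument in the second term by using the exact identity
\[
\nabla \ln p_t - \nabla \ln p^{\Send}_t \;=\; \Bigl(\tfrac{\sum_{j\notin\Send} w_j p^j_t}{\sum_{i} w_i p^i_t}\Bigr)\bigl(\nabla \ln p^{[K]\setminus\Send}_t - \nabla \ln p^{\Send}_t\bigr),
\]
so the ratio is to the \emph{total} mass (hence bounded by $1$) rather than to the in-class mass as you wrote; one then applies Cauchy--Schwarz to separate a fourth moment of the ratio (controlled directly by $\sqrt{1-\TV^2}$ via a Hellinger/Le~Cam inequality, no truncation needed) from a fourth moment of the inter-group score difference (controlled via strong log-concavity).
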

\noindent As we show empirically on synthetic examples (Fig.~\ref{fig:gaussian_hierarchy}) these bounds can be highly predictive of the true critical windows.

The intuition for this result is that there are two competing effects at work. On the one hand, if $t$ is sufficiently large, then running the forward process for time $t$ starting from either the initial sub-mixture given by $\Sinit$ versus the target sub-mixture given by $\Send$ will give rise to similar distributions. So if we run the reverse process on these, the resulting distributions will remain close, thus motivating our definition of $\Tlower$. On the other hand, we want these resulting distributions to be close to the target sub-mixture indexed by $\Send$. But if $t$ is too large, they will merely be close to $p$. To avoid this, we need $t$ to be small enough that along the reverse process, the overall score function of $p$ to remains close to the score function of the target sub-mixture. Intuitively, this should happen provided the components of $p$ outside $\Send$ do not overlap much with the ones inside $\Send$ even after running the forward process for time $t$, thus motivating our definition of $\Tupper$.


\begin{remark}
    Going back to our original motivation of capturing heterogeneous distributions like the distribution over natural images as mixture models, it may at first glance seem extremely strong that we require that each sub-population form a strongly log-concave component. e.g. in the latent space over which the diffusion operates. We clarify that in our setting, a sub-population might correspond to a \emph{sub-mixture} consisting of multiple such strongly log-concave components. In the example of natural images, if we think of a particular image class as a sub-mixture consisting of neighborhoods around different images in the embedding space, then because intuitively the neighborhood around any particular image is compact in the embedding space, it should be approximately unimodal, justifying our strong log-concavity assumption.
\end{remark}

\paragraph{Concrete estimates for critical times.}

The endpoints of the critical window in Theorem~\ref{thm:masters_informal} are somewhat abstract. Our second contribution is to provide concrete bounds for these\--- see Section~\ref{sec:examples} for details. We first consider the general setting of Theorem~\ref{thm:masters_informal} where $p$ is a mixture of strongly log-concave distributions, under the additional assumption that the components would be somewhat close in Wasserstein distance if they were shifted to all have mean zero.

\begin{theorem}[Informal, see Theorem~\ref{corr:wassersteinlower}]\label{thm:subgaussian_informal}
    Suppose $p$ is a mixture of $1/\sigma^2$-strongly log-concave distributions with means $\mu_1,\ldots,\mu_K$, and let $\Sinit\subset \Send$.
    
    Suppose that for any $i\in\Send$ and any $j\not\in\Send$, $\norm{\mu_i - \mu_j} \gtrsim \sigma\sqrt{d}$. Then there is an upper bound for $\Tlower$, which is dominated by $\ln\max_{i\in\Sinit, j\in\Send} \norm{\mu_i - \mu_j}$, and there is a lower bound for $\Tupper$, which is dominated by $\ln\min_{i\in\Send, j\not\in \Send}\norm{\mu_i - \mu_j}$.
\end{theorem}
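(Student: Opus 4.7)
The plan is to unfold the abstract definitions of $\Tlower$ and $\Tupper$ underlying Theorem~\ref{masters_theorem} and to track explicitly how the Ornstein--Uhlenbeck forward process deforms each $1/\sigma^2$-strongly log-concave component $p^i$. Along the forward process, $p^i_t$ is the law of $e^{-t}X + \sqrt{1-e^{-2t}}\,Z$ for $X\sim p^i$ and $Z\sim\calN(0,I)$, so $p^i_t$ has mean $e^{-t}\mu_i$ and, since strong log-concavity is preserved under scaling and Gaussian convolution, is $1/\sigma_t^2$-strongly log-concave with $\sigma_t^2 := e^{-2t}\sigma^2 + (1-e^{-2t})$. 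In particular $p^i_t$ is $\sigma_t$-sub-Gaussian around $e^{-t}\mu_i$ and its covariance is at most $\sigma_t^2 I$.

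For the upper bound on $\Tlower$, by convexity of TV and a triangle inequality it suffices to control $\TV(p^i_t, p^j_t)$ for each pair $i\in\Sinit$, $j\in\Send$. I would compare both components to the Gaussian proxies $\calN(e^{-t}\mu_i,\sigma_t^2 I)$ and $\calN(e^{-t}\mu_j,\sigma_t^2 I)$ via the log-Sobolev inequality for strongly log-concave distributions (whose KL penalty only depends on $\sigma$ and $d$), and then apply Pinsker together with the explicit Gaussian TV formula to get $\TV(p^i_t, p^j_t) \lesssim e^{-t}\norm{\mu_i-\mu_j}/\sigma_t + O(\sigma\sqrt{d}/\sigma_t)$. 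Requiring this to be a small constant forces $t \gtrsim \ln\max_{i\in\Sinit, j\in\Send}\norm{\mu_i - \mu_j}$, up to lower-order dependence on $\sigma$ and $d$, which is the advertised bound on $\Tlower$.

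For the lower bound on $\Tupper$, I need to show $\TV(p^i_t, p^j_t)\approx 1$ whenever $i\in\Send$, $j\notin\Send$, i.e.\ these components have essentially disjoint supports at time $t$. Sub-Gaussian concentration places all but $\delta$ of the mass of $p^i_t$ within a ball of radius $O(\sigma_t\sqrt{d\log(1/\delta)})$ around $e^{-t}\mu_i$, so TV is close to $1$ provided $e^{-t}\norm{\mu_i-\mu_j} \gtrsim \sigma_t\sqrt{d}$. Substituting the formula for $\sigma_t^2$ and rearranging gives $\norm{\mu_i-\mu_j}^2 \gtrsim d(\sigma^2 + e^{2t} - 1)$, which one solves for $t$ to obtain $t \lesssim \tfrac{1}{2}\ln\bigl(1 + \norm{\mu_i-\mu_j}^2/d - \sigma^2\bigr) \asymp \ln\norm{\mu_i-\mu_j}$. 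The standing hypothesis $\norm{\mu_i-\mu_j}\gtrsim \sigma\sqrt{d}$ is precisely what is needed for the right-hand side to be positive, confirming the advertised lower bound on $\Tupper$.

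The main obstacle is that both endpoints are controlled by the balance between the shrinking mean gap $e^{-t}\norm{\mu_i-\mu_j}$ and the growing effective noise scale $\sigma_t$, so one must propagate $\sigma_t$ carefully rather than replace it by a crude upper bound like $1$ or $\sigma$; otherwise both sides of the window lose a factor of $\sigma\sqrt{d}$ and the result becomes vacuous in the borderline regime $\norm{\mu_i-\mu_j}\asymp \sigma\sqrt{d}$. Everything else reduces to standard strongly-log-concave-to-Gaussian comparisons via the log-Sobolev inequality, after which the two bounds become transparent Gaussian TV calculations.
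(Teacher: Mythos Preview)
Your treatment of $\Tupper$ matches the paper's: both argue via sub-Gaussian concentration that $p^i_t$ and $p^j_t$ have essentially disjoint supports as long as $e^{-t}\norm{\mu_i-\mu_j}$ dominates the effective radius $O(\sigma_t\sqrt{d})$, and then solve for $t$.

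Your treatment of $\Tlower$ has a genuine gap. You propose to sandwich each $p^i_t$ by a Gaussian proxy $\calN(e^{-t}\mu_i,\sigma_t^2 I)$ ``via the log-Sobolev inequality \ldots\ (whose KL penalty only depends on $\sigma$ and $d$).'' No such bound exists: the log-Sobolev inequality for $1/\sigma^2$-strongly log-concave measures controls relative entropy \emph{toward} $p^i_t$ by Fisher information, not the KL between $p^i_t$ and a fixed Gaussian. Concretely, $\calN(0,\epsilon I)$ is $1/\sigma^2$-strongly log-concave for every $\sigma^2\ge\epsilon$, yet $\KL\bigl(\calN(0,\epsilon I)\,\|\,\calN(0,\sigma^2 I)\bigr)\to\infty$ as $\epsilon\to 0$. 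So the ``KL penalty'' you add can be arbitrarily large, and the additive $O(\sigma\sqrt{d}/\sigma_t)$ term in your TV bound never goes to zero anyway (since $\sigma_t$ stays between $\min(\sigma,1)$ and $\max(\sigma,1)$), so the bound is vacuous.

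The paper sidesteps this entirely: it never compares to a Gaussian proxy. Instead it writes $p^i_t = h^i_t * \calN(0,(1-e^{-2t})I)$ with $h^i_t = \mathrm{law}(e^{-t}X^i_0)$ and invokes the short-time regularization inequality $\KL(p*\gamma_\tau\,\|\,q*\gamma_\tau)\le \tfrac{1}{2\tau^2}\Wtwo(p,q)^2$ directly on the pair $(h^i_t,h^j_t)$. A trivial coupling gives $\Wtwo(h^i_t,h^j_t)\le e^{-t}(\norm{\mu_i-\mu_j}+\Upsilon)$, where $\Upsilon$ bounds the $\Wtwo$ distance between any two \emph{centered} components; under $1/\sigma^2$-strong log-concavity one has $\Upsilon\lesssim\sigma\sqrt{d}$, which is where the $\sigma\sqrt{d}$ lower-order term actually comes from. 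This is the missing idea in your $\Tlower$ argument.
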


\noindent Theorem~\ref{thm:subgaussian_informal} shows that the start time of the critical window scales as the log of the max distance between any component in $\Sinit$ and any component in $\Send$, whereas the end time scales as the log of the min distance between any component in $\Send$ and any component in $[K]\backslash\Send$. We can interpret this as saying the following about feature emergence. If $\Send$ corresponds to the part of the data distribution with some particular feature, if that feature is sufficiently \emph{salient} in the sense that typical images with that feature are closer to each other, then $\Tlower < \Tupper$, and therefore there exists a critical window of times $t$ during which the feature associated to $\Send$ emerges. For latent diffusion models in particular, the manifold of images in latent space becomes highly structured, so there will be “salient” features such that images with the same “salient” feature will be closer together in the latent space. Furthermore, the length of this window, i.e. the amount of time after the features associated to $\Send$ emerge but before other features do, is logarithmic in the ratio between the level of separation between $\Send$ and $[K]\backslash \Send$, versus the level of separation within $\Send$. 
\begin{remark}
If the target mixture is the same as the initial mixture, $\Tlower =0$ and we only need $\Tupper>0$ to form a critical window. This setting is especially useful for interpretability and data attribution, which usually  examines an object x with property p and asks for the largest time for which property p is preserved.
\end{remark}

In Appendix~\ref{sec:dictionary}, we specialize the bound in Theorem~\ref{thm:subgaussian_informal} to a sparse coding setting where the means of the components are given by sparse linear combinations of a collection of incoherent ``dictionary vectors.'' In this setting, we show that the endpoints $\Tlower$ (resp. $\Tupper$) have a natural interpretation in terms of the \emph{Hamming distances} between the sparse linear combinations defining the means within $\Sinit$ and $\Send$ (resp. between $\Send$ and $[K]\backslash\Send$).

Theorem~\ref{thm:subgaussian_informal} is quite general except for one caveat: we must assume that the the components outside of $\Send$ have some level of separation. Note that a $1/\sigma^2$-strongly log-concave distribution in $d$ dimensions will mostly be supported on a thin shell of radius $\sigma\sqrt{d}$ \citep{kannan1995isoperimetric}, so our assumption essentially amounts to ensuring the balls that these shells enclose, for any component inside $\Send$ and any component outside $\Send$, do not intersect.

Next, we remove this caveat for mixtures of Gaussians:
\begin{theorem}[Informal, see Theorem~\ref{thm::well_conditioned_gaussian_theorem}]\label{thm:gaussian_informal}
    Suppose $p$ is a mixture of $K$ identity-covariance Gaussians in $\R^d$with means $\mu_1,\ldots,\mu_K$, and let $\Sinit\subset \Send$. Then there is an upper bound for $\Tlower$, which is dominated by $\ln\max_{i\in\Sinit, j\in\Send} \norm{\mu_i - \mu_j}$, and there is a lower bound for $\Tupper$, which is dominated by $\ln\min_{i\in\Send, j\not\in \Send}\norm{\mu_i - \mu_j}$.
\end{theorem}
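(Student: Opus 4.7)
The plan is to reduce to the general master theorem (Theorem~\ref{masters_theorem}) and then exploit the closed-form for total variation between Gaussians to eliminate the dimension-dependent separation assumption required in Theorem~\ref{thm:subgaussian_informal}. Concretely, the master theorem expresses $\Tlower$ and $\Tupper$ in terms of pairwise TV distances $\TV(p^i_t, p^j_t)$ between the components along the forward process, so the whole task becomes estimating these TV distances and then inverting them to solve for $t$.

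The key simplification in the Gaussian case is that identity-covariance Gaussians are preserved under the OU forward process: if $p^i = \calN(\mu_i, I)$, then at forward time $t$ we have $p^i_t = \calN(e^{-t}\mu_i, I)$ (the shrinking signal plus added noise recombine to give unit covariance). Hence $\TV(p^i_t, p^j_t)$ equals the one-dimensional Gaussian TV $2\Phi\bigl(e^{-t}\|\mu_i-\mu_j\|/2\bigr) - 1$, which is a function of $e^{-t}\|\mu_i - \mu_j\|$ alone and does not involve the ambient dimension $d$. This is the place where the Gaussian assumption buys us something beyond Theorem~\ref{thm:subgaussian_informal}: the sub-Gaussian argument had to go through Wasserstein comparisons and thin-shell concentration, which introduced the $\sigma\sqrt{d}$ threshold, whereas here TV is directly dimension-free.

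From here I would handle the two endpoints in parallel. For $\Tlower$, the condition in the master theorem is that the sub-mixtures indexed by $\Sinit$ and $\Send$ become close in TV after forward time $t$. Using sub-additivity of TV over mixtures, this is controlled by $\max_{i\in\Sinit, j\in\Send}\TV(p^i_t, p^j_t)$, which becomes small as soon as $e^{-t}\max_{i\in\Sinit, j\in\Send}\|\mu_i - \mu_j\| \lesssim 1$; solving gives $\Tlower \lesssim \ln\max_{i\in\Sinit, j\in\Send}\|\mu_i - \mu_j\| + O(1)$. For $\Tupper$, the condition is that each $i\in\Send$ has negligible overlap with every $j\notin\Send$, i.e.\ $\TV(p^i_t, p^j_t)\approx 1$, which holds whenever $e^{-t}\|\mu_i - \mu_j\|\gtrsim 1$; solving yields $\Tupper\gtrsim \ln\min_{i\in\Send, j\notin\Send}\|\mu_i - \mu_j\| - O(1)$.

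The main obstacle is translating the pairwise component bounds into honest bounds on the TV between the full sub-mixtures and into the precise quantities that enter the master theorem. Sub-additivity loses nothing qualitatively but the mixture weights must be tracked carefully, and for $\Tupper$ one has to argue that the total score of $p_t$ is dominated by the scores of the components in $\Send$ along the reverse trajectory starting from the initial sub-mixture. I expect the cleanest route is to fit everything into the framework of Theorem~\ref{masters_theorem} by verifying its hypotheses with the explicit Gaussian TV expressions, rather than redo a Girsanov-type argument from scratch; the dimension-free nature of $\TV(\calN(\mu,I), \calN(\nu,I))$ is exactly what lets us dispense with the $\|\mu_i-\mu_j\|\gtrsim\sigma\sqrt{d}$ hypothesis of Theorem~\ref{thm:subgaussian_informal}.
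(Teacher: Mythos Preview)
Your approach is correct for the identity-covariance case and will yield the claimed bounds, but it differs somewhat from what the paper actually does. You propose to invoke Theorem~\ref{masters_theorem} as a black box and then estimate the thresholds $\Tlower(\epsilon)$ and $\Tupper(\epsilon)$ by the explicit one-dimensional formula $\TV(\calN(e^{-t}\mu_i,I),\calN(e^{-t}\mu_j,I))=2\Phi(e^{-t}\|\mu_i-\mu_j\|/2)-1$. The paper instead \emph{reruns} the Girsanov/Cauchy--Schwarz argument of Theorem~\ref{masters_theorem} with Gaussian-specific replacements for the two key estimates: the ratio term is bounded directly via the closed-form Hellinger distance for Gaussians (Lemma~\ref{ratioboundgaussian}, giving the factor $\exp\{-e^{-2t}\Delta(\Send)^2/8\overline{\lambda}\}$), and the score-difference term is computed explicitly for Gaussian scores (Lemma~\ref{nabla_ratio_bound}). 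For $\Tlower$ the paper goes through KL rather than the exact TV formula.

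What each approach buys: the paper's route handles general well-conditioned covariances (Assumption~\ref{nice_covariances}) uniformly, since Hellinger and KL have closed forms for arbitrary Gaussian pairs whereas TV does not; this is why the formal Theorem~\ref{thm::well_conditioned_gaussian_theorem} is stated for $\underline{\lambda}\le 1\le\overline{\lambda}$. Your route is more elementary in the identity case and correctly isolates the dimension-free mechanism, but the polynomial prefactor $\overline{R}^2+M^2+\sqrt{M}\Psi^4+\sqrt{\overline{M}}$ in the master theorem's conclusion still carries $d$ (through $M$ and $\overline{M}$), so you do not get a literally dimension-free statement; rather, the dimension enters only through a $\frac{1}{2}\ln\ln(\mathrm{poly}(d,\overline{R},K)/\epsilon)$ correction once you choose $\epsilon$ to absorb that prefactor. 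This matches the paper's $\ln\ln$ term in $\Tupper$, so the end bounds coincide.
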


\noindent In fact our result extends to general mixtures of Gaussians with sufficiently well-conditioned covariances, see Theorem~\ref{thm::well_conditioned_gaussian_theorem}. We also explore the dependence on the mixing weights of the different components (see Appendix~\ref{app:weights}).

\paragraph{Hierarchical sampling interpretation.}

Thus far we have focused on a specific target sub-mixture $\Send$, which would correspond to a specific feature in the generated output. In Section~\ref{sec:hierarchy}, we extend these findings to distributions with a \emph{hierarchy} of features. To model this, we consider Gaussian mixtures with hierarchical clustering structure. This structure ensures the mixture decomposes into well-separated clusters of components such that the separation between clusters exceeds the separation within clusters, and furthermore each cluster recursively satisfies the property of being decomposable into well-separated clusters, etc. This naturally defines a tree, which we call a \emph{mixture tree}, where each node of the tree corresponds to a cluster at some resolution, with the root corresponding to the entire data distribution and the leaves corresponding to the $K$ individual components of the mixture (see Definition~\ref{def:tree}).

If we think of every node $v$ as being associated with a feature, then the corresponding cluster of components is comprised of all sub-populations which possess that feature, in addition to all features associated to nodes on the path from the root to $v$. By chaining together several applications of Theorem~\ref{thm:gaussian_informal}, we prove the following:
\begin{theorem}[Informal, see Theorem~\ref{thm:hierarchy_example}] \label{thm:hierarchy_informal}
    For a hierarchical mixture of identity-covariance Gaussians with means specified by a mixture tree, for any root-to-leaf path $(v_0,\ldots,v_L)$ in the mixture tree, where the leaf $v_L$ corresponds to a component $p^i$ of the mixture, there exists an $\underline{L}$ and a discrete sequence of times $t_{v_{\underline{L}}}  > \ldots > t_{v_L}$ such that for all $\underline{L} \le \ell \le L$, the distribution if one runs the forward process for time $t_{v_\ell}$ starting from the sub-mixture given by the node $v_L$ and the reverse process for time $t_{v_\ell}$, the result will be close in TV to the sub-mixture given by node $v_\ell$.
\end{theorem}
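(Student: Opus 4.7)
The plan is to invoke Theorem~\ref{thm:gaussian_informal} independently for each node $v_\ell$ on the root-to-leaf path, taking $\Sinit = \{i\}$ (the singleton associated with the leaf $v_L$) and $\Send$ to be the cluster of components under $v_\ell$. Each such application yields a critical window $[\Tlower^{(\ell)}, \Tupper^{(\ell)}]$ such that any $t_{v_\ell}$ in this window produces a reverse-process output close in TV to the sub-mixture at $v_\ell$. These are standalone claims at different times, so no approximation errors compound across levels; the remaining task is to exhibit valid $t_{v_\ell}$ that form a strictly decreasing sequence in $\ell$.

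Next I would translate the endpoint estimates of Theorem~\ref{thm:gaussian_informal} into quantitative statements about the mixture tree. Up to lower-order terms, $\Tlower^{(\ell)}$ is at most $\ln d_\ell^{\mathrm{intra}}$, where $d_\ell^{\mathrm{intra}}$ is the largest distance from $\mu_i$ to any other mean in the sub-cluster at $v_\ell$, and $\Tupper^{(\ell)}$ is at least $\ln d_\ell^{\mathrm{inter}}$, where $d_\ell^{\mathrm{inter}}$ is the smallest distance from a mean in the sub-cluster at $v_\ell$ to a mean outside it. By the well-separated hierarchical clustering property built into the mixture tree (Definition~\ref{def:tree}), $d_\ell^{\mathrm{inter}}$ is substantially larger than $d_\ell^{\mathrm{intra}}$ at each level, so each window is non-empty; moreover, both quantities shrink as $\ell$ grows, because the sub-cluster at $v_\ell$ tightens around $\mu_i$ while the complement absorbs the former sibling sub-clusters as we descend the tree.

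From here I would pick $t_{v_\ell}$ at, say, the geometric midpoint of the $\ell$-th window and verify that consecutive choices satisfy $t_{v_\ell} > t_{v_{\ell+1}}$. The key inequality is $\Tupper^{(\ell)} > \Tlower^{(\ell+1)}$: the sibling clusters of $v_{\ell+1}$ inside $v_\ell$ must be well-separated from $v_{\ell+1}$ relative to the diameter of $v_{\ell+1}$, which is precisely the hierarchical separation condition between consecutive levels of a mixture tree. The index $\underline{L}$ is then taken to be the shallowest level for which $v_\ell$ is strictly contained in $[K]$ and has a complement at comparable separation from $v_\ell$; for $\ell$ any closer to the root the target sub-mixture is essentially all of $p$ and $\Tupper^{(\ell)}$ becomes vacuous.

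The main obstacle will be this chaining step, namely converting the per-level separation assumption in the definition of a mixture tree into a quantitative lower bound on $\Tupper^{(\ell)} - \Tlower^{(\ell+1)}$ large enough to admit a strictly decreasing sequence of $t_{v_\ell}$. All other pieces \--- the per-level application of Theorem~\ref{thm:gaussian_informal}, the identification of the endpoint expressions with intra- and inter-cluster distances, and the choice of $\underline{L}$ \--- reduce to routine bookkeeping once the quantitative separation parameters of the tree are in hand.
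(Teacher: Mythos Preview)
Your plan matches the paper's proof: invoke the identity-covariance Gaussian bound (Example~\ref{ex:identity_gaussians}) at each node of the path with $\Sinit=\{i\}$ and $\Send$ the cluster at $v_\ell$, then chain the resulting windows using the explicit distance scalings built into Definition~\ref{def:tree}.

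Two small corrections. First, the inequality you label as ``key,'' $\Tupper^{(\ell)} > \Tlower^{(\ell+1)}$, is automatic and does not by itself order the midpoints; your verbal gloss (separation of $v_{\ell+1}$ from its siblings versus the diameter of $v_{\ell+1}$) is really the non-emptiness condition $\Tupper^{(\ell+1)} > \Tlower^{(\ell+1)}$. The paper instead checks the pair $\Tupper^{(\ell)}>\Tlower^{(\ell)}$ together with the \emph{disjointness} $\Tlower^{(\ell)}>\Tupper^{(\ell+1)}$, after which any in-window choices are automatically strictly decreasing in $\ell$ with no midpoint bookkeeping needed. Second, $\underline{L}>0$ is not only because $v_0=[K]$ has empty complement: under the tree's distance scaling the gap $\Tupper^{(\ell)}-\Tlower^{(\ell)}$ behaves like $(2(L-\ell)-1)\ln 2$ minus lower-order $\ln(1/\epsilon)$ and $\ln\ln$ terms, so it shrinks as one ascends and becomes negative before the root; this is why the paper only claims the chain for levels sufficiently close to the leaf.
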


\noindent This formalizes the intuition that to sample from distributions with this hierarchical structure, the sampler makes a discrete sequence of choices on the features to include. This discrete sequence of choices corresponds to a whittling away of other components from the score until the sampler reaches the end component. Through adding larger scales of noise, contributions to the score from increasingly distant classes are incorporated into the reverse process.

\section{Related work}
\label{sec:related}

\paragraph{Comparison to~\cite{georgiev2023journey}.} \citet{georgiev2023journey} empirically studied a variant of critical windows in the context of data attribution. For a generated image $x_0$ given by some trajectory $\{x_t\}_{t\in [0,T]}$ of the reverse process, they consider rerunning the reverse process starting at some intermediate point $x_t$ in the trajectory (they refer to this as sampling from the ``conditional distribution,'' which they denote by $p(\cdot\mid x_t)$). They then compute the probability that the images sampled in this fashion share a given feature with $x_0$ and identify critical times $T^{\rm cond}_{\rm lower}<T^{\rm cond}_{\rm upper}$ such that sampling from $p(\cdot |x_{T^{\rm cond}_{\rm lower}})$ preserves the given feature in the original image while sampling from $p(\cdot |x_{T^{\rm cond}_{\rm upper}})$ does not. Our definition is different: instead of rerunning the reverse process, we run the forward process for time $t$ starting from $x_0$ to produce $x^{\rm cond}_t$ and then run the reverse process from $x^{\rm cond}_t$ to sample from $p(\cdot|x^{\rm cond}_t)$. 
Note that in our definition, even after the initial generation $\{x_t\}$ is fixed, there is still randomness in $x^{\rm cond}_t$. This means that unlike the setting in~\citet{georgiev2023journey}, our setup is meaningful even if the reverse process is deterministic, e.g. based on an ODE. Additionally, our setup is arguably more flexible for data attribution as it does not require knowledge of the \emph{trajectory} $\{x_t\}_{t\in [0,T]}$ that generated $x_0$. In general, we expect that our critical window thresholds are less than \citet{georgiev2023journey}'s thresholds because adding noise to the state at intermediate times could also change the features. We view our theoretical contributions as complementary to their empirical work in rigorously understanding qualitatively similar phenomena and also use CLIP for our own experiments.

\paragraph{Comparison to~\cite{raya2023spontaneous}.} To our knowledge, the most relevant prior theoretical work is that of \citet{raya2023spontaneous} (we also discuss two recent works \citep{sclocchi2024phase,biroli2024dynamical} concurrent with ours below). Along the reverse process, they consider a \emph{fixed-point path} over which the reverse process incurs zero drift and argue that diffusion models exhibit a phase transition at the point where the spectrum of the Hessian of the potential bifurcates into positive and negative parts. They then give an end-to-end asymptotic analysis of this for the special case of a discrete distribution supported on two points, and some partial results for more general discrete distributions. In contrast, we give end-to-end guarantees for a more general family of high-dimensional distributions, but under a different perspective than the one of stability of fixed points considered by~\citet{raya2023spontaneous}. We find it quite interesting that one can understand critical windows through such different theoretical lenses.

On the empirical side, they also conducted various real-data experiments; we give a detailed comparison between our experimental setup and theirs later in this section.

\paragraph{Concurrent works.} Next, we discuss the relation between our work and concurrent works by~\citet{sclocchi2024phase} and~\citet{biroli2024dynamical} that also studied the critical window phenomenon. As with the results of~\citet{raya2023spontaneous}, we view all of these works as offering complementary mathematical insights into critical windows; here we highlight some key differences.

\citet{sclocchi2024phase} considered the same setting of running the forward process for some time $t$ starting from a sample and then running the reverse process, which they refer to as ``forward-backward experiments.''  Instead of mixture models, they consider a very different data model with hierarchical structure, the \emph{random hierarchy model}~\citep{petrini2023deep}, which is a \emph{discrete} distribution over one-hot embeddings of strings in some multi-level context-free grammar. Mathematically, this amounts to a distribution over \emph{standard basis vectors} where the probabilities encode some hierarchical structure.

They observe numerically that if one runs the reverse diffusion with exact score estimation (using belief propagation), there appears to be a critical window. They then give accurate but non-rigorous statistical physics-based predictions for the location of this window by passing to a certain mean-field approximation. In contrast, we studied \emph{mixture models}, where the notion of hierarchical structure is encoded geometrically into the locations of the components. Additionally, in our setting, we provide fully rigorous bounds on the locations of critical windows.


The theoretical setting of~\citet{biroli2024dynamical} is closer to that of the present work. They study a mixture of \emph{two} spherical Gaussians under the ``conditional sampling'' setting of~\cite{georgiev2023journey} rather than our noising and denoising framework. Relevant to this work, they identify a phase transition that they call ``speciation'' which roughly corresponds to the critical time in the reverse process at which the trajectory starts specializing to one of the two components. Using the same kind of Landau-type perturbative calculation used to predict second-order phase transitions in statistical physics, the authors give highly precise but non-rigorous asymptotic predictions for the time at which speciation occurs. In contrast, our work studies a more general data model but provides less precise but non-asymptotic and rigorous estimates for the critical window. Interestingly,~\citet{biroli2024dynamical} also suggest a useful heuristic based on the time at which the noise obscures the principal component of the data distribution and validate this heuristic on real data (see below). In our mixture model setting, this is closely related to the separation between components and thus suggests ties from their theory and numerics to ours.

These works also conduct experiments on real data. Below, we elaborate on how our experiments differ from theirs.

\paragraph{Critical window experiments on real data.} Numerous studies have investigated the critical window phenomenon in diffusion models \citep{ho2020denoising,raya2023spontaneous,biroli2024dynamical,sclocchi2024phase}. These papers demonstrate a dramatic jump in the similarity of some feature within a narrow time range, either under our noising and denoising framework \citep{sclocchi2024phase} or the ''conditional sampling'' framework \citep{georgiev2023journey,raya2023spontaneous,biroli2024dynamical} described above. The main distinguishing factors between these different experiments are the diffusion models tested, the varying definitions of what a ``feature'' entails, and the method to determine whether a given image has a certain feature. \citet{raya2023spontaneous,georgiev2023journey,sclocchi2024phase,biroli2024dynamical} identify the critical windows of class membership for unconditional diffusion models operating in pixel space that were trained on small, hand-labeled datasets like MNIST or CIFAR-10. \citet{biroli2024dynamical} were able to obtain precise predictions for the critical times for a simple diffusion model trained on two classes. \citet{raya2023spontaneous,georgiev2023journey,biroli2024dynamical} trained supervised classifiers to sort image generations into different categories, whereas \citet{sclocchi2024phase} employed the hidden layer activations of an ImageNet classifier to define high- and low-level features of an image and computed the cosine similarity of the embeddings of the base and new image. Among all these empirical results, our experimental setup most closely mirrors Figure B.10 of ~\citet{georgiev2023journey}; we both experiment with StableDiffusion 2.1, manually inspect the image for potential features, and use CLIP instead of a supervised classifier to label images into different categories. That said, recall from the discussion at the beginning of this section that this paper and~\citet{georgiev2023journey}'s experiments examine different critical window frameworks (noise and denoise vs. conditional sampling).

\paragraph{Theory for diffusion models.} Recently several works have proven convergence guarantees for diffusion models~\citep{debetal2021scorebased,BloMroRak22genmodel,chen2022improved, DeB22diffusion, leelutan22sgmpoly, liu2022let, Pid22sgm, WibYan22sgm, DBLP:conf/iclr/ChenC0LSZ23, chen2023restoration, lee2023convergence,li2023towards,benton2023error,chen2023probability,li2024towards}. Roughly speaking, these results show that diffusion models can sample from essentially any distribution over $\R^d$, assuming access to a sufficiently accurate estimate for the score function. Our work is orthogonal to these results as they focus on showing that diffusion models can be used to sample. In contrast, we take for granted that we have access to a diffusion model that can sample; our focus is on specific properties of the sampling process. That said, there are isolated technical overlaps, for instance the use of path-based analysis via Girsanov's theorem, similar to~\cite{DBLP:conf/iclr/ChenC0LSZ23}.

\paragraph{Mixtures of Gaussians and score-based methods.} 

Gaussian mixtures have served as a fruitful testbed for the theory of score-based methods. In~\cite{shah2023learning}, the authors analyzed a gradient-based algorithm for learning the score function for a mixture of spherical Gaussians from samples and connected the training dynamics to existing algorithms for Gaussian mixture learning like EM. In~\cite{cui2023analysis}, the authors gave a precise analysis of the training dynamics and sampling behavior for mixtures of two well-separated Gaussians using tools from statistical physics. Other works have also studied related methods like Langevin Monte Carlo and tempered variants~\cite{koehler2023sampling, lee2018beyond} for learning/sampling from Gaussian mixtures. We do not study the learnability of Gaussian mixtures. Instead, we assume access to the true score and try to understand specific properties of the reverse process.

\section{Technical preliminaries}
\label{sec:prelims}

\subsection{Probability and diffusion basics}

\paragraph{Probability notation.}
We consider the following divergences and metrics for probability measures. Given distributions $P,Q$, we use $\TV(P,Q) \triangleq \frac{1}{2}\int |dP-dQ| d\mu$ to denote the \emph{total variation distance}, $\LC(P,Q) \triangleq \frac{1}{2}\int \frac{(dP-dQ)^2}{d(P+Q)} d\mu$ to denote the \textit{Le Cam distance}, $\mathrm{H}^2(P,Q) \triangleq \int(\sqrt{dP}-\sqrt{dQ})^2 d\mu$ to denote the squared \emph{Hellinger distance}, and $\Wtwo(P,Q) \triangleq \sqrt{\inf_{\gamma \sim \Gamma(P,Q)}\mathbb{E}_{(x,y)\sim \gamma}\|x-y\|^2}$, where $\Gamma(P,Q)$ is the set of all couplings between $P,Q$, to denote the \textit{Wasserstein-$2$ distance}. We use the following basic relation among these quantities, a proof of which we include in Appendix~\ref{app:ratio_inequality} for completeness.
\begin{restatable}{lemma}{ratioinequalitylemma}\label{ratio_inequality}
For probability measures $P,Q$, 
$$\mathbb{E}_{x \sim P} \Bigl[\frac{\D Q}{\D P+\D Q} \Bigr]
    = \frac{1}{2}(1-\LC(P,Q)) 
    \le \frac{1}{2}(1-\frac{1}{2}\mathrm{H}^2(P,Q)) 
    \le \frac{1}{2}\sqrt{1-\TV^2(P,Q)}.$$
\end{restatable}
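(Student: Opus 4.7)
The plan is to establish the three claims in sequence, working throughout with densities $f = \D P/\D\mu$ and $g = \D Q/\D\mu$ with respect to a common dominating measure $\mu$.

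For the equality, I would start by applying the algebraic identity $(f-g)^2 = (f+g)^2 - 4fg$ inside the definition of the Le Cam distance to obtain
$$\LC(P,Q) = \frac{1}{2}\int\left[(f+g) - \frac{4fg}{f+g}\right]d\mu = 1 - 2\int \frac{fg}{f+g}\,d\mu.$$
Since $\int \frac{fg}{f+g}\,d\mu = \int \frac{g}{f+g}\,\D P = \mathbb{E}_{x\sim P}[\D Q/(\D P + \D Q)]$, rearranging immediately yields $\mathbb{E}_{x\sim P}[\D Q/(\D P + \D Q)] = \tfrac{1}{2}(1 - \LC(P,Q))$.

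For the first inequality, it suffices to show $\tfrac{1}{2}\mathrm{H}^2(P,Q) \leq \LC(P,Q)$. Expanding $\mathrm{H}^2 = 2 - 2\int \sqrt{fg}\,d\mu$ and using the identity just derived for $\LC$, this reduces to the pointwise inequality $\tfrac{2fg}{f+g} \leq \sqrt{fg}$ on the support, i.e., the harmonic mean of $f$ and $g$ is dominated by their geometric mean, which is an instance of AM--GM.

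For the second inequality, squaring both sides (legal since $1 - \tfrac{1}{2}\mathrm{H}^2 \ge 0$) reduces the claim to the classical bound $\TV(P,Q) \leq \mathrm{H}(P,Q)\sqrt{1 - \mathrm{H}^2(P,Q)/4}$. To prove this, I would factor $|f - g| = |\sqrt{f} - \sqrt{g}|(\sqrt{f} + \sqrt{g})$ and apply Cauchy--Schwarz to obtain $2\TV \leq \mathrm{H} \cdot \bigl(\int (\sqrt{f} + \sqrt{g})^2 d\mu\bigr)^{1/2}$; expanding the second factor as $2 + 2\int\sqrt{fg}\,d\mu = 4 - \mathrm{H}^2$ gives the claim.

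None of the three steps presents a serious obstacle; each reduces to an elementary algebraic identity, AM--GM, or a single Cauchy--Schwarz. The only subtlety worth flagging is measure-theoretic hygiene on the set $\{f + g = 0\}$, where the integrand $fg/(f+g)$ must be interpreted as $0$; this is harmless since $P + Q$ assigns that set measure zero. The step most likely to feel non-obvious on first reading is the choice of factorization $|f-g| = |\sqrt{f}-\sqrt{g}|(\sqrt{f}+\sqrt{g})$ that unlocks the Cauchy--Schwarz bound, so I would present that step with the clearest motivation.
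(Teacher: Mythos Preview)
Your proposal is correct and follows essentially the same route as the paper. The equality is derived via the same polarization identity (you write $(f-g)^2 = (f+g)^2 - 4fg$, the paper writes $dP\,dQ = \tfrac{1}{4}((dP+dQ)^2 - (dP-dQ)^2)$, which is the same thing), and the two inequalities are the standard comparisons $\LC \ge \tfrac{1}{2}\mathrm{H}^2$ and $4\TV^2 \le \mathrm{H}^2(4-\mathrm{H}^2)$ that the paper simply cites from a reference; you have supplied self-contained proofs of both via harmonic--geometric mean and Cauchy--Schwarz on the factorization $|f-g| = |\sqrt{f}-\sqrt{g}|(\sqrt{f}+\sqrt{g})$, which is the textbook derivation.
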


\noindent Let $\subG_d(\sigma^2)$ denote the class of sub-Gaussian random vectors in $\R^d$ with variance proxy $\sigma^2$. Let $\SLC(\beta,d)$ denote the set of $1/\beta$-strongly log-concave distributions over $\R^d$.

\paragraph{Diffusion model basics.} Let $q$ be a distribution over $\R^d$ with smooth density. In diffusion models, there is a \emph{forward process} which progressively transforms samples from $q$ into pure noise, and a reverse process which undoes this process. For the former, we consider the Ornstein-Uhlenbeck process for simplicity.
This is a stochastic process $(\forward_t)_{t\ge 0}$ given by the stochastic differential equation (SDE)
\begin{equation}
    \D\forward_t = -\forward_t\,\D t + \sqrt{2}\D B_t\,, \qquad \forward_0 \sim q\,, \label{eq:forward}
\end{equation}
where $(B_t)_{t\ge 0}$ is a standard Brownian motion. Given $t\ge 0$, let $q_t \triangleq \mathrm{law}(\forward_t)$, so as $t\to \infty$, $q_t$ converges exponentially quickly to the standard Gaussian distribution $\gamma^d$.

Let $T\ge 0$ denote a choice of terminal time for the forward process. For the reverse process, denoted by $(\reverse_t)_{t\in[0,T]}$, we consider the standard reverse SDE given by
\begin{equation}
    \D\reverse_t = \{\reverse_t + 2\nabla\ln q_{T-t}(\reverse_t)\}\,\D t + \sqrt{2}\,\D B_t \label{eq:reverse}
\end{equation}
for $\reverse_0 \sim q_T$, where here $(B_t)_{t\ge 0}$ is the reversed Brownian motion. The most important property of the reverse process is that $q_{T-t}$ is precisely the law of $\reverse_t$. 

\paragraph{Girsanov's theorem.}  The following is implicit in an approximation argument due to~\cite{DBLP:conf/iclr/ChenC0LSZ23} which is applied in conjunction with Girsanov's theorem. This lets us compare the path measures of the solutions to two SDEs with the same initialization:

\begin{theorem}[Section 5.2 of~\cite{DBLP:conf/iclr/ChenC0LSZ23}]\label{thm:girsanov}
    Let $(Y_t)_{t\in[0,T]}$ and $(Y'_t)_{t\in[0,T]}$ denote the solutions to
    \begin{align*}
        \D Y_t &= b_t(Y_t) \, \D t + \sqrt{2}\D B_t\,, \qquad Y_0 \sim q \\
        \D Y'_t &= b'_t(Y'_t)\, \D t + \sqrt{2}\D B_t\,, \qquad Y'_0 \sim q\,.
    \end{align*}
    Let $q$ and $q'$ denote the laws of $Y_T$ and $Y'_T$ respectively. If $b_t, b'_t$ satisfy that $\int^T_0 \mathbb{E}_Q\,\norm{b_t(Y_t) - b'_t(Y_t)}^2\, \D t < \infty$, then $\KL(q\|q') \le \int^T_0 \mathbb{E}_Q\,\norm{b_t(Y_t) - b'_t(Y_t)}^2\,\D t$.
\end{theorem}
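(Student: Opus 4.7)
The plan is to apply Girsanov's theorem to compare the two path measures $Q = \mathrm{law}((Y_t)_{t\in[0,T]})$ and $Q' = \mathrm{law}((Y'_t)_{t\in[0,T]})$ on $C([0,T];\R^d)$, and then invoke the data processing inequality to descend from the KL between path measures to the KL between their terminal marginals $q$ and $q'$. Since the endpoints are marginals obtained by the (measurable) evaluation map $\omega \mapsto \omega_T$, one has $\KL(q\|q') \le \KL(Q\|Q')$, so it suffices to bound the latter.

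To bound $\KL(Q\|Q')$, I would introduce the process $M_t = \int_0^t \tfrac{1}{\sqrt{2}}\bigl(b_s(Y_s) - b'_s(Y_s)\bigr) \cdot \D B_s$ and consider the Dol\'eans-Dade exponential $\mathcal{E}(M)_T = \exp\bigl(M_T - \tfrac{1}{2}\langle M\rangle_T\bigr)$, where $\langle M\rangle_T = \tfrac{1}{2}\int_0^T \|b_s(Y_s) - b'_s(Y_s)\|^2\,\D s$. Assuming the usual regularity, Girsanov's theorem says that under the measure $\widetilde{Q}$ with $\tfrac{\D \widetilde{Q}}{\D Q} = \mathcal{E}(M)_T^{-1}$, the process $\widetilde{B}_t = B_t + \int_0^t \tfrac{1}{\sqrt{2}}(b_s - b'_s)(Y_s)\,\D s$ is a Brownian motion, and rewriting $\D Y_t = b'_t(Y_t)\,\D t + \sqrt{2}\,\D \widetilde{B}_t$ shows that $\widetilde{Q} = Q'$. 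Taking logarithms and expectations, the stochastic integral term in $M_T$ is a mean-zero martingale contribution, so
\begin{equation*}
\KL(Q\|Q') \;=\; \mathbb{E}_Q\bigl[\tfrac{1}{2}\langle M\rangle_T\bigr] \;=\; \tfrac{1}{4}\int_0^T \mathbb{E}_Q\,\norm{b_t(Y_t) - b'_t(Y_t)}^2\,\D t,
\end{equation*}
which is dominated by the right-hand side in the theorem statement.

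The main obstacle is that the hypothesis of the theorem only assumes the $L^2$ integrability $\int_0^T \mathbb{E}_Q\,\norm{b_t(Y_t) - b'_t(Y_t)}^2\,\D t < \infty$, which is weaker than Novikov's condition $\mathbb{E}_Q\exp\bigl(\tfrac{1}{2}\langle M\rangle_T\bigr) < \infty$ that is usually invoked to make $\mathcal{E}(M)$ a true (rather than merely local) martingale. To handle this, I would follow the approximation argument alluded to in the theorem attribution: introduce a sequence of stopping times $\tau_n = \inf\{t : \int_0^t \|b_s(Y_s) - b'_s(Y_s)\|^2\,\D s \ge n\} \wedge T$, apply Girsanov on each stopped interval $[0,\tau_n]$ where the drift difference is bounded in $L^2$ so Novikov holds, derive the KL bound for the stopped path measures, and pass to the limit $n \to \infty$ using monotone convergence on the integrand together with lower semicontinuity of KL divergence. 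The $L^2$ hypothesis ensures $\tau_n \to T$ almost surely, yielding the stated inequality in the limit.
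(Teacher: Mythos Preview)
The paper does not prove this theorem; it is stated as a known result and attributed to Section~5.2 of \cite{DBLP:conf/iclr/ChenC0LSZ23}, with the preceding sentence noting only that it ``is implicit in an approximation argument\ldots applied in conjunction with Girsanov's theorem.'' There is therefore no proof in the paper to compare against.

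That said, your sketch is correct and is exactly the argument the citation is pointing to: Girsanov's theorem to identify the Radon--Nikodym derivative between path measures, data processing to pass from path-KL to terminal-marginal KL, and a localization/stopping-time argument to circumvent the lack of Novikov's condition under the mere $L^2$ hypothesis. Your computation also correctly recovers the constant $\tfrac14$ rather than $1$ in front of the integral (coming from the $\sqrt{2}$ diffusion coefficient), so the inequality in the theorem as stated is in fact loose by a factor of four; the paper evidently does not track this constant.
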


\subsection{Main framework: noising and denoising mixtures}
\label{sec:noisedenoise}

We will consider data distributions $p$ given by \emph{mixture models}. For component distributions $p^1,\ldots,p^K$ over $\R^d$ and mixing weights $w_1,\ldots,w_K$ summing to $1$, let $p \triangleq \sum_i w_i p^i$. Let $\mu_i$ denote the mean of $p^i$. For any nonempty $S\subset[K]$, we define the \emph{sub-mixture} $p^S$ by $p^S \triangleq \sum_{i\in S} \frac{w_i}{\sum_{j\in S} w_j} p^i$.
Let $(\forward^S_t)_{t\in[0,T]}$ denote the forward process given by running Eq.~\eqref{eq:forward} with $q = p^S$, let $p^S_t$ denote the law of $\forward^S_t$, and let $(\revsub{S}_t)$ denote the reverse process given by running Eq.~\eqref{eq:reverse} with $q = p^S$. When $S = \{i\}$, we drop the braces in the superscripts. Given intermediate time $\wh{T} \in [0,T]$, we denote the path measure for $(\revsub{S}_t)_{t\in[0,\wh{T}]}$ by $\pathrev{S}_{\wh{T}} \in\mathcal{C}([0,\wh{T}], \R^d)$.

\paragraph{The targeted reverse process.} The central object of study in this work is a modification of the reverse process for the overall mixture $p$ in which the initialization is changed from $p_T$ to an \emph{intermediate point} in the forward process \emph{for a sub-mixture}. Concretely, given $\wh{T}\in[0,T]$ and nonempty $S\subset[K]$, define the modified reverse process $(\modrev{S}{\wh{T}}{t})_{t\in[0,\wh{T}]}$ to be given by running the reverse SDE in Eq.~\eqref{eq:reverse} with $q = p$, with terminal time $\wh{T}$ instead of $T$, and initialized at $p^S_{\wh{T}}$ instead of $p_{\wh{T}}$. We denote the law of $\modrev{S}{\wh{T}}{t}$ by $\modrevlaw{S}{\wh{T}}{t}$ and the path measure for $(\modrev{S}{\wh{T}}{t})_{t\in[0,\wh{T}]}$ by $\modrevpath{S}{\wh{T}}\in\mathcal{C}([0,\wh{T}], \R^d)$. When $t = T$, we omit the subscript in the former.

 \begin{enumerate}[noitemsep,topsep=0em,leftmargin=*]
    \item Draw a sample $X$ from the sub-mixture $p^S$
    \item Run forward process for time $\wh{T}$ from $X$ to produce $X'$
    \item From terminal time $\wh{T}$, run the reverse process starting from $X'$ for time $t$ to produce $\modrev{S}{\wh{T}}{t}$
\end{enumerate}

Because this process reverses the forward process conditioned on a particular subset $S$ of the original mixture components, we refer to $(\modrev{S}{\wh{T}}{t})_{t\in[0,\wh{T}]}$ as the \emph{$S$-targeted reverse process} from noise level $\wh{T}$. We caution that the $S$-targeted reverse process should not be confused with the standard reverse process where the data distribution is taken to be $p^S$, as the score function being used in the targeted process is that of the full mixture $p$ rather than that of $p^S$.

\paragraph{Mixture model parameters.}

We consider the following quantities for a given mixture model, which characterize levels of separation within and across subsets of the mixture. Given $S,S'\subset[K]$, define
\begin{align*}
    \overline{R}\triangleq \max_{i \in [K]}\|\mu_i\| \qquad w(S,S')\triangleq \max_{i\in S,j \in S'}\|\mu_i-\mu_j\|\\
    \Delta(S)\triangleq \min_{\ell \in S,j \in [K]-S} \|\mu_\ell -\mu_j \| \qquad \overline{W} \triangleq \max_{i,j\in[K]} \frac{w_i}{w_j}\,.
\end{align*}
Lastly, we characterize the level of imbalance across sub-populations via $\overline{W} \triangleq \max_{i,j\in[K]} \frac{w_i}{w_j}$.

\section{Master theorem for critical times}
\label{sec:master}

Recall that $\Sinit\subset \Send\subset[K]$ denote the two sub-mixtures we are interested in. In the notation of Section~\ref{sec:noisedenoise}, we wish to establish upper and lower bounds on the time $\wh{T}$ at which
\begin{equation}
    \TV(\modrevlaw{\Sinit}{\wh{T}}{}, p^{\Send}) \label{eq:targetclose}
\end{equation}
becomes small.

Given error parameter $0 < \epsilon < 1$, define
\begin{align}
T_{\mathrm{lower}}(\epsilon) \triangleq \inf \{&t \in [0,T]: \TV(p_{t}^{S_{\mathrm{init}}  },p_{t}^{\Send }) \leq \epsilon\} \label{eq:Tlower}  \\
T_{\mathrm{upper}}(\epsilon) \triangleq \sup \{&t \in [0,T]: \TV(p^i_t,p^j_t) \geq 1-\epsilon^2/2 \nonumber \\
&\forall i \in \Send, j \in [K]-\Send\}\,. \label{eq:Tupper}
\end{align}
When $\epsilon$ is clear from context, we refer to these times as $T_{\mathrm{lower}}$ and $T_{\mathrm{upper}}$. Based on the intuition above, we expect that Eq.~\eqref{eq:targetclose} is small provided $\wh{T} \ge T_{\mathrm{lower}}$ and $\wh{T} \le T_{\mathrm{upper}}$. In this section, we prove that this is indeed the case for any $p$ given by a mixture of strongly log-concave distributions (see Remark~\ref{remark:slc} for discussion on the assumption of strong log-concavity of components).

\begin{assumption}[Strong log-concavity]\label{slc_components}
    For some $\Psi^2 \geq 1$, $p^i \in \SLC(\Psi^2,d)$.
\end{assumption}
\begin{assumption}[Smooth components]\label{lipschitz_score}
    For some $L > 0$ and for all $t \geq 0$, the score $\nabla \ln p^i_t$ is $L$-Lipschitz.
\end{assumption}
\begin{assumption}[Moment bound] \label{higher_moments} 
    For some $M \ge 1$ and for all $i \in [K]$ and $t\in [0,T]$, $\mathbb{E}\,\|\forward^i_t\|^4 \leq M$.
\end{assumption}
\noindent Finally, our bounds will depend on how large the score for any component is over samples from any other component:
\begin{assumption}[Score bound]\label{score_bound_assumption}
    For some $\overline{M} \ge 0$ and for all $i,j \in [K]$, $t \in [0,T]$, $\mathbb{E}_{X \sim p_t^j}\|\nabla \ln p^i_t(X)\|^4 \leq \overline{M}$.
\end{assumption}
\noindent We compute $\overline{M}$ for various examples in Section~\ref{sec:examples}, but for now one can safely think of $\overline{M}$ as scaling polynomially in the dimension and in the parameter $\overline{R}$.

\begin{remark}\label{remark:slc}
    It turns out that the only place where we need strong log-concavity of the components in the mixture is in the rather technical estimate of Lemma~\ref{diff_score_expo}, which is also much stronger than what is necessary for Theorem~\ref{masters_theorem}. It suffices to show the LHS of Lemma~\ref{diff_score_expo} integrates to a finite value. While we only prove the bound in that Lemma rigorously for strongly log-concave components, we expect it to hold even for more general families of non-log-concave distributions.  
\end{remark}

\subsection{Main result and proof sketch} We are now ready to state our main bound for the critical time $\wh{T}$ at which Eq.~\eqref{eq:targetclose} becomes small.

\begin{theorem}\label{masters_theorem} 
Let $S_{\mathrm{init}}\subset \Send \subset [K]$. For $\epsilon > 0$, if $\wh{T} \ge T_{\mathrm{lower}}(\epsilon)$ and $\wh{T} \le T_{\mathrm{upper}}(\epsilon)$, then  
\begin{equation}
\TV(\modrevlaw{\Sinit}{\wh{T}}{}, p^{\Send}) \lesssim \epsilon \sqrt{\overline{W}}K^{2}\bigl(\overline{R}^2+M^2+\sqrt{M}\Psi^4+\sqrt{\overline{M}}\bigr)\,.
\end{equation}
\end{theorem}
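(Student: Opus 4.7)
The natural plan is to decompose $\TV(\modrevlaw{\Sinit}{\wh{T}}{}, p^{\Send})$ by the triangle inequality through an intermediate measure that interpolates in two stages. Let $\nu$ denote the law at time $\wh T$ of the reverse SDE driven by the \emph{full-mixture} score $\nabla\ln p_t$ but \emph{initialized at} $p^{\Send}_{\wh T}$ rather than $p^{\Sinit}_{\wh T}$. Then
\[
  \TV(\modrevlaw{\Sinit}{\wh{T}}{}, p^{\Send})
  \le \TV(\modrevlaw{\Sinit}{\wh{T}}{}, \nu) + \TV(\nu, p^{\Send}),
\]
where the first term isolates the cost of changing the initialization (to be controlled by $\wh T\ge T_{\mathrm{lower}}$) and the second isolates the cost of replacing the drift by the sub-mixture score $\nabla\ln p^{\Send}_t$ (to be controlled by $\wh T\le T_{\mathrm{upper}}$).

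For the first term, both $\modrevlaw{\Sinit}{\wh{T}}{}$ and $\nu$ are push-forwards of $p^{\Sinit}_{\wh T}$ and $p^{\Send}_{\wh T}$ under the same Markov semigroup (the reverse SDE with drift $\nabla\ln p_t$), so the data processing inequality for $\TV$ gives $\TV(\modrevlaw{\Sinit}{\wh{T}}{}, \nu) \le \TV(p^{\Sinit}_{\wh T}, p^{\Send}_{\wh T}) \le \epsilon$ directly from the definition of $T_{\mathrm{lower}}(\epsilon)$ and the hypothesis $\wh T\ge T_{\mathrm{lower}}(\epsilon)$.

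For the second term, the reverse SDE with drift $\nabla\ln p^{\Send}_t$ started from $p^{\Send}_{\wh T}$ produces exactly $p^{\Send}$, so Girsanov's theorem (Theorem~\ref{thm:girsanov}) yields
\[
  \KL(p^{\Send}\,\|\,\nu)
  \le \int_0^{\wh T}\mathbb{E}_{x\sim p^{\Send}_{\wh T-s}}
     \bigl\|\nabla\ln p_{\wh T-s}(x) - \nabla\ln p^{\Send}_{\wh T-s}(x)\bigr\|^2\,\D s.
\]
Writing $q_t \triangleq \sum_{i\in\Send} w_i p^i_t$ and $q'_t \triangleq \sum_{j\notin\Send} w_j p^j_t$ so that $p_t = q_t+q'_t$, the key algebraic identity is $\nabla\ln p_t - \nabla\ln p^{\Send}_t = (q'_t/p_t)(\nabla\ln q'_t - \nabla\ln q_t)$, which factorizes the drift difference into a posterior-weight piece $q'_t/p_t\in[0,1]$ (the posterior probability of not being in $\Send$) and a score-difference piece. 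I then split the integrand by Cauchy--Schwarz as $\sqrt{\mathbb{E}(q'_t/p_t)^4}\cdot \sqrt{\mathbb{E}\|\nabla\ln q'_t-\nabla\ln q_t\|^4}$. The first factor is controlled by $T_{\mathrm{upper}}$: for $i\in\Send, j\notin\Send$, Lemma~\ref{ratio_inequality} applied to $(p^i_t, p^j_t)$ together with $\TV(p^i_t,p^j_t)\ge 1-\epsilon^2/2$ bounds each cross-component overlap $\mathbb{E}_{x\sim p^i_t}[w_jp^j_t/(w_ip^i_t+w_jp^j_t)]\lesssim \overline W\,\epsilon$, and summing over the at most $K$ out-of-$\Send$ components while using $(q'_t/p_t)^4\le q'_t/p_t$ controls $\sqrt{\mathbb E(q'_t/p_t)^4}$ by $\sqrt{K\overline W}\,\epsilon$ up to constants. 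The second factor is handled by Jensen's inequality: writing each of $\nabla\ln q_t,\nabla\ln q'_t$ as a posterior-weighted average of $\nabla\ln p^i_t$ reduces their fourth moments to averages of $\|\nabla\ln p^i_t\|^4$, which are bounded under any $p^j_t$ by Assumption~\ref{score_bound_assumption} (contributing the $\sqrt{\overline M}$ term); the $\overline R^2+M^2+\sqrt M\,\Psi^4$ contributions arise when changing sampling measure from $p^{\Send}_t$ to individual $p^j_t$ for $j\notin\Send$ via strong log-concavity of the components (through a Lemma~\ref{diff_score_expo}-type bound relating $\|\nabla\ln p^j_t\|$ to $\|x\|$ and $\Psi$) combined with the moment bound of Assumption~\ref{higher_moments}. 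Substituting into the Girsanov integral and invoking Pinsker recovers the claimed second leg of the triangle inequality.

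The main technical obstacle is engineering the Cauchy--Schwarz split so that the $\epsilon$-dependence propagates linearly through both legs rather than degrading under the square root in Pinsker; this hinges on the sharper Le Cam-type version of the overlap bound in Lemma~\ref{ratio_inequality} (which is exactly calibrated to the $1-\epsilon^2/2$ threshold in the definition of $T_{\mathrm{upper}}$) and crucially on Assumption~\ref{score_bound_assumption} providing a \emph{cross}-component score moment bound so that $\|\nabla\ln q'_t\|^4$ can still be controlled when evaluated under samples from the disjoint sub-mixture $p^{\Send}_t$. Once those two ingredients are in hand, the rest is careful bookkeeping of the constants $K,\overline W,M,\Psi,\overline R,\overline M$ through the Cauchy--Schwarz and Jensen estimates.
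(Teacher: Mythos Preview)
Your proposal follows essentially the same route as the paper: the same triangle split into an initialization term (handled by data processing and $T_{\mathrm{lower}}$) and a drift-mismatch term (handled by Girsanov, Pinsker, and $T_{\mathrm{upper}}$), the same algebraic identity $\nabla\ln p_t - \nabla\ln p^{\Send}_t = (q'_t/p_t)\bigl(\nabla\ln q'_t - \nabla\ln q_t\bigr)$ (the paper records this as Lemma~\ref{lem:scorediff_rewrite}), and the same Cauchy--Schwarz split with Lemma~\ref{ratio_inequality} controlling the posterior-weight factor.

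One point to tighten: your treatment of the score-difference factor $\mathbb{E}\|\nabla\ln q'_t-\nabla\ln q_t\|^4$ suggests reducing via Jensen to individual moments $\mathbb{E}\|\nabla\ln p^i_t\|^4$ and then invoking Assumption~\ref{score_bound_assumption}. That bound is $O(\overline M)$ \emph{uniformly in $t$}, so plugging it into the Girsanov integral would produce an unwanted factor of $\wh T$. The paper instead reduces (again by convexity of the norm and the fact that each mixture score is a convex combination of component scores) to \emph{pairwise differences} $\mathbb{E}_{p^i_t}\|\nabla\ln p^j_t-\nabla\ln p^\ell_t\|^4$ and applies Lemma~\ref{diff_score_expo}, which supplies the crucial $e^{-4t}$ decay; this is also where the full constant $\overline R^2+M^2+\sqrt M\,\Psi^4+\sqrt{\overline M}$ appears in one stroke, rather than from a separate change-of-measure argument. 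Once you route through Lemma~\ref{diff_score_expo} in this way, the time integral is $O(1)$ and the bookkeeping matches the stated bound.
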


The proof of Theorem~\ref{masters_theorem} relies on the following technical lemma whose proof we defer to Appendix~\ref{app:diff_score_expo}.

\begin{restatable}{lemma}{fourthscoredifflemma} \label{diff_score_expo}
Under Assumptions~\ref{slc_components}, \ref{higher_moments}, and~\ref{score_bound_assumption}, $\mathbb{E}_{X \sim p^i_t} \|\nabla \ln p^j_t(X)-\nabla \ln p^\ell_t(X)\|^4 \lesssim  e^{-4t}(\overline{R}^4+M^4+M\Psi^8+\overline{M}) \ \ \forall i,j,\ell\in[K]$.
\end{restatable}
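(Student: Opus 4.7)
The plan is to use Tweedie's formula to pull out the decaying $e^{-t}$ factor from the score difference, and then to control the residual posterior-mean difference via strong log-concavity and the moment assumption. With $a_t := e^{-t}$, $\sigma_t^2 := 1-e^{-2t}$, and $m^j(x,t) := \mathbb{E}[X_0\mid X_t=x,\, X_0\sim p^j]$, Tweedie gives $\nabla \ln p^j_t(x) = (a_t m^j(x,t) - x)/\sigma_t^2$, so the $-x/\sigma_t^2$ terms cancel in the difference:
\[
\nabla \ln p^j_t(x) - \nabla \ln p^\ell_t(x) = \frac{a_t}{\sigma_t^2}\bigl(m^j(x,t) - m^\ell(x,t)\bigr).
\]
I would then split into two regimes. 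On $t\in[0,1]$ the prefactor $a_t/\sigma_t^2$ can blow up as $\sigma_t^2 \to 0$, but Assumption~\ref{score_bound_assumption} together with $\|u-v\|^4 \le 8(\|u\|^4+\|v\|^4)$ gives $\mathbb{E}_{X\sim p^i_t}\|\nabla \ln p^j_t(X) - \nabla \ln p^\ell_t(X)\|^4 \le 16\,\overline{M}$, which is $\lesssim e^{-4t}\overline{M}$ since $e^{-4t}\ge e^{-4}$ on $[0,1]$. This contributes the $\overline{M}$ term. For $t \ge 1$, $\sigma_t^2 \ge 1-e^{-2}$, so $(a_t/\sigma_t^2)^4 \lesssim e^{-4t}$ and it suffices to bound $\mathbb{E}_{X\sim p^i_t}\|m^j(X,t) - m^\ell(X,t)\|^4$ by a polynomial in the remaining parameters.

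For that bound I would decompose $m^j - m^\ell = (\mu_j - \mu_\ell) + (m^j - \mu_j) - (m^\ell - \mu_\ell)$, with the first term contributing $\|\mu_j-\mu_\ell\|^4 \le 16\overline{R}^4$. For the posterior shifts, the posterior $\pi^j(y\mid x,t) \propto p^j(y)\exp\bigl(a_t x^\top y/\sigma_t^2 - a_t^2\|y\|^2/(2\sigma_t^2)\bigr)$ is strongly log-concave with parameter at least $1/\Psi^2 + a_t^2/\sigma_t^2$, since strong log-concavity is preserved under multiplication by a log-concave factor. Direct differentiation gives the classical identity $\nabla_x m^j(x,t) = (a_t/\sigma_t^2)\,\mathrm{Cov}_{Y\sim \pi^j(\cdot\mid x,t)}(Y)$, and the Brascamp--Lieb variance inequality bounds the posterior covariance in operator norm by $\Psi^2\sigma_t^2/(\sigma_t^2+a_t^2\Psi^2)$. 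Integrating from the reference point $a_t\mu_j$ to $x$ yields the shrinkage estimate $\|m^j(x,t) - m^j(a_t\mu_j,t)\| \lesssim a_t\Psi^2\,\|x - a_t\mu_j\|$ on $t \ge 1$. Fourth-powering and using $\mathbb{E}_{X\sim p^i_t}\|X - a_t\mu_j\|^4 \lesssim M + \overline{R}^4$ from Assumption~\ref{higher_moments}, this contributes $\lesssim a_t^4 \Psi^8(M+\overline{R}^4)$. The residual base-point offset $\|m^j(a_t\mu_j,t) - \mu_j\|$ vanishes exactly in the Gaussian case and in general is a perturbation controlled by the third central moment of $p^j$; applying Jensen and Cauchy--Schwarz together with the SLC moment bounds $\mathbb{E}_{p^j}\|Y-\mu_j\|^k \lesssim (\Psi^2 d)^{k/2}$ (so in particular $\mathbb{E}\|Y-\mu_j\|^8 \lesssim M^2$) produces the $M^4$ correction.

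Putting the pieces together, $\mathbb{E}_{X\sim p^i_t}\|m^j - m^\ell\|^4 \lesssim \overline{R}^4 + M^4 + a_t^4\Psi^8(M+\overline{R}^4)$, and multiplying by $(a_t/\sigma_t^2)^4 \lesssim e^{-4t}$ yields the claimed bound (using $\overline{R}^4 \le M$ to absorb $\Psi^8\overline{R}^4$ into $\Psi^8 M$). The principal obstacle is establishing the shrinkage estimate for \emph{non-Gaussian} SLC priors: unlike the Gaussian case where $m^j(\cdot,t)$ is an explicit affine function of $x$, in general one must combine the Brascamp--Lieb variance bound (to control the slope) with a careful moment estimate of the base-point offset (to control the non-Gaussian asymmetry). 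This is precisely where strong log-concavity of the components is essential, as flagged in Remark~\ref{remark:slc}.
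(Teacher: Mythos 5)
Your proposal is essentially correct and follows the same two-phase architecture as the paper: split off a small-$t$ regime where Assumption~\ref{score_bound_assumption} trivially gives $\lesssim\overline{M}$, and for larger $t$ exhibit a decomposition into ``Lipschitz variation at rate $\propto\Psi^{2}e^{-2t}$'' plus a ``reference-point value.'' The paper implements the Lipschitz step by subtracting the standard-normal score $-x$ and applying the Hessian sandwich of Lemma~\ref{bound_score_slc}, which is proved via the identity $\nabla^2(-\ln p_Z)=-\mathrm{Var}(\nabla(-\ln p_Y)\mid Z)+\mathbb{E}[\nabla^2(-\ln p_Y)\mid Z]$ for the noise posterior; your Tweedie reformulation $\nabla_x m^j(x,t)=(a_t/\sigma_t^2)\,\mathrm{Cov}_\pi(Y)$ plus Brascamp--Lieb gives \emph{exactly the same} Hessian bound, just phrased through the data posterior rather than the noise posterior, so this part is equivalent in substance.

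Where you genuinely depart from the paper is in the reference point. The paper anchors everything at the origin: $\|\nabla\ln p^j_t(x)+x\|\le\|\nabla\ln p^j_t(0)\|+O(\Psi^2e^{-2t}\|x\|)$, and Lemma~\ref{lemm:score_at_origin} gives the clean estimate $\|\nabla\ln p^j_t(0)\|\lesssim e^{-t}(\|\mu_j\|+M)$, which immediately produces the $\overline{R}^4+M^4$ term and requires no Taylor expansion of any posterior. You instead anchor $m^j$ at the $j$-dependent point $a_t\mu_j$ and must then control the base-point bias $\|m^j(a_t\mu_j,t)-\mu_j\|$, which is nonzero for non-Gaussian SLC components. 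Your heuristic that this bias is governed by the third central moment of $p^j$ is correct to leading order in $W=a_t^2/\sigma_t^2$ (one checks $\mathbb{E}_\pi[Y-\mu_j]\approx -(W/2)\,\mathbb{E}_{p^j}[(Y-\mu_j)\|Y-\mu_j\|^2]+O(W^2)$), but $W$ is only moderately small on $t\in[1,2]$, so making this rigorous uniformly in $t\ge 1$ requires a bit more care than a one-term expansion --- e.g.\ quantitatively comparing the mode of the Gaussian-tilted SLC measure $\pi^j\propto p^j\,e^{-W\|\cdot-\mu_j\|^2/2}$ to $\mu_j$ and invoking SLC mean-mode concentration. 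So the route works, but the paper's origin anchor sidesteps this complication entirely; if you prefer your Tweedie framing, anchoring instead at $x=0$ (i.e.\ controlling $\|m^j(0,t)-a_t\cdot0\|=\|m^j(0,t)\|$) would combine more smoothly with a Lemma~\ref{lemm:score_at_origin}-style estimate and avoid the non-Gaussian bias term.
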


\noindent Informally, this lemma quantities the extent to which the score functions for $p^j$ and $p^\ell$ become close over the course of the forward process, as measured by an average sample from any other component of the mixture.
\begin{proof}[Proof of Theorem~\ref{masters_theorem}]
By data processing inequality and definition of $T_{\mathrm{lower}}, T_{\mathrm{upper}}$, for all $i \in \Send, j \not\in \Send$,
\begin{align}
\TV(p_{t}^{S_{\mathrm{init}}  },p_{t}^{\Send }) \leq \epsilon \qquad &\forall t \in [\widehat{T},T] \label{eq:useTlower} \\
\TV(p^i_t,p^j_t) \geq 1-\epsilon^2/2 \qquad &\forall t \in [0,\widehat{T}]\,,  \label{eq:useTupper}
\end{align}
By data processing inequality and triangle inequality,
\begin{align*}
    &\TV(\modrevlaw{\Sinit}{\wh{T}}{},p^{\Send})\leq \TV(\modrevpath{\Sinit}{\wh{T}}, \pathrev{\Send}_{\wh{T}}) \le \underbrace{\TV(\modrevpath{\Sinit}{\wh{T}}, \modrevpath{\Send}{\wh{T}})}_{\text{(I)}}+\underbrace{\TV(\modrevpath{\Send}{\wh{T}}, \pathrev{\Send}_{\wh{T}})}_{\text{(II)}}\,.
\end{align*}
As $\modrevpath{\Sinit}{\wh{T}}$ and $\modrevpath{\Send}{\wh{T}}$ are the path measures for the solutions to the same SDE with initializations $p^{\Sinit}_{\wh{T}}$ and $p^{\Send}_{\wh{T}}$ respectively, we can use data processing again to bound (I) via
\begin{align}
\TV(\modrevpath{\Sinit}{\wh{T}},\modrevpath{\Send}{\wh{T}})\leq \TV(p^{S_{\mathrm{init}}  }_{\widehat{T}},p^{\Send}_{\widehat{T}}) \leq \epsilon.
\end{align}
To bound (II), we apply Pinsker's and Theorem~\ref{thm:girsanov} to bound $\TV(\modrevpath{\Send}{\wh{T}}, \pathrev{\Send}_{\wh{T}})^2$ by
\begin{equation*}
\TV(\modrevpath{\Send}{\wh{T}}, \pathrev{\Send}_{\wh{T}})^2 \leq \int_0^{\widehat{T}}\mathbb{E}\,\|\nabla \ln p_t(\forward^{\Send}_t)-\nabla \ln p_t^{\Send}(\forward^{\Send}_t)\|^2\,\D t\,.
\end{equation*}
We have the following identity (see Appendix~\ref{app:scorediffrewrite} for proof):
\begin{restatable}{lemma}{scorediffrewrite}\label{lem:scorediff_rewrite}
$\|\nabla \ln p^{\Send}_t -\nabla \ln p_t\bigr\|^2 = \bigl\|\nabla \ln p_t^{\Send} - \nabla \ln p_t^{[K]-\Send} \bigr\|^2 \cdot \left( \frac{\sum_{i \in [K]-\Send} w_i p^i_t}{\sum_{i\in[K]} w_i p^i_t}\right)^2$.
\end{restatable}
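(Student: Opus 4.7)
\textbf{Proof proposal for Lemma~\ref{lem:scorediff_rewrite}.} The identity is purely algebraic. The plan is to decompose $p_t$ into the two sub-mixtures $p^{\Send}_t$ and $p^{[K]-\Send}_t$, expand $\nabla \ln p_t$ as a density-weighted convex combination of the two sub-mixture scores, subtract the result from $\nabla \ln p^{\Send}_t$, and factor. The only non-trivial input is the chain-rule identity $\nabla p^S_t = p^S_t\, \nabla \ln p^S_t$, so no probabilistic or analytic tools are needed.

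Concretely, let $W_{\Send} \triangleq \sum_{i \in \Send} w_i$ and $W_{[K]-\Send} \triangleq 1 - W_{\Send}$. By the definition of the sub-mixtures in Section~\ref{sec:noisedenoise}, one has $p_t = W_{\Send}\, p^{\Send}_t + W_{[K]-\Send}\, p^{[K]-\Send}_t$. Differentiating in $x$ and dividing by $p_t$ yields the standard decomposition
\[
\nabla \ln p_t \;=\; \frac{W_{\Send}\, p^{\Send}_t}{p_t}\, \nabla \ln p^{\Send}_t \;+\; \frac{W_{[K]-\Send}\, p^{[K]-\Send}_t}{p_t}\, \nabla \ln p^{[K]-\Send}_t.
\]
Subtracting this from $\nabla \ln p^{\Send}_t$, and using that the two coefficients on the right sum to $1$, I would rewrite the coefficient of $\nabla \ln p^{\Send}_t$ as $1 - W_{[K]-\Send}\, p^{[K]-\Send}_t / p_t$ in order to pull out a common factor of $W_{[K]-\Send}\, p^{[K]-\Send}_t / p_t$, giving
\[
\nabla \ln p^{\Send}_t - \nabla \ln p_t \;=\; \frac{W_{[K]-\Send}\, p^{[K]-\Send}_t}{p_t}\, \bigl(\nabla \ln p^{\Send}_t - \nabla \ln p^{[K]-\Send}_t\bigr).
\]

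To finish, I would observe that $W_{[K]-\Send}\, p^{[K]-\Send}_t = \sum_{i \in [K]-\Send} w_i p^i_t$ and $p_t = \sum_{i \in [K]} w_i p^i_t$, so the scalar ratio becomes exactly the quantity in the statement; taking $\|\cdot\|^2$ of both sides then yields the claim. There is no genuine obstacle; the only point of care is tracking the normalizations $W_{\Send}$ and $W_{[K]-\Send}$ that sit inside the definitions of $p^{\Send}$ and $p^{[K]-\Send}$, which cancel precisely to turn the sub-mixture densities into the raw weighted sums of component densities appearing in the lemma.
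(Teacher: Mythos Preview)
Your proof is correct and follows essentially the same approach as the paper: both are direct algebraic manipulations that express $\nabla \ln p_t$ in terms of the sub-mixture scores and factor out the common scalar $\frac{\sum_{i\in[K]-\Send} w_i p^i_t}{\sum_{i\in[K]} w_i p^i_t}$. Your packaging via the two-component decomposition $p_t = W_{\Send}\,p^{\Send}_t + W_{[K]-\Send}\,p^{[K]-\Send}_t$ is slightly cleaner than the paper's component-level expansion, but the algebraic content is identical.
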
 
\noindent Using this expression, we can invoke Cauchy-Schwarz to separate the two terms that appear on the right-hand side. We bound these two terms in turn.
Recalling the definition of $\overline{W}$ and also applying Lemma \ref{ratio_inequality}, we see that for any $j\in\Send$,
\begin{align*}
  &\mathbb{E}_{p^j_t}\Bigl( \frac{\sum_{i \in [K]\backslash\Send} w_i p^i_t}{\sum_{i\in[K]} w_i p^i_t}\Bigr)^4 \leq \sum_{\ell \in [K]\backslash\Send}\mathbb{E}_{p^j_t}\Bigl[\frac{ w_\ell p^\ell_t}{w_j p^j_t + w_\ell p^\ell_t}\Bigr]\lesssim K \overline{W}\max_{\ell \in [K]\backslash\Send}\sqrt{1-\TV^2(p_t^\ell,p_t^j)} \lesssim  K \overline{W}\epsilon^2\,,
\end{align*}
where in the last step we used Eq.~\eqref{eq:useTlower}. By convexity, the same bound thus holds when the expectation on the left-hand side is replaced by an expectation with respect to $p^{\Send}_t$.

By the same convexity argument, to bound $\mathbb{E}\,\bigl\|\nabla \ln p_t^{\Send}(\forward_t^{\Send}) - \nabla \ln p_t^{[K]-\Send}(\forward_t^{\Send})\bigr\|^4$, it suffices to show that the expectations
\begin{align}
 \mathbb{E}\,\bigl\|\nabla \ln p_t^{\Send}(\forward_t^i) - \nabla \ln p_t^{[K]-\Send}(\forward_t^i)\bigr\|^4   
\end{align}
for all $i \in [K]$ are bounded. Moreover, the score of a mixture is a weighted average of the scores of the components, $\nabla \ln p_t^{\Send} = \sum_{i \in \Send} \frac{w_i p^i_t}{\sum_{j \in \Send} w_j p^j_t} \nabla \ln p_t^i$. By the triangle inequality, $\|\nabla \ln p_t^{\Send}(\forward_t^i) - \nabla \ln p_t^{[K]-\Send}(\forward_t^i)\|$ is at most the difference between two elements of a weighted score. Thus, we have 
\begin{align*}
\mathbb{E}\,\bigl\|\nabla \ln p_t^{\Send}(\forward_t^{\Send}) - \nabla \ln p_t^{[K]-\Send}(\forward_t^{\Send})\bigr\|^4 &\le \mathbb{E}_i \mathbb{E}\max_{   \substack{i,j \in \Send\\\ell \in [K]-\Send}} \bigl\|\nabla \ln p^j_t(\forward^i_t) - \nabla \ln p^\ell_t(\forward^i_t)\bigr\|^4 \\
&\le K^3\max_{   \substack{i,j \in \Send\\\ell \in [K]-\Send}}\mathbb{E}\,\bigl\|\nabla \ln p^j_t(\forward_t^i) - \nabla \ln p^\ell_t(\forward_t^i)\bigr\|^4. 
\end{align*}
Thus we can conclude by applying Lemma \ref{diff_score_expo} and bound $\mathbb{E}\,\bigl\|\nabla \ln p^{\Send}_t(\forward_t^{\Send}) -\nabla \ln p_t(\forward_t^{\Send})\bigr\|^2$ by $O(\epsilon \sqrt{\overline{W}}K^{2}(\overline{R}^2+M^2+\sqrt{M}\Psi^4+\sqrt{\overline{M}})e^{-2t})$.
Integrating over $[0,\wh{T}]$ completes the proof.
\end{proof}
\section{Instantiating the master theorem}
\label{sec:instantiate}

We now consider cases where we can provide concrete bounds on $T_{\mathrm{lower}}, T_{\mathrm{upper}}$. Our bounds here hold independent of the Assumptions in Section~\ref{sec:master}.

\subsection{General mixtures with similar components}

We first consider the case where the components of the mixture are ``similar'' in the sense that if we take any two components and translate them to both have mean zero, then they are moderately close in Wasserstein distance. Here, we obtain the following bounds on $T_{\rm lower}$ and $T_{\rm upper}$:

\begin{restatable}{lemma}{wassersteinlower}\label{corr:wassersteinlower}
Let $\epsilon > 0$. For $i \in [K]$, let $\overline{p}^i$ denote the density of the $i$-th component of the mixture model $p$ after being shifted to have mean zero. Suppose $\Wtwo(\overline{p}^i,\overline{p}^j) \leq \Upsilon$ for all $i,j \in [K]$. Then $T_{\rm lower}(\epsilon) \le \Bigl\{ \ln (w(S_{\rm init},\Send)+\Upsilon) +\ln \frac{1}{\epsilon}+\frac{1}{2} \ln 2 \Bigr\}\vee \, 3$.
Additionally, if $p^i_0 \in \subG_d(\sigma^2)$ for all $i\in[K]$, then
$T_{\rm upper}(\epsilon) \ge \ln \Delta(\Send) - \ln \sigma - \ln\sqrt{8d \ln 6 + 8\ln 4/\epsilon^2} - \ln 3-\frac{1}{2}\ln 8$.
\end{restatable}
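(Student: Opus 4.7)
I prove the two bounds separately, both relying on the OU representation $\forward_t\stackrel{\mathrm{d}}{=} e^{-t}\forward_0+\sqrt{1-e^{-2t}}Z$ with $Z\sim\calN(0,I)$ independent of $\forward_0$. For the upper bound on $\Tlower$, joint convexity of $\TV$ in both arguments gives
\begin{equation*}
\TV(p^{\Sinit}_t, p^{\Send}_t)\le \max_{i\in\Sinit,\, j\in\Send}\TV(p^i_t, p^j_t),
\end{equation*}
so it suffices to produce a uniform pairwise bound. For each pair $(i,j)$, let $(\tilde X,\tilde Y)$ be an optimal $\Wtwo$-coupling of the mean-shifted components $(\bar p^i, \bar p^j)$ and set $\forward^i_0=\mu_i+\tilde X$, $\forward^j_0=\mu_j+\tilde Y$. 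Driving both SDEs with the same Brownian motion, so that $\forward^i_t$ and $\forward^j_t$ share the Gaussian term $\sqrt{1-e^{-2t}}Z$, the conditional laws of $\forward^i_t$ and $\forward^j_t$ given $(\tilde X,\tilde Y)$ are isotropic Gaussians with common covariance $(1-e^{-2t})I$ and means differing by $e^{-t}(\mu_i-\mu_j+\tilde X-\tilde Y)$. Pinsker, followed by the triangle inequality and Jensen, then yields
\begin{equation*}
\TV(p^i_t, p^j_t)\le \frac{e^{-t}(\norm{\mu_i-\mu_j}+\Upsilon)}{2\sqrt{1-e^{-2t}}}\le \frac{e^{-t}(w(\Sinit,\Send)+\Upsilon)}{2\sqrt{1-e^{-2t}}}.
\end{equation*}
For $t\ge 3$ the denominator is bounded below by an absolute constant, and solving the resulting scalar inequality for the threshold at which the right side equals $\epsilon$ gives the claimed bound; the ``$\vee\,3$'' is simply there to keep $\sqrt{1-e^{-2t}}$ away from $0$.

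For the lower bound on $\Tupper$, for each pair $i\in\Send, j\not\in\Send$ it suffices to exhibit a single set $A$ with $p^i_t(A)\ge 1-\epsilon^2/4$ and $p^j_t(A)\le \epsilon^2/4$, which then gives $\TV(p^i_t, p^j_t)\ge 1-\epsilon^2/2$. I take $A=B(e^{-t}\mu_i, r_\epsilon)$ with $r_\epsilon=\sigma\sqrt{8d\ln 6+8\ln(4/\epsilon^2)}$. Since $\forward^i_0-\mu_i\in\subG_d(\sigma^2)$ and the OU noise is Gaussian, $\forward^i_t-e^{-t}\mu_i=e^{-t}(\forward^i_0-\mu_i)+\sqrt{1-e^{-2t}}Z$ remains sub-Gaussian with variance proxy at most $\sigma^2$ (taking $\sigma\ge 1$ without loss of generality). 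A standard $1/2$-net of $S^{d-1}$, of cardinality at most $6^d$, combined with the scalar sub-Gaussian tail then yields
\begin{equation*}
\mathbb{P}\bigl(\norm{\forward^i_t-e^{-t}\mu_i}\ge r\bigr)\le 2\cdot 6^d\exp\bigl(-r^2/(8\sigma^2)\bigr),
\end{equation*}
which is $\le \epsilon^2/4$ at $r=r_\epsilon$; the same estimate controls $\forward^j_t$ around $e^{-t}\mu_j$. On the intersection of the two concentration events, the triangle inequality forces $\norm{\forward^j_t-e^{-t}\mu_i}\ge e^{-t}\Delta(\Send)-r_\epsilon$, which exceeds $r_\epsilon$ (so $\forward^j_t\notin A$) whenever $e^{-t}\Delta(\Send)>2r_\epsilon$. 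Unwinding this inequality and absorbing the union bound together with routine bookkeeping slack into the $\ln 3+\tfrac12\ln 8$ constant yields the claimed lower bound on $\Tupper$.

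The $\Tlower$ step is essentially a clean Pinsker-after-coupling computation. The main obstacle is the norm concentration used in the $\Tupper$ bound: one has to verify that sub-Gaussianity is preserved under OU noising with explicit control on the variance proxy, convert scalar sub-Gaussian tails into a dimension-dependent norm tail via a covering argument, and then track the constants carefully enough that the two concentration balls around $e^{-t}\mu_i$ and $e^{-t}\mu_j$ remain disjoint at the claimed critical time.
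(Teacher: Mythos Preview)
Your proposal is correct and follows essentially the same route as the paper. For $T_{\rm lower}$ the paper packages your ``couple the centered laws, condition, apply Pinsker to the resulting isotropic Gaussians'' step as a short-time regularization lemma ($\KL(p*\calN(0,\sigma^2 I)\|q*\calN(0,\sigma^2 I))\le \Wtwo(p,q)^2/(2\sigma^2)$) together with a mixture-TV bound, and for $T_{\rm upper}$ it records your sub-Gaussian preservation, covering-based norm tail, and disjoint-balls argument as separate lemmas; the constants are handled with the same level of slack you describe.
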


\begin{proof}[Proof sketch of Lemma~\ref{corr:wassersteinlower}, see Appendix~\ref{app:w2}] For $T_{\rm lower}$, we apply Pinsker's inequality and a Wasserstein smoothing to upper bound the $\TV$ between components in the initial and target mixture in terms of the Wasserstein-$2$ distance of the components, which decreases at the rate of $O(e^{-t}(w(S_{\rm init},\Send)+\Upsilon))$. For $T_{\rm upper}$, we use sub-Gaussian concentration bounds to lower bound the $\TV$ between components in $S_{\rm end}$ and $[K]-S_{\rm end}$. 
\end{proof}
\noindent Note that because all $\alpha$-strongly log-concave distributions are sub-Gaussian with variance proxy $\Theta(1/\alpha)$, under Assumption~\ref{slc_components} of Section~\ref{sec:master} the above applies for $\sigma \asymp \Psi$.

When the terms $\Upsilon, \Psi, 1/\epsilon$ are sufficiently small, our bounds on $T_{\mathrm{lower}}$ and $T_{\mathrm{upper}}$ are dominated by $\ln w(\Sinit, \Send)$ and $\ln \Delta(\Send)$ respectively. Recall that $w(\Sinit, \Send)$ and $\Delta(\Send)$ respectively correspond to the maximum distance between any two component means from $\Sinit$ and $\Send$, and the minimum distance from $\Send$ to the rest of the mixture. This, combined with our master theorem, has the favorable interpretation that as long as the separation between components within $\Sinit$ and $\Send$ is dominated by the separation between components in $\Send$ vs. outside $\Send$, then there is a non-empty window of times $\wh{T} \in [T_{\rm lower}, T_{\rm upper}]$ such that the $\Sinit$-targeted reverse process from noise level $\wh{T}$ results in samples close to $\Send$.

\subsection{Mixtures of well-conditioned Gaussians}
\label{sec:examples}
We now suppose $p$ is a mixture of Gaussians, with $p^i = \mathcal{N}(\mu_i,\Sigma_i)$. At time $t \geq 0$ in the forward process, if $\mu_i(t) \triangleq e^{-t}\mu_i$ and $\Sigma_i(t)\triangleq e^{-2t}\Sigma_i+(1-e^{-2t})\mathrm{Id}$, then $p^i_t = \mathcal{N}(\mu_i(t), \Sigma_i(t))$, $p_t = \sum w_i  \mathcal{N}(\mu_i(t), \Sigma_i(t))$.

We also define $\sigmamax(t):= \max_i \sigmamax(\Sigma_i(t))$, $\sigmamin(t) = \min_i \sigmamin(\Sigma_i(t))$, and $\overline{R}(t) = e^{-t}\max_{i} \|\mu_i\|$. 
\begin{assumption}\label{nice_covariances}
There exists $\underline{\lambda} \leq 1 \leq \overline{\lambda}$ such that for all $t \geq 0$, $\underline{\lambda} \leq\sigmamin(\Sigma_i) \le \sigmamax(\Sigma_i) \leq \overline{\lambda}$. Note that the same bound immediately holds for $\sigmamin(t), \sigmamax(t)$ as a result. 
\end{assumption}
We can prove analogous bounds to Lemmas~\ref{ratio_inequality} and~\ref{diff_score_expo} in terms of these parameters, see Lemmas~\ref{ratioboundgaussian} and~\ref{diff_score_expo} in Appendices~\ref{app:nablaratiolemma} and~\ref{app:ratioboundgaussian}. Using these ingredients, we prove in Appendix~\ref{app:gauss_thm} the following:
\begin{restatable}{theorem}{mastersgaussiantheorem}
\label{thm::well_conditioned_gaussian_theorem}
Take any $\Sinit \subset \Send\subset[K]$. For sufficiently small $\epsilon$, there exists $T_{\rm lower}$ and $T_{\rm upper}$ such that
$T_{\mathrm{lower}} \le \frac{1}{2}\ln \left(2d\frac{\overline{\lambda}-\underline{\lambda}}{\underline{\lambda}} +\frac{1}{\underline{\lambda}}w(S_{\rm init},\Send)^2\right)+\ln \frac{1}{\epsilon}$ and also $T_{\mathrm{upper}} \ge \ln \Delta(\Send)+\frac{1}{2} \ln \underline{\lambda} -\ln 4-\frac{1}{2}\ln \ln \left(\frac{\overline{\lambda}\sqrt{K\overline{W}}\left[\left(\overline{\lambda}-\underline{\lambda}\right)^2(\overline{R}(0)^2+\overline{\lambda}d)+\overline{R}(0)^2\right]}{\underline{\lambda}^2\Delta(\Send)^2 \epsilon^2} \right)$ and such that for any $\widehat{T} \in (T_{\mathrm{lower}},T_{\mathrm{upper}})$, $\TV(\modrevlaw{\Sinit}{\wh{T}}{},p^{\Send}) \lesssim \epsilon$.
\end{restatable}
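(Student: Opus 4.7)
The plan is to apply Theorem \ref{masters_theorem} to the well-conditioned Gaussian mixture and separately supply sharper estimates for $T_{\mathrm{lower}}(\epsilon)$ and $T_{\mathrm{upper}}(\epsilon)$ using the closed form $p^i_t = \calN(e^{-t}\mu_i,\, e^{-2t}\Sigma_i+(1-e^{-2t})\Id)$. The hypotheses of the master theorem hold automatically under Assumption \ref{nice_covariances}: a Gaussian with covariance $\Sigma \preceq \overline{\lambda}\,\Id$ is $1/\overline{\lambda}$-strongly log-concave with an affine (hence Lipschitz) score, so Assumptions \ref{slc_components} and \ref{lipschitz_score} hold with $\Psi^2 \leq \overline{\lambda}$, while Assumptions \ref{higher_moments} and \ref{score_bound_assumption} reduce to standard Gaussian fourth-moment estimates that give $M, \overline{M}$ polynomial in $\overline{R}(0)$, $\overline{\lambda}$, and $d$.

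For $T_{\mathrm{lower}}$, I would bound $\TV(p_t^{\Sinit}, p_t^{\Send})$ by the maximum pairwise $\TV$ via the data processing inequality, and for each pair of Gaussian components apply Pinsker's inequality together with the closed-form Gaussian KL:
\begin{align*}
\KL(p^i_t \,\|\, p^j_t) \,\lesssim\, \frac{e^{-2t}\|\mu_i-\mu_j\|^2}{\underline{\lambda}} \,+\, d\cdot \frac{e^{-2t}(\overline{\lambda}-\underline{\lambda})}{\underline{\lambda}}.
\end{align*}
The mean term uses $\Sigma_j(t)^{-1}\preceq \underline{\lambda}^{-1}\Id$; the covariance-mismatch term uses $\Sigma_i(t)-\Sigma_j(t)=e^{-2t}(\Sigma_i-\Sigma_j)$ together with the convex inequality $x-1-\ln x \leq (x-1)^2/2$ for $x$ near $1$ to linearize the log-determinant contribution. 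Demanding the right-hand side be $\leq 2\epsilon^2$ and taking the log yields the stated upper bound on $T_{\mathrm{lower}}$.

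For $T_{\mathrm{upper}}$, I would show $\TV(p^i_t, p^j_t)\geq 1-\epsilon^2/2$ for every $i \in \Send$, $j \notin \Send$ by projecting onto the unit vector $v$ parallel to $\mu_i(t)-\mu_j(t)$, reducing to the TV between two one-dimensional Gaussians with mean gap $\geq e^{-t}\Delta(\Send)$ and variance at most $\sigmamax(t)$. A one-dimensional Gaussian tail computation gives $\TV \geq 1-\delta$ whenever the mean gap exceeds a constant times $\sqrt{\sigmamax(t)\ln(1/\delta)}$, and here we must take $\delta$ to be a polynomial in $\epsilon$ small enough to absorb the master theorem's $\sqrt{\overline{W}}K^2(\overline{R}^2+M^2+\sqrt{M}\Psi^4+\sqrt{\overline{M}})$ prefactor. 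Solving $e^{-t}\Delta(\Send) \gtrsim \sqrt{\sigmamax(t)\ln(1/\delta)}$ for the largest admissible $t$ yields a logarithmic lower bound on $T_{\mathrm{upper}}$ with the characteristic $\ln\ln$ correction, and the bracketed expression inside the log appears because $\overline{R}, M, \overline{M}$ themselves scale polynomially in $\overline{R}(0)^2$, $\overline{\lambda}$, $(\overline{\lambda}-\underline{\lambda})^2$, and $d$.

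The main obstacle is controlling the covariance-mismatch contribution: unlike the similar-component setting of Lemma \ref{corr:wassersteinlower} where Wasserstein-smoothing bounds suffice, here one must exploit the exact Gaussian KL formula to obtain a $T_{\mathrm{lower}}$ bound that remains tight when components have substantially different covariances. A related subtlety is that the master theorem's polynomial-in-dimension prefactor must be absorbed into the ``sufficiently small $\epsilon$'' condition when inverting the concentration inequality for $T_{\mathrm{upper}}$, which is precisely what produces the $\ln\ln$ correction. Once both time bounds are established, Theorem \ref{masters_theorem} combines them to yield the desired $\TV(\modrevlaw{\Sinit}{\wh{T}}{}, p^{\Send}) \lesssim \epsilon$ estimate for any $\widehat{T} \in (T_{\mathrm{lower}}, T_{\mathrm{upper}})$.
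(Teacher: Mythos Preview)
Your $T_{\mathrm{lower}}$ argument is essentially identical to the paper's: both use Pinsker, the closed-form Gaussian KL, and the linearization of the log-determinant term to get the $d\,e^{-2t}(\overline{\lambda}-\underline{\lambda})/\underline{\lambda}$ contribution.

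For $T_{\mathrm{upper}}$, however, your route differs from the paper's and would \emph{not} reproduce the stated bound. You propose to verify the master theorem's hypotheses (in particular $\TV(p^i_t,p^j_t)\geq 1-\epsilon^2/2$ via a one-dimensional projection) and then invoke Theorem~\ref{masters_theorem} as a black box. The paper does \emph{not} do this: it re-runs the Girsanov / Cauchy--Schwarz skeleton of the master theorem but replaces the two generic ingredients by Gaussian-specific ones, namely Lemma~\ref{nabla_ratio_bound} (score-difference bound) and Lemma~\ref{ratioboundgaussian} (ratio bound via the exact Hellinger formula for Gaussians). It then integrates the resulting $e^{-2t}\exp\{-e^{-2t}\Delta(\Send)^2/(16\overline{\lambda})\}$ directly and solves for $T_{\mathrm{upper}}$.

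This matters quantitatively. The black-box prefactor $\sqrt{\overline{W}}K^2(\overline{R}^2+M^2+\sqrt{M}\Psi^4+\sqrt{\overline{M}})$ does \emph{not} scale like $(\overline{\lambda}-\underline{\lambda})^2$ as you assert: for Gaussians $M$ and $\overline{M}$ are fourth moments that stay order $\overline{R}(0)^4+\overline{\lambda}^2 d^2$ even when all covariances coincide. The $(\overline{\lambda}-\underline{\lambda})^2$ factor in the theorem comes specifically from Lemma~\ref{nabla_ratio_bound}, which exploits that the score difference between two Gaussians has a covariance-mismatch piece bounded via Lemma~\ref{diff_between_bounded_sing_psd_matrices} by $(1/\sigmamin(t)-1/\sigmamax(t))\|x\|$. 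Black-boxing the master theorem also loses the $1/\Delta(\Send)^2$ in the $\ln\ln$ argument (that factor arises from the explicit antiderivative of the integrand above) and degrades the $K$- and $d$-dependence inside the $\ln\ln$ from $\sqrt{K}$ and $d$ to $K^4$ and $d^4$. So your plan would yield a valid but strictly weaker $T_{\mathrm{upper}}$; to match the theorem as stated you need the two Gaussian lemmas and the direct integration the paper carries out.
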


\noindent To get intuition for the bound, consider the simpler scenario where the covariances are the identity matrix.
\begin{example}\label{ex:identity_gaussians} ($K$ Gaussians with identity covariance) Let $\Sigma^i_0=\mathrm{Id}$ for all $i \in [K]$. Then, for any $S_{\mathrm{init}}\subset  \Send \subset [K]$, $T_{\mathrm{lower}} = \ln w(S_{\mathrm{init}}, \Send) +\ln 1/\epsilon$ and $T_{\mathrm{upper}} = \ln \Delta(\Send) -\ln 4-\frac{1}{2}\ln \ln \frac{\overline{R}(0)^2\sqrt{K\overline{W}}}{ \epsilon^2\Delta(\Send)^2 }$.
The dominant terms are $\ln w(S_{\mathrm{init}}, \Send)$ and $\ln \Delta(\Send)$, which depend on the intra- and inter-group distances of the means. In Fig.~\ref{fig:gaussian_hierarchy}, we plot these critical times and the final membership of the noised then denoised points for a Gaussian mixture. We see that our bounds match real class membership. 
\end{example}
\begin{figure}[!h]
    \centering
    \includegraphics[width=0.7\linewidth]{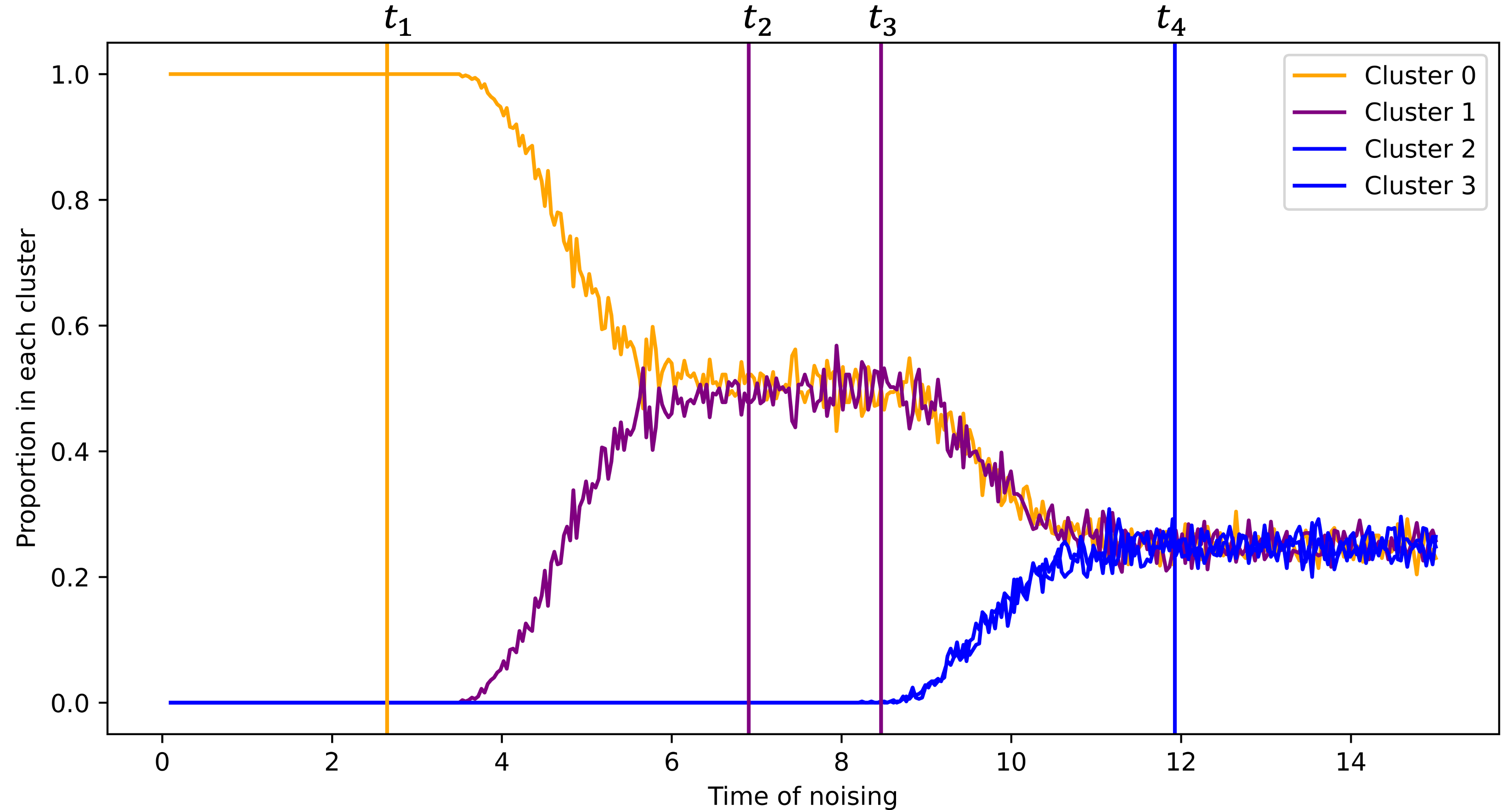}
    \caption{Example of critical times and proportion in each cluster as a function of noise timesteps. A point belongs to a cluster if its distance to the cluster mean is $\leq 5$. All clusters have identity covariance; cluster $0$ has mean $(-15100)$; cluster $1$ has mean $(-14900)$; cluster $2$ has mean $(14900)$; cluster $3$ has mean $(15100)$. We compute the thresholds $t_1,t_2,t_3,t_4$ with $\epsilon=0.1$ with the formulae from Example~\ref{ex:identity_gaussians}. By noising for $t \leq t_1$, we only sample from cluster $1$. By noising for $t \in [t_2,t_3]$, we sample from clusters $0,1$. By noising for $t \geq t_4$, we sample from all clusters. \vspace{-1em}}
    \label{fig:gaussian_hierarchy}
\end{figure}

\section{Hierarchy of classes}
\label{sec:hierarchy}
In this section, we consider a \emph{sequence} of critical windows that enable sampling from a sequence of nested sub-mixtures. Figure~\ref{fig:gaussian_hierarchy} hints at this idea, that as we noise for longer time periods, we sample from more and more components. Before we continue, it will be useful to formalize our model of a hierarchy of classes as a tree.  
\begin{definition}\label{def:tree}
We define a \textbf{mixture tree} as a tuple $(T,h,f,R)$. A tree $T=(V,E)$ of height $H= O(\sqrt{\ln R})$ is associated with a height function $h:V \to \mathbb{N}$ mapping vertices to their distance to the root and a function $f:V\to 2^{[K]}\backslash\{\emptyset\}$ satisfying the following: (1) $f(\textrm{root})=[K]$; (2) if $u$ is a parent of $v$, $f(v) \subset f(u)$; (3) for distinct $i,j \in [K]$ with leaf nodes $w,v$ such that $i \in f(w), j \in f(v)$, if $u$ is the lowest common ancestor of $w,v$, then $\|\mu_i-\mu_j\| \in (1\pm\delta)\ln \frac{R}{2^{h(u)^2}}$ with $\delta<0.01$. 
\end{definition}
Intuitively, the sequence of increasing critical windows of the noising and denoising process acts as a path up a mixture tree from some leaf. Within each critical window, the noising and denoising process is sampling from every class in the corresponding node in the path to the root. The class means have to be within a constant factor of $\ln \frac{R}{2^{h^2}}$, where $h$ is the height of their lowest common ancestor, to both ensure statistical separation from components outside the target mixture and small statistical distance within the target mixture. To make the critical times more explicit, we consider the setting of a mixture of identity covariance Gaussians (see proof in Appendix~\ref{app:hierarchy}): 
\begin{restatable}{theorem}{thmhierarchy}\label{thm:hierarchy_example}
    Let all $\Sigma^i = \mathrm{Id}$, $\|\mu_i\|=R$, and $w_i=\frac{1}{K}$. For $i \in [K]$, consider the path $u_1,u_2,u_3,\dots,u_{H'}$ where $u_1$ is the leaf node with $i \in f(u_1)$ and $u_{H'}$ is the root. There exists $k \in [1,2,\dots,H']$, sufficiently large $R,H'$, and sufficiently small $\epsilon$ such that there is a sequence of times $T_1<T_{2}<\dots<T_{k}$ with $\TV(\modrevlaw{\{i\}}{T_\ell}{}, p^{f(u_\ell)}) \lesssim \epsilon$.
\end{restatable}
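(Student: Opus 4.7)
The plan is to apply Theorem~\ref{thm::well_conditioned_gaussian_theorem} in its identity-covariance form (Example~\ref{ex:identity_gaussians}) once for every node on the path. For each $\ell \in \{1,\ldots,H'\}$, I set $\Sinit = \{i\}$ and $\Send = f(u_\ell)$ to extract a critical window $[T_{\mathrm{lower}}(\ell), T_{\mathrm{upper}}(\ell)]$ within which any $\widehat{T}$ gives $\TV(\modrevlaw{\{i\}}{\widehat{T}}{}, p^{f(u_\ell)}) \lesssim \epsilon$. I then identify an initial segment $1 \le \ell \le k$ of levels on which the windows are simultaneously non-empty and correctly ordered along the time axis, so that a strictly increasing selection $T_1 < \cdots < T_k$ inside the respective windows is available.

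The substantive calculation is translating the mixture tree geometry in Definition~\ref{def:tree} into bounds on $w(\{i\}, f(u_\ell))$ and $\Delta(f(u_\ell))$. Writing $H_\ell \triangleq h(u_\ell)$ (so $H_{\ell+1} = H_\ell - 1$), any $j \in f(u_\ell)\setminus\{i\}$ has its leaf in the subtree rooted at $u_\ell$, so the LCA of $i$ and $j$ sits on the path at height $\ge H_\ell$, with the maximum distance attained at LCA $= u_\ell$ itself; this gives $w(\{i\}, f(u_\ell)) \le (1+\delta)\ln\frac{R}{2^{H_\ell^2}}$. Conversely, any $k \notin f(u_\ell)$ has LCA with $j \in f(u_\ell)$ at height at most $H_\ell - 1$, yielding $\Delta(f(u_\ell)) \ge (1-\delta)\ln\frac{R}{2^{(H_\ell-1)^2}}$. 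Plugging into Example~\ref{ex:identity_gaussians} with $\overline{R}(0) = R$ and $\overline{W} = 1$,
\begin{align*}
T_{\mathrm{lower}}(\ell) &\;\le\; \ln\!\bigl((1+\delta)\ln\tfrac{R}{2^{H_\ell^2}}\bigr) + \ln\tfrac{1}{\epsilon}\,,\\
T_{\mathrm{upper}}(\ell) &\;\ge\; \ln\!\bigl((1-\delta)\ln\tfrac{R}{2^{(H_\ell-1)^2}}\bigr) - \tfrac{1}{2}\ln\ln\tfrac{R^2\sqrt{K}}{\epsilon^2\,\Delta(f(u_\ell))^2} - \ln 4\,.
\end{align*}
The case $\ell = 1$ is degenerate since $\Sinit = \Send = \{i\}$ makes the TV vanish at $\widehat{T} = 0$ (so $T_{\mathrm{lower}}(1) = 0$), and the root case $\ell = H'$ imposes no upper bound because $[K]\setminus f(u_{H'}) = \emptyset$.

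Non-emptiness of the $\ell$-th window reduces to showing that the ratio $\frac{\ln R - (H_\ell-1)^2 \ln 2}{\ln R - H_\ell^2 \ln 2} = 1 + \frac{(2H_\ell-1)\ln 2}{\ln R - H_\ell^2\ln 2}$ exceeds a $\mathrm{polylog}(R)/\epsilon$ factor. Since Definition~\ref{def:tree} enforces $H = O(\sqrt{\ln R})$, whenever $H_\ell$ is within an $O(1)$ gap of $\sqrt{\ln R/\ln 2}$ the denominator is $O(\sqrt{\ln R})$ while the numerator is $\Theta(\sqrt{\ln R})$, making the ratio polynomial in $\ln R$ and its logarithm bigger than $\ln(1/\epsilon) + O(\ln\ln R)$ for $R$ large and $\epsilon$ small. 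Let $k$ be the largest index for which all windows $\ell = 1,\ldots, k$ are non-empty. For the ordering, both $w(\{i\}, f(u_\ell))$ and $\Delta(f(u_\ell))$ are non-decreasing in $\ell$ (enlarging the target sub-mixture cannot decrease the diameter and trims the set of outsiders to farther ones), and the same bounds give $T_{\mathrm{upper}}(\ell) \lesssim \ln\ln\frac{R}{2^{(H_\ell-1)^2}} = \ln w(\{i\}, f(u_{\ell+1})) \lesssim T_{\mathrm{lower}}(\ell+1) - \ln(1/\epsilon)$, so consecutive windows are disjoint and correctly sorted once $\epsilon$ is small. Any choice $T_\ell \in [T_{\mathrm{lower}}(\ell), T_{\mathrm{upper}}(\ell)]$ thus produces a strictly increasing sequence, and applying Theorem~\ref{thm::well_conditioned_gaussian_theorem} at each level yields the claimed TV bound.

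The main technical obstacle is carrying the sub-logarithmic $\ln\ln(\cdot)$ and $\ln 4$ corrections from Example~\ref{ex:identity_gaussians} through both the non-emptiness and the ordering arguments; this is precisely where the hypothesis $H' \le H = O(\sqrt{\ln R})$ is used, since it prevents $\ln R - H_\ell^2 \ln 2$ from collapsing faster than the numerator $(2H_\ell-1)\ln 2$ along the relevant levels, preserving a $\mathrm{polylog}(R)$-sized slack that swallows the error terms.
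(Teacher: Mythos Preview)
Your approach matches the paper's: instantiate Example~\ref{ex:identity_gaussians} at each level with $\Send=f(u_\ell)$, read off $w$ and $\Delta$ from the tree geometry, and verify that the resulting windows are non-empty and increasing. The ordering argument is fine. (The paper writes $\Sinit=f(u_j)$, $\Send=f(u_{j+1})$ rather than $\Sinit=\{i\}$, but this makes no difference since the maximum in $w$ is always realised at $\mathrm{LCA}=u_{j+1}$.)

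There is, however, a genuine quantitative gap in your non-emptiness step. Taking Definition~\ref{def:tree} literally with $\|\mu_a-\mu_b\|\in(1\pm\delta)\ln\frac{R}{2^{h^2}}$, you correctly reduce $T_{\mathrm{upper}}(\ell)>T_{\mathrm{lower}}(\ell)$ to the ratio
\[
\frac{\ln R-(H_\ell-1)^2\ln 2}{\ln R-H_\ell^2\ln 2}
\;=\;1+\frac{(2H_\ell-1)\ln 2}{\ln R-H_\ell^2\ln 2}
\]
being large. But your claim that this ratio is ``polynomial in $\ln R$'' when $H_\ell$ is within $O(1)$ of $\sqrt{\ln R/\ln 2}$ is wrong: both numerator and denominator of the added fraction are then $\Theta(\sqrt{\ln R})$, so the ratio is $\Theta(1)$ and its logarithm is $O(1)$, which cannot dominate $\ln(1/\epsilon)$. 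Even if you push $H_\ell$ so close to the threshold that the denominator is $O(1)$, the ratio is at best $\Theta(\sqrt{\ln R})$ and its logarithm only $O(\ln\ln R)$.

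The paper's own proof in fact drops the outer $\ln$ and works with distances $\asymp R/2^{h^2}$ (this is an internal inconsistency with Definition~\ref{def:tree} as stated). At that scale one gets
\[
\ln\Delta(f(u_\ell))-\ln w(\{i\},f(u_\ell))\;=\;(2H_\ell-1)\ln 2+O(\delta)\;=\;\Theta(\sqrt{\ln R}),
\]
and \emph{this} is the slack that absorbs the $\ln(1/\epsilon)+\ln 4+\tfrac12\ln\ln(\cdot)$ corrections for large $R$. Replace your ratio computation with this direct difference (equivalently, use $R/2^{h^2}$ rather than $\ln(R/2^{h^2})$ for the inter-mean distances) and the rest of your outline goes through unchanged.
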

This model also captures the intuition that diffusion models select more substantial features of an image before resolving finer details. When one ascends a tree of sub-mixtures from a leaf to the root through noising, one is essentially adding contributions to the score from more and more components of the mixture. Similarly, when a diffusion model samples from a hierarchy, it can be seen as ignoring negligible components of the mixture from the score until it reaches the end component. 

\section{Critical windows in Stable Diffusion}
\label{sec:sd}
In this section, we give an example of a critical window in Stable Diffusion v2.1 (SD2.1) to corroborate our theory. We generated images of cars and chose color, background, and size as our features. We noised and denoised each image for $t=350$ to $490$ time and plotted percentage of feature agreement with the base image vs. time (Figure~\ref{fig:critical_time_car}). We produced $250$ images from SD2.1., using $500$ time steps from the DDPM scheduler \citep{NEURIPS2020_4c5bcfec} and the prompt "\texttt{Color splash wide photo of a car in the middle of empty street, detailed, highly realistic, brightly colored car, black and white background}." (see Figure~\ref{fig:hand-labeled1}). We used the CLIP with the ViT-B/32 Transformer architecture to label our images \citep{radford2021learning} according to the subject matter of their background ("\texttt{car in a city/on a road/in a field}"), color intensity ("\texttt{black or white/pale colored/brightly colored car}"), and size ("\texttt{big/medium/small car}"). We used the prompt with the largest dot product with the image according to CLIP as the feature label. Note the background feature: from time step 480 to 490, the percentage of images with the same background as the original image drops by 25\%. The size feature also sees a substantial drop from 470 to 490 by 15\%. The agreement for the color also decreases significantly but the drop is much less sharp and occurs between time steps 450 to 470. Our theory for hierarchical sampling suggests that the diffusion model selects the car's size and background before deciding the color. This experiment also implies that critical windows can exist when the target mixture is different from the initial mixture, because noising and denoising to $[450,470]$ timesteps is sampling from the superclass of cars that have the same size and background as the original car but different colors. 

\begin{figure}[!h]
    \centering
    \includegraphics[width=\linewidth]{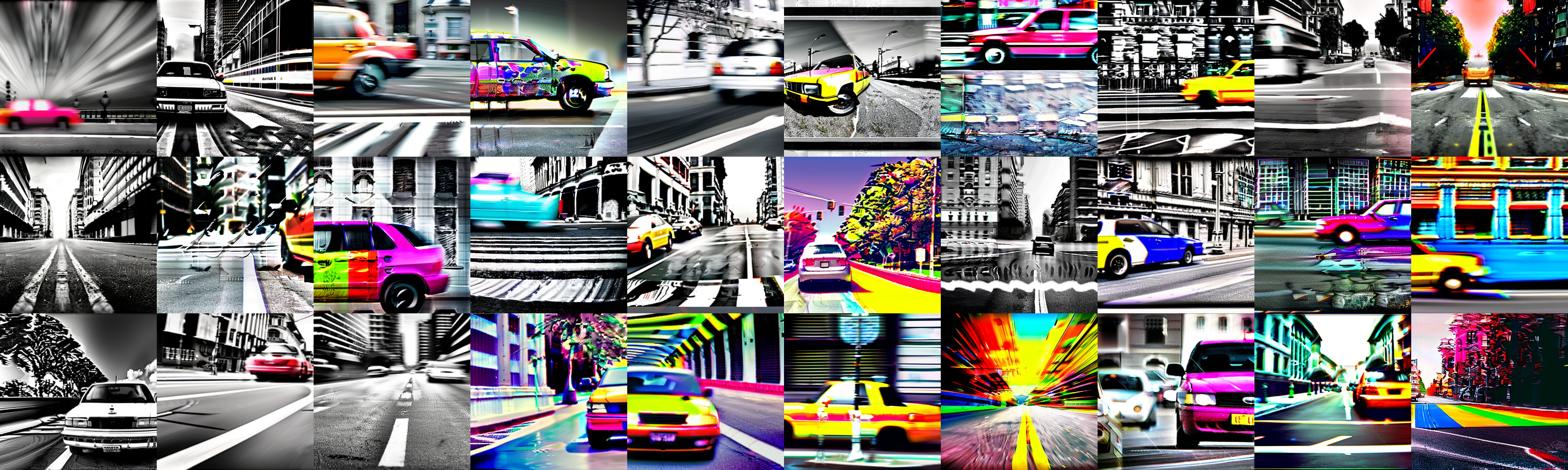}
    \caption{Example images of cars generated by SD2.1 that we subsequently noised and denoised to produce Figure~\ref{fig:critical_time_car}.}
    \label{fig:hand-labeled1}
\end{figure}

\begin{figure}[!h]
    \centering
    \includegraphics[width=0.8\linewidth]{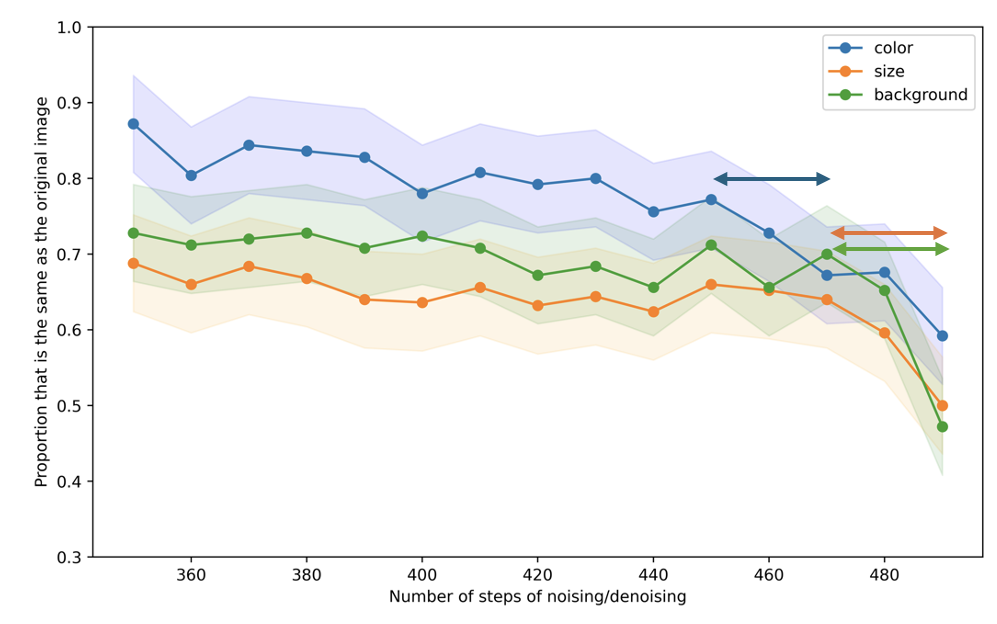}
    \caption{Percentage of agreement vs. noising amount in the experiment on images of cars generated by SD2.1 (see Section~\ref{sec:sd} for details). The critical window for each feature is demarcated with double-sided horizontal arrows.}
    \label{fig:critical_time_car}
\end{figure}

\section{Applications to fairness and privacy}

\subsection{Fairness} 
\label{sec:fair}
Generative models can reproduce social biases with their outputs \citep{luccioni2023stable}. Here we ask whether potentially biased features like gender have critical windows, as this could help design specific interventions to apply to diffusion model within that narrow range to improve image diversity \citep{raya2023spontaneous}. We studied outputs of photo portraits of laboratory technician on SD2.1 \citep{luccioni2023stable}, sampled $200$ images (see Figure~\ref{fig:hand-labeled2} for examples), and created an analogous plot of critical times (Figure~\ref{fig:critical_fair}). To determine gender, we against used a CLIP model and tested whether a given image had higher dot product with the prompt appended with "\texttt{, male}" or "\texttt{, female}". We can see a large drop in agreement between $t=80$ and $t=84$, from over 80\% to roughly 50\%, suggesting a critical window for the gender feature.  If the male and female classes are not well-separated at time $t=80$, then the noising and denoising procedure should result in a more equal mix of images from both classes. This confirms the intuition of our theory that different categories are well-separated before a critical window. 
\begin{figure}[!h]
    \centering
    \includegraphics[width=\linewidth]{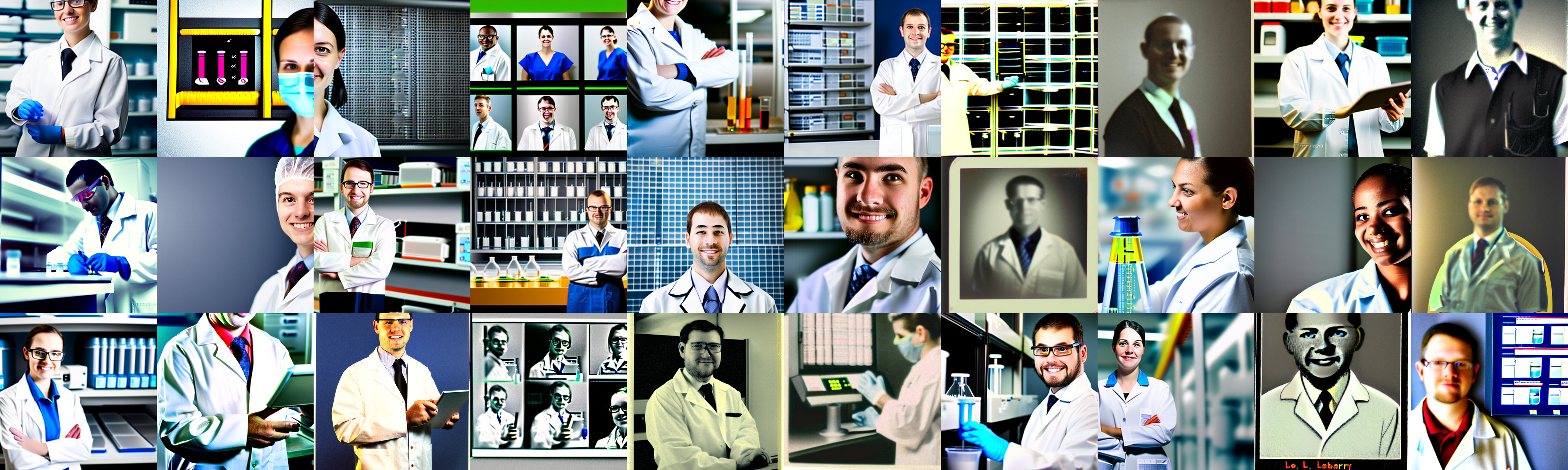}
    \caption{Example images generated by SD2.1 from the prompt ``\texttt{Photo portrait of a laboratory technician},'' that we subsequently noised and denoised for $100$ timesteps to produce Figure~\ref{fig:critical_fair}.}
    \label{fig:hand-labeled2}
\end{figure}
\begin{figure}[!h]
    \centering
    \includegraphics[width=0.8\linewidth]{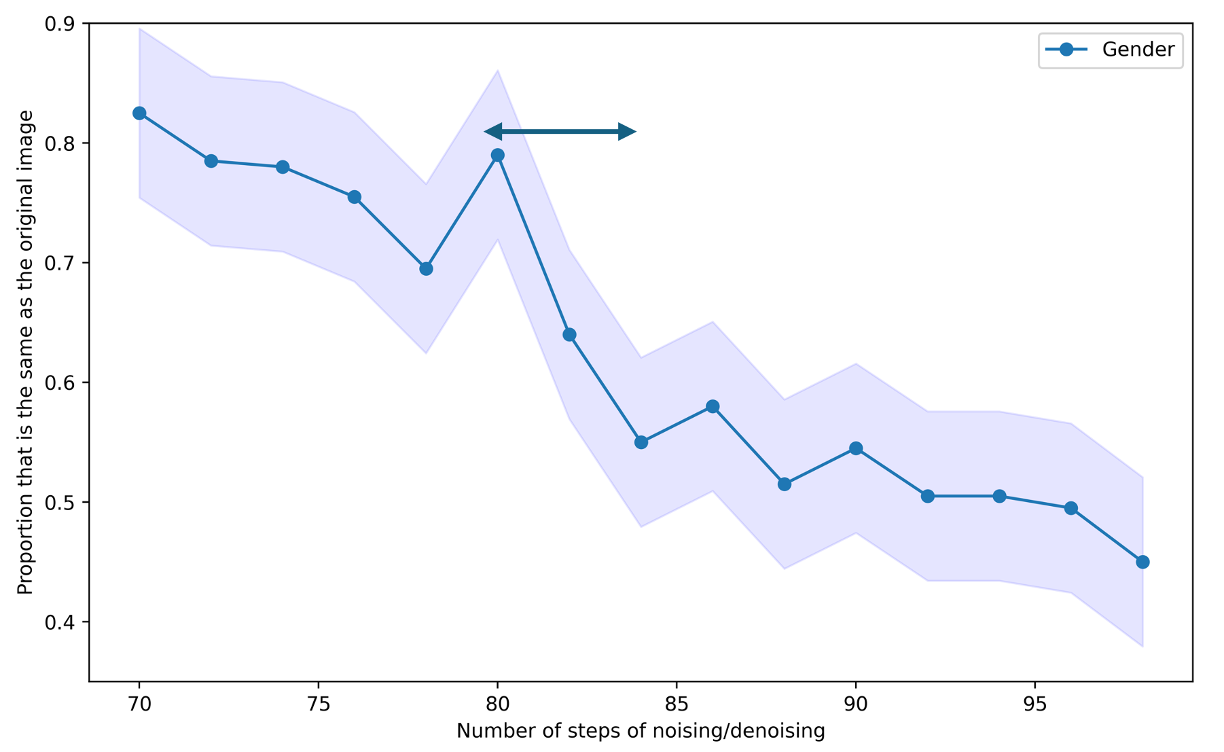}
    \caption{Critical window for gender feature in the experiment on images of laboratory technicians generated by SD2.1. Critical window demarcated with double-sided horizontal arrows. }
    \label{fig:critical_fair}
\end{figure}\\

\subsection{A new membership inference attack}
Membership Inference Attacks (MIAs) are a class of privacy attacks that try to identify whether a candidate sample belonged to training data \citep{DBLP:conf/sp/ShokriSSS17}, and are relevant for diffusion models because of their substantial privacy and copyright risks \citep{10.5555/3620237.3620531}. We present a simple MIA (NoiseDenoise) based on the distance between a candidate and noised and denoised copies. Let $\Theta$ be the set of possible models and $\mathcal{X}$ be the set of possible inputs, herein the diffusion model and candidate image, respectively. Let $\mathcal{D}_{\rm train}$ be the training data and $\mathcal{D}$ be the distribution from which the training data was drawn. To evaluate a MIA, we sample with probability $\frac{1}{2}$ some $x \sim \mathcal{D}_{\rm train}$ and otherwise sample $x \sim \mathcal{D}$. We rigorously describe our attack $\textrm{NoiseDenoise}(\mathcal{M}):\Theta \times \mathcal{X}\to \mathbb{R}$. For a diffusion model $\theta \in \Theta$, let $q(x_t|x_{t-1})$ denote the $T$-step forward process and $p_\theta(x_{t-1}|x_{t})$ denote the learned denoising process. Let $\underline{T} \in (0,T)$ denote the number of noising steps of our attack and $N$ the number of samples of our attack. For $x \in \mathcal{X}$, we generate $N$ samples $\tilde{x}^i_{\underline{T}} \sim q(x_{\underline{T}}|x_0=x)$ and $\tilde{x}^i_0 \sim p_\theta(x_0|x_T=\tilde{x}^i_{\underline{T}} )$ for $i \in [N]$. Our attack is the average L2 difference between $\tilde{x}^i_0$ and $x$ for $i \in [N]$, and we predict $x$ to belong to the training data if $\mathcal{M}(\theta,x) \le \tau$,
\begin{align}
\mathcal{M}(\theta,x)  = \frac{1}{N} \sum_{i \in [N]} \|\tilde{x}^i_0-x\|_2.
\end{align}

Note that this method has already demonstrated some promising results in identifying whether an image was generated by a diffusion model \citep{finalproj}. We present a conceptual explanation of our attack as follows. A diffusion model $\theta$ implicitly defines a pushforward distribution $\theta_*\gamma^d$ on images. For a candidate image $x$, we can view $\theta_*\gamma^d$ as a mixture of a ball around $x$, i.e. some $B_{R}(x)$ with $R >0$, and the remainder of the distribution. Within a ball $B_{R}(x)$, we expect diffusion models to typically place more of the mass close to $x$ when $x \in \mathcal{D}_{\rm train}$ because training data have smaller losses. Thus we have greater separation from the remainder of the distribution for training data, and based on our theoretical framework, we can noise and denoise $x \in \mathcal{D}_{\rm train}$ for more time steps than $x \notin \mathcal{D}_{\rm train}$ and obtain samples close to $x$. Our justification is similar to the logic characterizing diffusion model memorization in the independent and concurrent work of \citet{biroli2024dynamical}. \cite{biroli2024dynamical} considers the volume of neighborhoods around training data to identify critical times in their "collapse" regime, while we relate the size of these neighborhoods to our critical window theorems and develop these intuitions into a MIA. Additionally, this technique can be viewed as the diffusion model analogue of language model methods which perturb the inputs as part of MIAs \citep{li-etal-2023-mope} or machine-generated text detection \citep{10.5555/3618408.3619446}. 

Our attack was tested on a DDPM that was trained on CIFAR-10 in \cite{10.5555/3618408.3618757} and we compare it to their methods $\mathrm{SecMI}_{\textrm{stat}}$ and $\mathrm{SecMI}_{\textrm{nn}}$. Both of their attacks exploit a deterministic approximation of the forward and reverse process of a DDPM to estimate the sampling error of a candidate image. $\mathrm{SecMI}_{\textrm{stat}}$ is the error itself while $\mathrm{SecMI}_{\textrm{nn}}$ is a neural network trained on the errors at different timesteps. We set $N=10$ and $\wh{T}=50$ (with $T=100$), and compare all methods with 1000 training data samples and 1000 held-out samples. As in \cite{10.5555/3618408.3618757}, we present ROC curves, AUC statistics, and TPRs at low FPRs of all MIAs, see Figure~\ref{fig:roc_curve} and Table~\ref{tbl:roc}. Both Figure \ref{fig:roc_curve} and Table \ref{tbl:roc} show that $\mathrm{SecMI}_{\textrm{stat}}$ and $\mathrm{SecMI}_{\textrm{nn}}$ outperform NoiseDenoise. However, $11$ of $23$ of the train points NoiseDenoise identifies at $\text{FPR}=0.01$ and $21$ of $140$ of the train points identified at $\text{FPR}=0.05$ are not classified correctly by $\mathrm{SecMI}_{\textrm{stat}}$ or $\mathrm{SecMI}_{\textrm{nn}}$ at the same FPR thresholds, suggesting NoiseDenoise can serve as a \emph{complementary approach} to these methods.
\begin{table}[h]
\centering
\begin{tabular}{|c|c|c|c|c|}
\hline
     Method & AUC & $\text{TPR}_{.01}$ & $\text{TPR}_{.05}$ \\
     \hline
  $\textrm{NoiseDenoise}$ &  $.6636
$   & $.023 $        &  $.14 $ \\  
  $\mathrm{SecMI}_{\textrm{stat}}$ &  $.8847$   & $.073 $        &  $.344 $ \\  
    $\mathrm{SecMI}_{\textrm{nn}}$ &  $.9132$   & $.245$        &  $.609 $ \\  
\hline
\end{tabular}

\caption{For each attack, we report the AUC, TPR at FPR $.01$, and TPR at FPR $.05$.}
\label{tbl:roc}
\end{table}
\begin{figure}[!h]
    \centering
    \includegraphics[width=\linewidth]{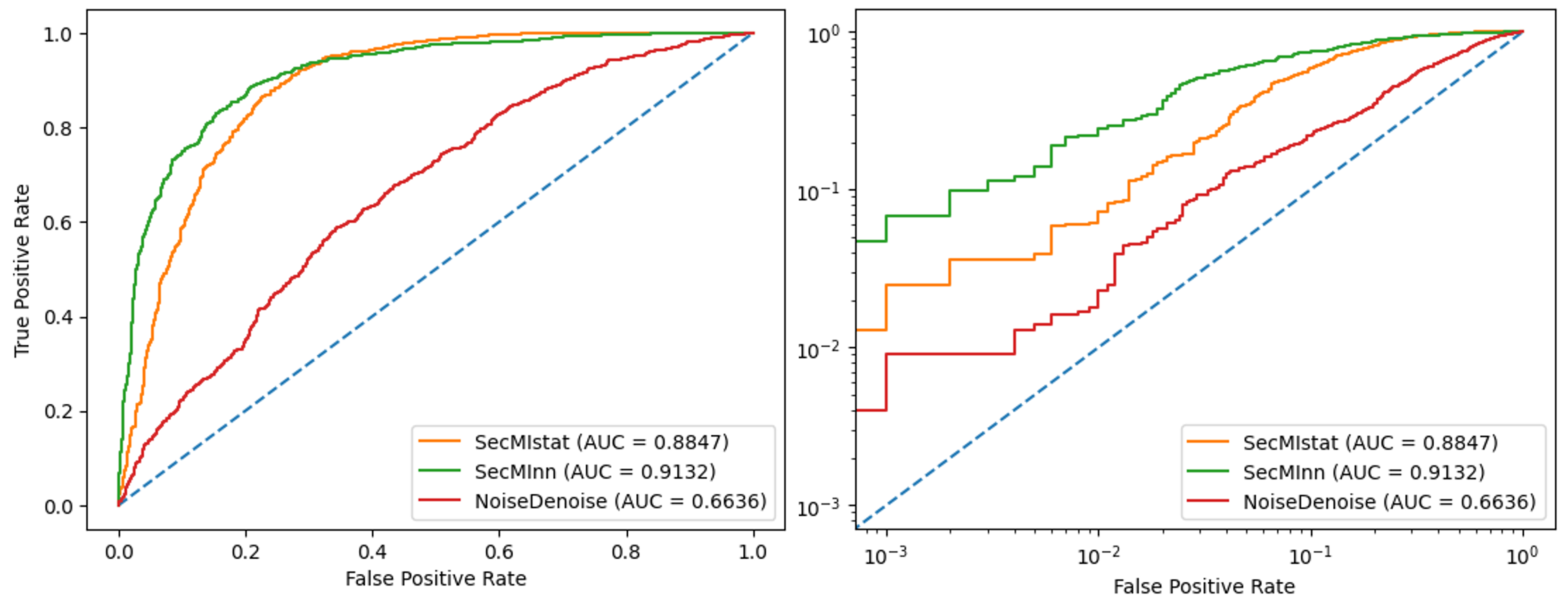}
    \caption{ROC curves of different methods.}
    \label{fig:roc_curve}
\end{figure}

\section{Conclusion}
In this paper, we considered noising and denoising samples from a mixture model and the resulting critical times of this process over which features emerge. We provide theory for the empirical observation from~\citet{raya2023spontaneous},~\citet{biroli2024dynamical},~\citet{georgiev2023journey}, and~\cite{sclocchi2024phase} that discrete features are decided within short windows in the sampling process. This same question was studied mathematically in recent and concurrent works~\citep{raya2023spontaneous,sclocchi2024phase,biroli2024dynamical}, and our rigorous non-asymptotic bounds for hierarchical mixture models nicely complement the precise statistical physics-based insights derived in those works.

We also present preliminary experiments describing critical windows for features in SD2.1, and demonstrate our framework's value for fairness and privacy. Our main contribution was identifying and proving a relationship between sampling from a target sub-mixture and statistical distances between components in the initial and target sub-mixture and remaining components. 

\paragraph{Limitations and future directions.} The most immediate technical follow-up would be to eliminate the logarithmic dependence on dimension for $T_{\rm upper}$ for more general distributions than just mixtures of well-conditioned Gaussians. Intuitively, the total variation of two distributions in high-dimensions should be close to $1$ unless the means are much closer than $\Theta(\sqrt{d})$ from each other. Future research that lower bounds the total variation between isotropic log-concave distributions $\Omega(\sqrt{d})$ distance apart would improve the bounds we present in our paper.

On the conceptual front, another direction is to discover analogues of critical windows for continuous features. For example, when we noise and denoise a picture of an orange car, we should expect that it takes fewer time steps to see pictures of red or yellow cars than purple cars, because orange is more similar to red or yellow than purple. Such features, e.g., color, height, and orientation, more naturally belong to a \emph{continuum} rather than discrete bins, but our critical window theorems require strong statistical separation between components inside and outside the target sub-mixture and cannot capture this phenomenon. As more features in natural data are continuous than discrete, extensions of our critical window theorems to continuous features could expand the usefulness of this work.

This work presents exciting opportunities for empirical research in designing samplers and interpretability. \cite{raya2023spontaneous} experimented with initializing the sampling procedure right before the point of symmetry breaking, in order to improve efficiency and diversity of sampling. Our critical window theorems and experiments indeed suggest that by concentrating on timesteps within the critical windows in which new features are defined rather than time periods when only existing features are refined, we may be able to improve generation quality. Another interesting empirical direction would be to describe features and hierarchies of features in realistic conditional diffusion models. This would require a methodology to systematically identify and extract features from image generations as well as accurately determine if given images contain a feature. With respect to the latter, it seems impractical to label enough data to follow the supervised classifier approach of \citep{raya2023spontaneous,biroli2024dynamical}, while the CLIP-based approach of our work and~\citet{georgiev2023journey} can label images with novel prompts in a zero- or few-shot fashion.  \cite{sclocchi2024phase}'s definition of a feature as the embeddings in an image classifier is an alternative but \textit{a priori} may not be useful for practitioners unless the embeddings translate into realizable concepts. The former challenge would be identifying easily-understandable features in highly diverse and complex image generations. Strong pretrained multi-modal models may again provide a solution to this problem: \cite{gandelsman2024interpreting} automatically generates image descriptions with GPT3.5 to identify salient features for their goal to understand the role of attention heads in CLIP. Machine-driven hypothesis generation of important features in image generations combined with multimodal labelers could enable a complete and structured understanding of feature emergence in diffusion models.

\paragraph{Impact statement}
This paper is largely theoretical in nature but also describes a new membership inference attack against diffusion models. This could potentially impact the privacy of their training data. We hope this paper spurs further research into the fundamental mechanisms behind memorization in diffusion model so that these risks may be limited in the future. Furthermore, the diffusion model that we tested our MIA on was only trained on CIFAR-10 data, limiting the privacy risks of our study.

\section*{Acknowledgments}

SC would like to thank Aravind Gollakota, Adam Klivans, Vasilis Kontonis, Yuanzhi Li, and Kulin Shah for insightful discussions on diffusion models and Gaussian mixtures. ML would like to thank Andrew Campbell and Jason Wang for thoughtful conversations about diffusion models and the applications of this work. 

\bibliography{refs}
\bibliographystyle{icml2024}

\newpage
\appendix
\onecolumn
\section{Deferred proofs}

\subsection{Proof of Lemma~\ref{ratio_inequality}}\label{app:ratio_inequality}
\ratioinequalitylemma*
\begin{proof}
We exhibit the leftmost equality by noting $dPdQ = \frac{1}{4}\left((dP+dQ)^2-(dP-dQ)^2\right)$,
\begin{align}
\mathbb{E}_{x \sim P} \left[\frac{dQ}{dP+dQ} \right] &= \int \frac{dPdQ}{d(P+Q)}\\
&= \frac{1}{4}\left[\int \frac{(dP+dQ)^2-(dP-dQ)^2}{d(P+Q)}\right]\\
&= \frac{1}{4}\left[2-\int \frac{(dP-dQ)^2}{d(P+Q)}\right]\\
&= \frac{1}{2}\left[1-\LC(P,Q)\right].
\end{align}
The first inequality follows from $\LC(P,Q) \geq \frac{1}{2}\Hellinger^2(P,Q)$ (see p.48 in \cite{GVK024181773}). The second inequality follows from rearranging $4\TV^2(P,Q) \leq \Hellinger^2(P,Q)(4-\Hellinger^2(P,Q))$ (see p.47 in \cite{GVK024181773}) into $1-\frac{1}{2}\Hellinger^2(P,Q) \leq \sqrt{1-\TV^2(P,Q)}$. 
\end{proof}

\subsection{Proof of Lemma~\ref{diff_score_expo}}\label{app:diff_score_expo}
\begin{lemma}\label{bound_score_slc}
Under Assumption \ref{slc_components}, the Hessian of $\ln p^i_t$ for $i \in [K]$ is between
\begin{align}
  \frac{1}{e^{-2t}\Psi^2 + 1-e^{-2t}}\mathrm{Id}\preceq \nabla^2 (-\ln p^i_t) \preceq \frac{1}{1-e^{2t}} \mathrm{Id}.
\end{align}
\end{lemma}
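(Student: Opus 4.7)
The plan is to exploit the explicit transition kernel of the Ornstein--Uhlenbeck process and write $p^i_t$ as a Gaussian convolution of a rescaled copy of $p^i$. Specifically, since $X^i_t \mid X^i_0 = x_0 \sim \calN(e^{-t}x_0, (1-e^{-2t})\Id)$, if we let $\sigma_t^2 \triangleq 1-e^{-2t}$ and $\tilde p^i_t$ denote the density of $e^{-t} X^i_0$, then $p^i_t = \tilde p^i_t * g_t$, where $g_t$ is the density of $\calN(0,\sigma_t^2\Id)$. The change of variables $y = e^{-t}x_0$ combined with Assumption~\ref{slc_components} immediately gives
\begin{equation*}
\nabla^2(-\ln \tilde p^i_t)(y) = e^{2t}\,\nabla^2(-\ln p^i)(e^t y) \succeq \frac{e^{2t}}{\Psi^2}\Id,
\end{equation*}
so $\tilde p^i_t$ is $(e^{2t}/\Psi^2)$-strongly log-concave, while $g_t$ is $(1/\sigma_t^2)$-strongly log-concave.

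For the lower bound, I would invoke the standard fact (a direct consequence of the Brascamp--Lieb / Pr\'ekopa--Leindler inequality) that the convolution of an $\alpha$-strongly log-concave density with a $\beta$-strongly log-concave density is $\frac{\alpha\beta}{\alpha+\beta}$-strongly log-concave. Plugging in $\alpha = e^{2t}/\Psi^2$ and $\beta = 1/\sigma_t^2$ and simplifying,
\begin{equation*}
\frac{\alpha\beta}{\alpha+\beta} \;=\; \frac{e^{2t}}{e^{2t}(1-e^{-2t}) + \Psi^2} \;=\; \frac{1}{e^{-2t}\Psi^2 + 1 - e^{-2t}},
\end{equation*}
which is exactly the claimed lower bound on $\nabla^2(-\ln p^i_t)$.

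For the upper bound, I would use the representation $\nabla \ln p^i_t(x) = \frac{1}{\sigma_t^2}\bigl(\mathbb{E}[e^{-t} X^i_0 \mid X^i_t = x] - x\bigr)$, which follows from differentiating the convolution identity (Tweedie's formula). Differentiating once more yields
\begin{equation*}
\nabla^2 \ln p^i_t(x) = \frac{1}{\sigma_t^2}\Bigl(\frac{1}{\sigma_t^2}\mathrm{Cov}(e^{-t} X^i_0 \mid X^i_t = x) - \Id\Bigr),
\end{equation*}
and since the conditional covariance is PSD this gives $\nabla^2(-\ln p^i_t) \preceq \frac{1}{\sigma_t^2}\Id = \frac{1}{1-e^{-2t}}\Id$, matching the stated upper bound (I read the exponent in the statement as a typo for $-2t$).

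The main obstacle is not conceptual but bookkeeping: deciding how explicitly to derive the two ingredients above. The lower bound can be cited from Brascamp--Lieb directly; the upper bound is most cleanly argued by the Tweedie identity combined with the PSD-ness of conditional covariance, so I would include that short computation rather than appeal to a black-box. Neither step requires the full strength of Assumption~\ref{slc_components} beyond what is stated, so no additional regularity hypotheses need to be invoked.
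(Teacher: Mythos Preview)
Your proposal is correct and follows essentially the same approach as the paper: the lower bound is obtained by invoking the preservation of strong log-concavity under convolution (the paper cites \cite{10.1214/14-SS107} for exactly the constant $e^{-2t}\Psi^2 + 1 - e^{-2t}$), and the upper bound is obtained by expressing the Hessian as $\frac{1}{\sigma_t^2}\Id$ minus a PSD conditional covariance term. The only cosmetic difference is that the paper derives the latter via the general Saumard--Wellner identity $\nabla^2(-\ln p_Z) = \mathbb{E}[\nabla^2(-\ln p_Y)\mid Z] - \mathrm{Var}(\nabla \ln p_Y\mid Z)$ rather than Tweedie's formula, but for Gaussian $Y$ these are the same computation.
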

\begin{proof}
Using the preservation of strong log-concavity (see p.71 in in \cite{10.1214/14-SS107} or \cite{0d881dbfc2b94408bf4ca044a554766f}), we find that for $i \in [K]$,
\begin{align*}
  p^i_t &\in \SLC(e^{-2t}\Psi^2+(1-e^{-2t}),d)  .
\end{align*}
By Proposition 2.23 of \cite{10.1214/14-SS107}, this implies $\nabla^2 (-\ln p^i_t) \succeq \frac{1}{e^{-2t}\Psi^2+(1-e^{-2t})}.$ For the second inequality, we follow the proof of Proposition 7.1. in \cite{10.1214/14-SS107} for the convolution $X^i_t = e^{-t}X^i_0+\mathcal{N}(0,(1-e^{-2t})\mathrm{Id})$. Let $X:=e^{-t}X^i_0,Y:=\mathcal{N}(0,(1-e^{-2t})\mathrm{Id}),Z:=X^i_t$, and let $p_X,p_Y,p_Z$ be their respective densities. Because
\begin{align}
\nabla (- \ln p_Z)(z) =\frac{-\nabla p_Z(z)}{p_Z(z)}= \mathbb{E}_{X \sim p_X}[p_Y(z-X)\cdot \nabla \left(- \ln p_Y(z-X)\right)] \cdot \frac{1}{p_Z(z)}=\mathbb{E}[\nabla(- \ln p_Y)(Y)|X+Y=z], 
\end{align}
we can compute the Hessian with the product rule,
\begin{align}
\nabla^2(- \ln p_Z)(z)&=\nabla \left\{\mathbb{E}_{X \sim p_X}[p_Y(z-X)\cdot \nabla \left(- \ln p_Y(z-X)\right)] \cdot \frac{1}{p_Z(z)}\right\}\\
&= -\mathbb{E}_{X \sim p_X}\left[p_Y(z-X)\nabla \ln p_Y(z-X) (\nabla \ln p_Y(z-X))^\top\right] \cdot \frac{1}{p_Z(z)}\\
&+\mathbb{E}_{X \sim p_X}[p_Y(z-X)\nabla^2(- \ln p_Y(z-X))]\cdot \frac{1}{p_Z(z)}\\
&+\mathbb{E}_{X \sim p_X}[p_Y(z-X)\nabla(\ln p_Y(z-X)) ] \cdot \frac{1}{p_Z(z)}\cdot \frac{\nabla p_Z(z)}{p_Z(z)} \\
&= -\mathbb{E}[\nabla \ln p_Y(Y) (\nabla \ln p_Y(Y))^\top|X+Y=z]+\mathbb{E}[\nabla^2(- \ln p_Y(Y))|X+Y=z]\\
&+\left(\mathbb{E}[\nabla \ln p_Y(Y)|X+Y=z]\right)^{\otimes 2}\\
&=-\mathrm{Var}(\nabla (-\ln p_Y(Y))|X+Y=z) + \mathbb{E}[\nabla^2(-\ln p_Y(Y))|X+Y=z] \\
&\preceq \frac{1}{1-e^{-2t}}\mathrm{Id},
\end{align}
where  the last line uses $\mathrm{Var}(\nabla (-\ln p_Y(Y))|X+Y=z) \succeq 0$ and $\mathbb{E}[\nabla^2(-\ln p_Y(Y))|X+Y=z] =\frac{1}{1-e^{-2t}}\mathrm{Id}$.
\end{proof}

\begin{lemma}\label{lemm:score_at_origin}
For $t > 0.001$, we have the following inequality on the score at the origin,
\begin{align}
\|\nabla \ln p^i_t(0)\| \lesssim  e^{-t}\left[\|\mu_i\|+M\right].
\end{align}
\end{lemma}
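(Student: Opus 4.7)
The plan is to combine Tweedie's formula with a one-dimensional Chebyshev monotonicity argument. Since the OU kernel satisfies $X^i_t \mid X^i_0 \sim \calN(e^{-t} X^i_0, (1-e^{-2t})\,\Id)$, Tweedie's identity gives
\[
\nabla \ln p^i_t(0) \;=\; \frac{e^{-t}\,\mathbb{E}[X^i_0 \mid X^i_t = 0]}{1-e^{-2t}}\,.
\]
For $t > 0.001$ the factor $1/(1-e^{-2t})$ is bounded by an absolute constant, so the task reduces to showing $\norm{\mathbb{E}[X^i_0 \mid X^i_t = 0]} \lesssim \|\mu_i\| + M$.

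By Bayes' rule, the posterior density of $X^i_0$ given $X^i_t = 0$ is $\nu(y) \propto p^i_0(y) \exp(-\beta \|y\|^2/2)$ with $\beta = e^{-2t}/(1-e^{-2t})$. The key observation is that $u \mapsto u^2$ is increasing and $u \mapsto e^{-\beta u^2/2}$ is decreasing on $[0,\infty)$, so applying Chebyshev's sum inequality to the scalar $\|Y\|$ under $Y \sim p^i_0$ yields
\[
\mathbb{E}_{p^i_0}\!\bigl[\|Y\|^2 e^{-\beta\|Y\|^2/2}\bigr] \;\le\; \mathbb{E}_{p^i_0}\|Y\|^2 \;\cdot\; \mathbb{E}_{p^i_0}\!\bigl[e^{-\beta\|Y\|^2/2}\bigr]\,.
\]
Dividing by the normalizer of $\nu$ gives $\mathbb{E}_\nu\|Y\|^2 \le \mathbb{E}_{p^i_0}\|Y\|^2 \le \sqrt{M}$, where the last inequality uses Jensen together with Assumption~\ref{higher_moments} at $t=0$.

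By Cauchy--Schwarz, $\|\mathbb{E}_\nu Y\| \le \sqrt{\mathbb{E}_\nu\|Y\|^2} \le M^{1/4}$, and since $M \ge 1$ we have $M^{1/4} \le M \le \|\mu_i\| + M$, finishing the proof upon combining with the display in the first paragraph. The main subtlety lies in the bound on the posterior expectation: direct approaches such as applying Cauchy--Schwarz to the integral $\int y\, p^i_0(y)\, \phi_t(-e^{-t}y)\,\D y$, or lower-bounding $p^i_t(0)$ via strong log-concavity of $p^i_t$, unavoidably introduce dimension-dependent factors like $(1-e^{-2t})^{-d/2}$ that spoil the estimate. Chebyshev's inequality sidesteps this by comparing the second moments of the Gaussian-tilted posterior and the prior in a dimension-free way via scalar monotonicity in $\|Y\|$.
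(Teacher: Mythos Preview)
Your argument is correct and takes a genuinely different route from the paper. The paper writes $\nabla \ln p^i_t(0)$ explicitly as a ratio of integrals and then argues that the numerator and denominator are close to $\mu_i f_{\calN(0,(1-e^{-2t})\Id)}(0)$ and $f_{\calN(0,(1-e^{-2t})\Id)}(0)$ respectively, using the Lipschitz continuity of the Gaussian kernel for $t\ge 0.001$. By contrast, you go through Tweedie's formula and reduce to bounding the second moment of the Gaussian-tilted posterior $\nu\propto p^i_0(y)e^{-\beta\|y\|^2/2}$, then invoke the one-dimensional Chebyshev correlation inequality on the scalar $\|Y\|$ to get $\mathbb{E}_\nu\|Y\|^2\le \mathbb{E}_{p^i_0}\|Y\|^2\le\sqrt{M}$. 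Your approach is more elementary in that it avoids any kernel-regularity argument, yields a dimension-free implied constant (the paper's Lipschitz constant $\Omega$ for $f_{\calN(0,(1-e^{-2t})\Id)}$ depends on $d$), and in fact proves the sharper estimate $\|\nabla\ln p^i_t(0)\|\lesssim e^{-t}M^{1/4}$ without needing to invoke $\|\mu_i\|$ at all; it also does not use Assumption~\ref{slc_components}. The paper's computation, on the other hand, makes the role of $\mu_i$ explicit as the leading-order term of the posterior mean, which is conceptually informative even if not needed for the bound.
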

\begin{proof}
By the definition of a convolution, we can explicitly compute
\begin{align}
\nabla \ln p^i_t(0) &= \frac{-\int_{\mathbb{R}^d}p^i_0\left(u\right) f_{\mathcal{N}(0,(1-e^{-2t})\mathrm{Id})}(-ue^{-t})\frac{0-ue^{-t}}{1-e^{-2t}} du}{\int_{\mathbb{R}^d}p^i_0\left(u\right) f_{\mathcal{N}(0,(1-e^{-2t})\mathrm{Id})}(-ue^{-t})du} = \frac{e^{-t}}{1-e^{-2t}} \frac{\int_{\mathbb{R}^d}p^i_0\left(u\right) f_{\mathcal{N}(0,(1-e^{-2t})\mathrm{Id})}(-ue^{-t})udu}{\int_{\mathbb{R}^d}p^i_0\left(u\right) f_{\mathcal{N}(0,(1-e^{-2t})\mathrm{Id})}(-ue^{-t})du}.
\end{align}
Note that for all $t \geq 0.001$, $f_{\mathcal{N}(0,(1-e^{-2t})\mathrm{Id})}$ is $\Omega$-Lipschitz for some $\Omega > 0$. Thus, we can bound the distance between the numerator and $\mu_if_{\mathcal{N}(0,(1-e^{-2t})\mathrm{Id})}(0)$ with the triangle inequality and Assumption \ref{higher_moments}, 
\begin{align}
&\left\|\int_{\mathbb{R}^d}p^i_0\left(u\right) f_{\mathcal{N}(0,(1-e^{-2t})\mathrm{Id})}(-ue^{-t})udu-\mu_if_{\mathcal{N}(0,(1-e^{-2t})\mathrm{Id})}(0)\right\|\\
&\leq \int_{\mathbb{R}^d}p^i_0\left(u\right) \left\|f_{\mathcal{N}(0,(1-e^{-2t})\mathrm{Id})}(-ue^{-t})-f_{\mathcal{N}(0,(1-e^{-2t})\mathrm{Id})}(0) \right\| \cdot \|u\|du\\
&\leq \Omega e^{-t}\int_{\mathbb{R}^d}p^i_0(u)\|u\|^2du \leq \Omega e^{-t} M.
\end{align}
The denominator also approaches $f_{\mathcal{N}(0,(1-e^{-2t})\mathrm{Id})}(0)$ at the rate of $O(e^{-t})$, and we can express a bound on the distance from $f_{\mathcal{N}(0,(1-e^{-2t})\mathrm{Id})}(0)$ in terms of $M$ using Jensen's inequality,
\begin{align}
\left\|\int_{\mathbb{R}^d}p^i_0\left(u\right) f_{\mathcal{N}(0,(1-e^{-2t})\mathrm{Id})}(-ue^{-t})du - f_{\mathcal{N}(0,(1-e^{-2t})\mathrm{Id})}(0)\right\| \leq e^{-t}\int_{\mathbb{R}^d}p^i_0\left(u\right) \|u\|du \leq e^{-t}M.
\end{align}
Thus there  exists $0 \leq \epsilon_1,\epsilon_2 \leq \max(\Omega,1)$ and $w \in \mathbb{S}^{d-1}$ such that for all $t \geq 0.001$, we have the score bound
\begin{align}
\left\|\nabla \ln p^i_t(0)\right\| &= \frac{e^{-t}}{1-e^{-2t}} \left\|\frac{f_{\mathcal{N}(0,(1-e^{-2t})\mathrm{Id})}(0)\mu_i+\Omega e^{-t}M\epsilon_1 w}{f_{\mathcal{N}(0,(1-e^{-2t})\mathrm{Id})}(0)+e^{-t}M\epsilon_2}\right\| \lesssim e^{-t}\left[\|\mu_i\|+M\right]. 
\end{align}
\end{proof}

\fourthscoredifflemma*
\begin{proof}
For $t < 0.001$, we can prove the lemma by directly appealing to the bounded fourth moments of the scores $\nabla \ln q_t(X), \nabla \ln p_t(X)$ by Assumption~\ref{score_bound_assumption}, 
\begin{align}
\mathbb{E}_{X \sim p^i_t} \|\nabla \ln p^j_t(X)-\nabla \ln p^\ell_t(X)\|^4 &\lesssim \mathbb{E}_{X \sim p^i} \left[\|\nabla \ln p^j(X)\|^4+\|\nabla \ln p^\ell(X)\|^4 \right]\lesssim \overline{M}.
\end{align}
For $t \geq 0.001$, it suffices to bound the difference with the scores of the standard normal by the triangle inequality,
\begin{align}
\mathbb{E}_{X \sim p^i_t} \|\nabla \ln p^j_t(X)-\nabla \ln p^\ell_t(X)\|^4 &\lesssim \mathbb{E}_{X \sim p^i_t} \|\nabla \ln p^j_t(X)-\nabla \ln f_{\mathcal{N}(0,\mathrm{Id})}(X)\|^4\\
&+ \mathbb{E}_{X \sim p^i_t} \|\nabla \ln p^\ell_t(X)-\nabla \ln f_{\mathcal{N}(0,\mathrm{Id})}(X)\|^4
\end{align}
Both terms with $p^j_t,p^\ell_t$ are controlled by the same procedure. For $j$, we can write 
\begin{align*}
\mathbb{E}_{X \sim p^i_t} \|\nabla \ln p^j_t(X)-\nabla \ln f_{\mathcal{N}(0,\mathrm{Id})}(X)\|^4 &\lesssim \mathbb{E}_{X \sim p^i_t} \|(\nabla (\ln p^j_t- \ln f_{\mathcal{N}(0,\mathrm{Id})}))(X)-\nabla (\ln p^j_t-\ln f_{\mathcal{N}(0,\mathrm{Id})})(0)\|^4 + \|\nabla \ln p^j_t(0)\|^4. 
\end{align*}
By Lemma~\ref{bound_score_slc},$\nabla^2(-\ln p^j_t+\ln f_{\mathcal{N}(0,\mathrm{Id})})$'s eigenvalues are in $[\frac{e^{-2t}(1-\Psi^2)}{e^{-2t}\Psi^2 + 1-e^{-2t}},\frac{e^{-2t}}{1-e^{2t}}] \subset [-\Psi^2 e^{-2t},1000e^{-2t}]$. Thus $\nabla \ln p^j_t- \ln f_{\mathcal{N}(0,\mathrm{Id})}$ is globally $1000 \Psi^2 e^{-2t}$-Lipschitz. Combining with Lemma~\ref{lemm:score_at_origin}, we can conclude 
\begin{align*}
\mathbb{E}_{X \sim p^i_t} \|\nabla \ln p^j_t(X)-\nabla \ln f_{\mathcal{N}(0,\mathrm{Id})}(X)\|^4 &\lesssim e^{-8t}\left[\mathbb{E}_{X \sim p^i_t} \Psi^8 \|X\|^4\right]+e^{-4t}\left[\|\mu_i\|^4+M^4\right]\\
&\lesssim e^{-4t}\left[\|\mu_j\|^4+M^4+M\Psi^8\right]
\end{align*}
\end{proof}

\subsection{Proof of Lemma~\ref{lem:scorediff_rewrite}}\label{app:scorediffrewrite}
\scorediffrewrite*
\begin{proof}
    This follows by some simple algebraic manipulations:
    \begin{align}
    &\bigl\|\nabla \ln p^{\Send}_t -\nabla \ln p_t\bigr\|^2= \left\|\frac{\sum_{i \in \Send} w_i \nabla p^i_t}{\sum_{i \in \Send} w_i p^i_t} - \frac{ \sum_{i\in[K]} w_i \nabla p^i_t}{ \sum_{i\in[K]} w_i p^i_t} \right\|^2\\
    &= \left\| \left(\frac{1}{\sum_{i \in \Send} w_i p^i_t}-\frac{1}{\sum_{i\in[K]} w_i p^i_t}\right)\sum_{i \in \Send} w_i \nabla p^i_t -\frac{\sum_{i \in [K]-\Send} w_i \nabla p^i_t}{\sum_{i \in [K]} w_i p^i_t} \right\|^2\\
    &=  \left\| \left(\frac{\sum_{i \in [K]-\Send} w_i p^i_t}{\left(\sum_{i \in \Send} w_i p^i_t\right)\left(\sum_{i\in[K]} w_i p^i_t\right)}\right)\sum_{i \in \Send} w_i \nabla p^i_t -\frac{\sum_{i \in [K]-\Send} w_i \nabla p^i_t}{\sum_{i \in [K]} w_i p^i_t} \right\|^2\\
    &=  \left( \frac{\sum_{i \in [K]-\Send} w_i p^i_t}{\sum_{i\in[K]} w_i p^i_t}\right)^2 \left(\left\|\frac{\sum_{i \in \Send} w_i \nabla p^i_t}{\sum_{i \in \Send} w_i p^i_t}-\frac{\sum_{i \in [K]-\Send} w_i \nabla p^i_t}{\sum_{i \in [K]-\Send} w_i p^i_t} \right\|^2\right)\\
    &=  \left( \frac{\sum_{i \in [K]-\Send} w_i p^i_t}{\sum_{i\in[K]} w_i p^i_t}\right)^2 \bigl\|\nabla \ln p_t^{\Send} - \nabla \ln p_t^{[K]-\Send} \bigr\|^2\,. \qedhere
    \end{align}
\end{proof}

\subsection{Proof of Lemma~\ref{corr:wassersteinlower}}\label{app:w2}
\begin{lemma}\label{mixture_tv_bound}
Consider mixture $P=\sum_i a_i P_i$ and mixture $Q=\sum_i b_i Q_i$. If $\TV(P_i,Q_j) \leq \epsilon$ for all $i,j$, then $$\TV(P,Q)\leq \epsilon.$$  
\end{lemma}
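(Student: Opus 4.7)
The plan is to leverage the joint convexity of total variation distance in order to reduce the bound on $\TV(P,Q)$ to the componentwise bounds $\TV(P_i, Q_j) \le \epsilon$. Since the mixture weights $\{a_i\}$ and $\{b_j\}$ each sum to $1$ (an implicit standing assumption for mixtures), I would first rewrite both measures as convex combinations indexed over the same set of pairs $(i,j)$ with identical weights $a_i b_j$: namely,
\begin{equation*}
P = \sum_i a_i P_i = \sum_{i,j} a_i b_j P_i \qquad \text{and} \qquad Q = \sum_j b_j Q_j = \sum_{i,j} a_i b_j Q_j.
\end{equation*}

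Joint convexity of $\TV$ — which follows immediately from the defining formula $\TV(P,Q) = \tfrac12 \int|\D P - \D Q|\,\D\mu$ together with the triangle inequality applied pointwise to the integrand — then gives
\begin{equation*}
\TV(P,Q) \;\le\; \sum_{i,j} a_i b_j \,\TV(P_i, Q_j) \;\le\; \epsilon \sum_{i,j} a_i b_j \;=\; \epsilon,
\end{equation*}
which is the desired bound. There is essentially no obstacle in this proof; the only subtle point worth flagging is the normalization of the mixing weights, since without it the double-sum rewriting would carry an extra factor.
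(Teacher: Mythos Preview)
Your proof is correct and takes essentially the same approach as the paper: both arguments reduce to the triangle inequality applied to the integrand in the definition of $\TV$. The paper writes this out directly as $\tfrac12\int|\sum_i a_i\,\D P_i - \sum_j b_j\,\D Q_j| \le \tfrac12\sum_i a_i\sum_j b_j\int|\D P_i - \D Q_j|$, while you package the same step as joint convexity after reindexing both mixtures over pairs $(i,j)$ with common weights $a_i b_j$; the content is identical.
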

\begin{proof}
This is a simple application of triangle inequality,
\begin{align}
\frac{1}{2}\int |\sum_i a_i dP_i - \sum_j b_j dQ_j | \leq \frac{1}{2}\sum_i a_i\int |dP_i -  \sum_j b_j dQ_j |  \leq \frac{1}{2}\sum_i a_i \sum_j b_j \int |dP_i -   dQ_j |  \leq \epsilon.
\end{align}
\end{proof}

\begin{lemma}\label{kl_w2}
(Short-time regularization) Convolving with the normal distribution bounds $\KL$ in terms of $W_2$, 
\begin{align*}
\KL(p * \mathcal{N}(0,\sigma^2)||q * \mathcal{N}(0,\sigma^2)) \leq \frac{1}{2\sigma^2}W_2(p,q)^2
\end{align*}
\end{lemma}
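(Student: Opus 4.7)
The plan is to prove this by combining an optimal coupling for the Wasserstein distance with the joint convexity of KL divergence. This is a standard short-time regularization estimate, and the calculation of $\KL$ between two equal-covariance Gaussians does all the real work; the coupling plus convexity machinery translates the pointwise distance $\|x-y\|^2$ into a bound on $\KL$ of the convolutions.

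Concretely, I would first fix an optimal coupling $\gamma \in \Gamma(p,q)$ achieving $\mathbb{E}_{(X,Y)\sim\gamma}\|X-Y\|^2 = \Wtwo(p,q)^2$ (existence of such a coupling is standard). Writing $\phi_\sigma$ for the density of $\mathcal{N}(0,\sigma^2 \Id)$, I would then express both convolutions as mixtures against the coupling: $p \ast \phi_\sigma = \int \mathcal{N}(x,\sigma^2 \Id)\,\gamma(dx,dy)$ and $q \ast \phi_\sigma = \int \mathcal{N}(y,\sigma^2 \Id)\,\gamma(dx,dy)$, using that the first and second marginals of $\gamma$ are $p$ and $q$ respectively.

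Next, I would invoke the joint convexity of $\KL$ (equivalently, the data processing inequality applied to the map that marginalizes out $(x,y)$) to pass the divergence inside the integral, yielding
\begin{equation*}
\KL\bigl(p \ast \phi_\sigma \,\big\|\, q \ast \phi_\sigma\bigr) \leq \int \KL\bigl(\mathcal{N}(x,\sigma^2 \Id)\,\big\|\,\mathcal{N}(y,\sigma^2 \Id)\bigr)\,\gamma(dx,dy).
\end{equation*}
The integrand is a textbook computation: for equal-covariance Gaussians, $\KL(\mathcal{N}(x,\sigma^2 \Id)\,\|\,\mathcal{N}(y,\sigma^2 \Id)) = \|x-y\|^2/(2\sigma^2)$. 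Substituting and using the optimality of $\gamma$ gives the claim.

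I do not expect any real obstacle here; the only subtlety is citing joint convexity of $\KL$ correctly (as opposed to convexity in only one argument), but this is classical and can alternatively be presented as data processing from the joint $(X,Y,X+Z),(X,Y,Y+Z)$ pair where $Z \sim \mathcal{N}(0,\sigma^2\Id)$ is drawn independently of $(X,Y)\sim\gamma$. That alternative formulation has the added benefit of making the coupling interpretation fully explicit, which may be the cleanest exposition.
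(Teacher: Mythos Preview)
Your proposal is correct and is essentially the same argument as the paper's: the paper also invokes joint convexity of $\KL$ to reduce to the case $p=\delta_x$, $q=\delta_y$, and then computes $\KL(\mathcal{N}(x,\sigma^2)\|\mathcal{N}(y,\sigma^2))=\|x-y\|^2/(2\sigma^2)$. Your write-up simply makes the role of the optimal coupling explicit, which the paper leaves implicit.
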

\begin{proof}
By the joint convexity of $\KL$, it suffices to show for $p=\delta_x$ and $q=\delta_y$. Then,
\begin{align}
\KL(\mathcal{N}(x,\sigma^2)||\mathcal{N}(y,\sigma^2)) = \frac{\|x-y\|^2}{2\sigma^2}.
\end{align}
\end{proof}
\wassersteinlower*
\begin{proof}[Proof of bound on $T_{\mathrm{lower}}$ in Lemma~\ref{corr:wassersteinlower}]
Define $h^\ell_t$ to be the density of $e^{-t}X_t^\ell$ for $\ell \in [K]$. We apply Pinsker's inequality and treat the convolution with Gaussian noise in the forward process as a regularization parameter to control $\KL$ in terms of the Wasserstein-$2$ distance. For $i \in S_{\rm init}$ and $j \in \Send$ we can control the $\KL$ via Lemma~\ref{kl_w2},
\begin{align}
\TV(p^i_{T_{\rm lower}},p^j_{T_{\rm lower}}) \leq \sqrt{\KL(p^i_{T_{\rm lower}}||p^j_{T_{\rm lower}})} \leq \Wtwo(h^i_{T_{\rm lower}},h^j_{T_{\rm lower}}).
\end{align}
We use a coupling argument to control $\Wtwo(h^i_{T_{\rm lower}},h^j_{T_{\rm lower}})$. Let $\pi \in \Gamma(\overline{f}^i_0,\overline{f}^j_0)$ be the optimal coupling, and define the coupling in $\Gamma(p^i_{T_{\rm lower}},p^j_{T_{\rm lower}})$ that samples $(X,Y) \sim \pi$ and returns $(e^{-T_{\rm lower}}(X+\mu_i),e^{-T_{\rm lower}}(Y+\mu_j))$. The cost of this coupling is 
\begin{align}
\Wtwo(h^i_{T_{\rm lower}},h^j_{T_{\rm lower}}) \leq \sqrt{\mathbb{E}\|e^{{-T_{\rm lower}}}(X-Y)+e^{-{T_{\rm lower}}}(\mu_i-\mu_j)\|^2} &\leq e^{-{T_{\rm lower}}}\sqrt{2\left(\mathbb{E}\|X-Y\|^2+\|\mu_i-\mu_j\|^2\right)}\\
&\leq \sqrt{2}e^{-t} \left[\Upsilon + \|\mu_i-\mu_j\|\right]
\end{align}
Thus $\TV(p^i_t,p^j_t) \leq \sqrt{2}\left[ \|\mu_i-\mu_j\|+\Upsilon \right]e^{-t} \leq \epsilon$, and we can conclude by applying Lemma \ref{mixture_tv_bound} to obtain an overall bound on $\TV(p^{S_{\rm init}}_{T_{\rm lower}},p^{\Send}_{T_{\rm lower}})$.
\end{proof}

\begin{lemma}\label{lemma:convolve_subgaussian}
Consider sub-Gaussian random vectors $\{X_i\}_{i=1}^n$ in $\mathbb{R}^d$ with variance proxies $\{\sigma_i^2\}_{i=1}^n$. Let $S = \sum_{i=1}^n \alpha_i X_i$. Then, $S \in \subG_{d}(\sum_{i=1}^n \alpha_i^2 \sigma_i^2)$. 
\end{lemma}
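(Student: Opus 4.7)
The plan is to verify the defining moment generating function (MGF) inequality for sub-Gaussian random vectors directly, reducing the multivariate statement to the scalar case via one-dimensional projections. I assume throughout that the $X_i$ are independent, as is standard in this context (otherwise a cross-variance term appears and the stated variance proxy cannot be attained in general).

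First, I would recall that $Y \in \subG_d(\tau^2)$ iff for every $v \in \mathbb{R}^d$,
\begin{equation*}
\mathbb{E}\bigl[\exp\bigl(\langle v, Y - \mathbb{E} Y\rangle\bigr)\bigr] \;\leq\; \exp\bigl(\tau^2 \|v\|^2 / 2\bigr),
\end{equation*}
or equivalently that the scalar random variable $\langle v, Y\rangle$ is sub-Gaussian with variance proxy $\tau^2 \|v\|^2$. The reduction step is then immediate: fix any test vector $v \in \mathbb{R}^d$, and consider the scalar random variables $Z_i \triangleq \alpha_i \langle v, X_i - \mathbb{E} X_i\rangle$. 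By the sub-Gaussian hypothesis on $X_i$, each $\langle v, X_i - \mathbb{E} X_i\rangle$ is scalar sub-Gaussian with variance proxy $\sigma_i^2 \|v\|^2$, so $Z_i$ is scalar sub-Gaussian with variance proxy $\alpha_i^2 \sigma_i^2 \|v\|^2$.

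Next, since the $X_i$ are independent, so are the $Z_i$, and the MGF of the sum factors:
\begin{equation*}
\mathbb{E}\bigl[\exp\bigl(\langle v, S - \mathbb{E} S\rangle\bigr)\bigr]
\;=\; \prod_{i=1}^n \mathbb{E}\bigl[\exp(Z_i)\bigr]
\;\leq\; \prod_{i=1}^n \exp\!\bigl(\tfrac{1}{2} \alpha_i^2 \sigma_i^2 \|v\|^2\bigr)
\;=\; \exp\!\Bigl(\tfrac{1}{2} \|v\|^2 \sum_{i=1}^n \alpha_i^2 \sigma_i^2\Bigr).
\end{equation*}
This is precisely the sub-Gaussian MGF bound for $S$ with variance proxy $\sum_i \alpha_i^2 \sigma_i^2$, which completes the argument.

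There is no real obstacle to this proof: the only subtlety is the independence assumption (needed to factor the joint MGF), and the only bookkeeping step is carrying the test vector $v$ through the scalar definition so that the variance proxy of the projection scales with $\|v\|^2$. Everything else is the standard additivity of sub-Gaussian variance proxies under independent linear combinations.
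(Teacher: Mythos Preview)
Your proposal is correct and is exactly the standard argument the paper has in mind: the paper's own proof reads, in its entirety, ``This proof is trivial.'' Your explicit MGF factorization via one-dimensional projections is the canonical way to unpack that triviality, and your observation that independence of the $X_i$ is implicitly required (and indeed holds in the paper's sole application of the lemma) is a fair point.
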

\begin{proof}
This proof is trivial.
\end{proof}
\begin{lemma}\label{lemma:subgaussian_bound}
(Theorem 1.19 of \cite{highdimensional}) Let $X \in \subG_d(\sigma^2)$. Then, for any $t \geq 0$, 
\begin{align}
\mathbb{P}[\|X\| > t] \leq 6^d \exp(-t^2/(8\sigma^2)).
\end{align}
\end{lemma}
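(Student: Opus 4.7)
The plan is to reduce this multivariate tail bound to a one-dimensional sub-Gaussian Chernoff bound via a standard $\epsilon$-net argument on the unit sphere. The norm $\|X\|$ equals $\sup_{v \in S^{d-1}} \langle v, X\rangle$, and since $X \in \subG_d(\sigma^2)$ every one-dimensional marginal $\langle v, X\rangle$ with $\|v\|=1$ is itself sub-Gaussian with variance proxy $\sigma^2$. The obstruction is that $S^{d-1}$ is uncountable, so we cannot directly union bound; instead we discretize the sphere.

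First I would fix a $1/2$-net $\mathcal{N} \subset S^{d-1}$, i.e. a finite subset such that every $v \in S^{d-1}$ lies within Euclidean distance $1/2$ of some $v_0 \in \mathcal{N}$. A standard volumetric estimate (e.g. Corollary 4.2.13 of Vershynin) shows such a net exists with $|\mathcal{N}| \le 5^d \le 6^d$. Next I would run the usual net-approximation step: for any $v \in S^{d-1}$, choose $v_0 \in \mathcal{N}$ with $\|v-v_0\| \le 1/2$ and write
\begin{equation*}
\langle v, X\rangle \;=\; \langle v_0, X\rangle + \langle v-v_0, X\rangle \;\le\; \langle v_0, X\rangle + \tfrac{1}{2}\|X\|
\end{equation*}
by Cauchy--Schwarz. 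Taking the supremum on the left over $v \in S^{d-1}$, rearranging, and absorbing the $\tfrac12\|X\|$ term yields the key reduction
\begin{equation*}
\|X\| \;\le\; 2\sup_{v_0 \in \mathcal{N}} \langle v_0, X\rangle.
\end{equation*}

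At this point the problem is one-dimensional. For each fixed $v_0 \in \mathcal{N}$, the sub-Gaussian hypothesis gives the Chernoff bound $\mathbb{P}[\langle v_0, X\rangle > s] \le \exp(-s^2/(2\sigma^2))$ for all $s \ge 0$, by the standard optimization over $\lambda$ in $\exp(-\lambda s) \mathbb{E} \exp(\lambda \langle v_0, X\rangle) \le \exp(-\lambda s + \lambda^2 \sigma^2/2)$. Applying this with $s = t/2$ and union-bounding over $\mathcal{N}$ gives
\begin{equation*}
\mathbb{P}[\|X\| > t] \;\le\; \mathbb{P}\!\left[\sup_{v_0 \in \mathcal{N}} \langle v_0, X\rangle > t/2\right] \;\le\; |\mathcal{N}| \cdot \exp\!\bigl(-t^2/(8\sigma^2)\bigr) \;\le\; 6^d\exp\!\bigl(-t^2/(8\sigma^2)\bigr),
\end{equation*}
which is the claimed inequality. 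No step here is mathematically delicate; the only care needed is in the choice of net mesh ($1/2$, producing the factor $2$ loss in the inner product and hence the $1/8$ in the exponent) and in the constant in the covering-number estimate, both of which are standard.
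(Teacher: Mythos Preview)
Your proof is correct and follows exactly the standard $\varepsilon$-net argument. The paper does not actually supply its own proof of this lemma---it simply cites it as Theorem~1.19 of the referenced textbook---so there is nothing to compare against beyond noting that your covering argument is precisely the classical one appearing in that source (indeed your $5^d$ covering bound is slightly sharper than the stated $6^d$).
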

\begin{lemma}\label{lemma:subgaussian_to_tv}
Consider two random vectors $X,Y \in \mathbb{R}^d$ with probability density functions $P_X,P_Y$ and means $\mu_X,\mu_Y$ such that $X-\mu_X$ and $Y-\mu_Y$ are sub-Gaussian random vectors with variance proxy $\sigma^2$. Let $R=\sigma \sqrt{8d \ln 6+8 \ln 1/\epsilon}$. If $\|\mu_X-\mu_Y\|>2R$ then 
\[\TV(X,Y) \geq 1-\epsilon\]
\end{lemma}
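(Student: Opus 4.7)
The plan is to exploit concentration of mass: both $X$ and $Y$ put almost all of their mass inside balls of radius $R$ around their respective means, and since $\|\mu_X - \mu_Y\| > 2R$ these balls are disjoint. A disjoint pair of high-probability sets immediately certifies total variation close to $1$.

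Concretely, I would first apply Lemma~\ref{lemma:subgaussian_bound} to the centered random vectors $X - \mu_X$ and $Y - \mu_Y$ at threshold $t = R$. By construction
\begin{equation*}
\frac{R^2}{8\sigma^2} = d \ln 6 + \ln \frac{1}{\epsilon},
\end{equation*}
so $6^d \exp(-R^2/(8\sigma^2)) = \epsilon$, yielding $\mathbb{P}[\|X - \mu_X\| > R] \le \epsilon$ and the analogous bound for $Y$. Equivalently, writing $A \triangleq B_R(\mu_X)$ (the closed Euclidean ball), we have $\mathbb{P}[X \in A] \ge 1 - \epsilon$.

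Next I would use the hypothesis $\|\mu_X - \mu_Y\| > 2R$ together with the triangle inequality to conclude that $A$ and $B_R(\mu_Y)$ are disjoint, so $B_R(\mu_Y) \subset A^c$. Hence $\mathbb{P}[Y \in A] \le \mathbb{P}[Y \notin B_R(\mu_Y)] \le \epsilon$. Combining,
\begin{equation*}
\TV(X,Y) \ge \mathbb{P}[X \in A] - \mathbb{P}[Y \in A] \ge (1 - \epsilon) - \epsilon = 1 - 2\epsilon,
\end{equation*}
using the standard $\TV(P,Q) = \sup_{E} |P(E) - Q(E)|$ representation.

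This gives $1 - 2\epsilon$ rather than the stated $1 - \epsilon$, so to recover the lemma as written, I would simply re-parametrize by applying the above argument with $\epsilon/2$ in place of $\epsilon$; the only cost is replacing $\ln(1/\epsilon)$ by $\ln(2/\epsilon)$ inside $R$, which is absorbed into the stated constants. There is no genuine obstacle here: the proof is a two-line concentration-plus-disjointness argument. The only mild nuisance is verifying that the constants in $R$ are chosen so that $6^d \exp(-R^2/(8\sigma^2))$ matches the target tail probability; all other steps are the definitional manipulation of TV via a test set.
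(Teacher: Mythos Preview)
Your argument is essentially identical to the paper's: both invoke Lemma~\ref{lemma:subgaussian_bound} to obtain $\mathbb{P}[\|X-\mu_X\|>R],\,\mathbb{P}[\|Y-\mu_Y\|>R]\le\epsilon$, observe that the two radius-$R$ balls are disjoint, and read off the TV lower bound. The only cosmetic difference is that the paper restricts the integral $\tfrac{1}{2}\int|dP_X-dP_Y|$ to \emph{both} balls rather than using your single-test-set $\sup_A$ formulation; both computations in fact yield $1-2\epsilon$, and the paper simply writes $1-\epsilon$ without further comment, so the factor-of-two discrepancy you flagged (and your re-parametrization, which does not literally fit the stated $R$) is a shared looseness in the lemma as written, not a gap in your proof relative to theirs.
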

\begin{proof}
By \ref{lemma:subgaussian_bound}, $\mathbb{P}(\|X-\mu_X\|\geq R),\mathbb{P}(\|Y-\mu_Y\|\geq R) \leq \epsilon,$ and $B_{\leq R}(\mu_X)$ and $B_{\leq R}(\mu_Y)$ are disjoint by definition. Thus, 
\begin{align}
\TV(X,Y) = \frac{1}{2}\int_{\mathbb{R}^d} |dP_X-dP_Y| \geq  \frac{1}{2}\int_{B_{\leq R}(\mu_X)}dP_X-dP_Y + \frac{1}{2} \int_{B_{\leq R}(\mu_Y) }dP_Y-dP_X \geq 1-\epsilon. 
\end{align}
\end{proof}

\begin{proof}[Proof of bound on $T_{\mathrm{upper}}$ in Lemma~\ref{corr:wassersteinlower}]
By Lemma \ref{lemma:convolve_subgaussian}, $p^i_t$ is sub-Gaussian with variance proxy $2\sigma^2$ for all $t \geq 0$.
For $i \in \Send, j \in [K]- \Send$, $\|\mu^i_t-\mu^j_t\| > 3\sigma \sqrt{8d \ln 6+ 8 \ln 4/\epsilon^2}$ implies $\TV(p^i_t,p^j_t) \geq 1-\epsilon^2/4$ by Lemma \ref{lemma:subgaussian_to_tv}. 
\end{proof}

\subsection{Score difference bound for Gaussian mixtures}\label{app:nablaratiolemma}

Here we prove the following key ingredient in the proof of Theorem~\ref{thm::well_conditioned_gaussian_theorem}, in analogy to Lemma~\ref{diff_score_expo} in the proof of the master theorem:

\begin{restatable}{lemma}{nablaratiolemma}\label{nabla_ratio_bound}
For any nonempty $S \subset [K]$ and $j \in S$, we have
\begin{align}
&\mathbb{E}_{x \sim p^j_t}\left[ \left \|\nabla \ln p_t^S - \nabla \ln p_t^{[K]-S}\right\|^4\right] \lesssim  \frac{e^{-4t}}{\underline{\lambda}^4} \left[\left(\overline{\lambda}-\underline{\lambda}\right)^4(\overline{R}(0)^4+\overline{\lambda}^2d^2)+\overline{R}(0)^4\right].
\end{align} 
\end{restatable}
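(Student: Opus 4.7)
The plan is to obtain a pointwise bound on the individual pair score differences $\|\nabla \ln p^i_t(x) - \nabla \ln p^\ell_t(x)\|^4$ that is uniform in $(i,\ell)$, lift it to the sub-mixture-score difference via Jensen's inequality, and finally take expectation under $p^j_t$. The key algebraic observation, extractable from $\Sigma_i(t) = e^{-2t}\Sigma_i + (1-e^{-2t})\mathrm{Id}$, is that
\begin{align*}
\Sigma_i(t)^{-1} - \mathrm{Id} = -e^{-2t}\,\Sigma_i(t)^{-1}(\Sigma_i - \mathrm{Id}),
\end{align*}
so under Assumption~\ref{nice_covariances} the covariance-inverse gap $\|\Sigma_\ell(t)^{-1}-\Sigma_i(t)^{-1}\|_{\mathrm{op}}$ decays as $e^{-2t}(\overline{\lambda}-\underline{\lambda})/\underline{\lambda}$ rather than being a constant-order perturbation. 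This is what will eventually supply the $(\overline{\lambda}-\underline{\lambda})^4$ factor together with the correct $e^{-4t}$ rate.

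Using the explicit Gaussian score $\nabla \ln p^i_t(x) = -\Sigma_i(t)^{-1}(x - \mu_i(t))$, I would decompose
\begin{align*}
\nabla \ln p^i_t(x) - \nabla \ln p^\ell_t(x) = \bigl(\Sigma_\ell(t)^{-1} - \Sigma_i(t)^{-1}\bigr)x + \bigl(\Sigma_i(t)^{-1}\mu_i(t) - \Sigma_\ell(t)^{-1}\mu_\ell(t)\bigr),
\end{align*}
bound the first piece by the observation above, and bound the mean piece using $\|\Sigma_i(t)^{-1}\|_{\mathrm{op}}\leq 1/\underline{\lambda}$ together with $\|\mu_i(t)\|\leq e^{-t}\overline{R}(0)$. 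An application of $(a+b)^4\lesssim a^4+b^4$ then gives a pointwise, $(i,\ell)$-independent inequality of the form
\begin{align*}
\|\nabla \ln p^i_t(x)-\nabla \ln p^\ell_t(x)\|^4 \lesssim \frac{e^{-8t}(\overline{\lambda}-\underline{\lambda})^4}{\underline{\lambda}^4}\,\|x\|^4 + \frac{e^{-4t}\overline{R}(0)^4}{\underline{\lambda}^4}.
\end{align*}
Writing the mixture-score responsibilities $r^S_i(x) = \frac{w_i p^i_t(x)}{\sum_{k\in S} w_k p^k_t(x)}$ (and analogously for $[K]-S$), with $\sum_{i,\ell} r^S_i(x)\, r^{[K]-S}_\ell(x) = 1$ pointwise, Jensen's inequality applied to $y\mapsto\|y\|^4$ yields
\begin{align*}
\|\nabla \ln p^S_t(x) - \nabla \ln p^{[K]-S}_t(x)\|^4 \leq \sum_{i \in S,\,\ell\in[K]-S} r^S_i(x)\, r^{[K]-S}_\ell(x)\,\|\nabla \ln p^i_t(x) - \nabla \ln p^\ell_t(x)\|^4.
\end{align*}
Because the pair-difference bound is uniform in $(i,\ell)$, this sum collapses to the same pointwise bound for the sub-mixture-score difference with no $K$ factor. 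Taking expectation under $p^j_t = \mathcal{N}(\mu_j(t),\Sigma_j(t))$ and invoking the standard Gaussian estimate $\mathbb{E}\|x\|^4 \lesssim \|\mu_j(t)\|^4+(\mathrm{tr}\Sigma_j(t))^2 + \mathrm{tr}(\Sigma_j(t)^2) \lesssim \overline{R}(0)^4 + \overline{\lambda}^2 d^2$, and using $e^{-8t}\leq e^{-4t}$ for $t\geq 0$, yields the claimed estimate.

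The proof is essentially careful bookkeeping. The ``hard part'' is identifying the algebraic cancellation in the first paragraph, without which the covariance-mismatch term would contribute a constant-in-$t$ piece rather than the $(\overline{\lambda}-\underline{\lambda})^4 e^{-4t}$ prefactor we need. A secondary but important point is packaging the pair-difference estimate as a pointwise inequality uniform in $(i,\ell)$: only then does the Jensen step for the sub-mixture scores avoid a spurious $K^2$ factor, giving a clean dimension-independent (up to $d^2$ inside the bound) bookkeeping suitable for the proof of Theorem~\ref{thm::well_conditioned_gaussian_theorem}.
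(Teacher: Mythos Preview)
Your proof is correct. The paper's route differs only in packaging: instead of bounding the pairwise differences $\|\nabla\ln p^i_t-\nabla\ln p^\ell_t\|$ and then invoking Jensen, it writes the sub-mixture-score difference directly as $-(A_S-A_{[K]-S})x+(b_S-b_{[K]-S})$ with $A_S=\sum_{i\in S} r^S_i(x)\,\Sigma_i(t)^{-1}$ and $b_S=\sum_{i\in S} r^S_i(x)\,\Sigma_i(t)^{-1}\mu_i(t)$, observes that both $A_S$ and $A_{[K]-S}$ are PSD with spectra in $[1/\sigmamax(t),1/\sigmamin(t)]$, and applies a one-line helper lemma that $\|(A-B)v\|\le 2(\overline{\sigma}-\underline{\sigma})\|v\|$ for PSD $A,B$ with spectra in $[\underline{\sigma},\overline{\sigma}]$. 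The ensuing bound $1/\sigmamin(t)-1/\sigmamax(t)\le e^{-2t}(\overline{\lambda}-\underline{\lambda})/\underline{\lambda}$ is precisely what your identity $\Sigma_i(t)^{-1}-\mathrm{Id}=-e^{-2t}\Sigma_i(t)^{-1}(\Sigma_i-\mathrm{Id})$ encodes. Your pairwise-then-Jensen approach makes the absence of any $K$ factor more transparent; the paper's direct decomposition is slightly shorter since the pairwise objects are never introduced. Both yield the identical final estimate.
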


\noindent To prove this, we need an auxiliary result:

\begin{lemma}\label{diff_between_bounded_sing_psd_matrices}
Let $A,B \in \mathbb{R}^{d\times d}$ be two PSD matrices with singular values in $[\underline{\sigma},\overline{\sigma}]$. For any $v \in \mathbb{R}^d$, 
$$\|(A-B)v\| \leq  2(\overline{\sigma}-\underline{\sigma})\|v\|.$$
\end{lemma}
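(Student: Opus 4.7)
The plan is to reduce the claim to a simple operator-norm bound by shifting both matrices so that they lie in the PSD cone with spectrum starting at zero. Since $A,B$ are PSD, their singular values coincide with their eigenvalues, so the hypothesis gives $\underline{\sigma} I \preceq A \preceq \overline{\sigma} I$ and similarly for $B$. Writing $A' \triangleq A - \underline{\sigma} I$ and $B' \triangleq B - \underline{\sigma} I$, both $A'$ and $B'$ are PSD with all eigenvalues in $[0, \overline{\sigma} - \underline{\sigma}]$, so each has operator norm at most $\overline{\sigma} - \underline{\sigma}$.

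Next, I would note the telescoping identity $A - B = A' - B'$, since the $\underline{\sigma} I$ shifts cancel. By the triangle inequality for the operator norm,
\begin{equation*}
\|A - B\|_{\mathrm{op}} = \|A' - B'\|_{\mathrm{op}} \le \|A'\|_{\mathrm{op}} + \|B'\|_{\mathrm{op}} \le 2(\overline{\sigma} - \underline{\sigma}).
\end{equation*}
Applying this to an arbitrary vector $v \in \R^d$ via $\|(A-B)v\| \le \|A-B\|_{\mathrm{op}} \|v\|$ yields the claim.

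There is no real obstacle here; the only mild subtlety is to observe that singular values equal eigenvalues for PSD matrices so that the spectral bound translates to a two-sided Loewner-order bound, after which the triangle inequality for the operator norm does all the work. The factor of $2$ is essentially tight (consider $A = \overline{\sigma} I$, $B = \underline{\sigma} I$ applied to any unit $v$, which gives $\|(A-B)v\| = (\overline{\sigma} - \underline{\sigma})\|v\|$, and more extremal examples on orthogonal eigenspaces can approach the factor $2$).
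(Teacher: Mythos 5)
Your argument is essentially the same as the paper's: shift both $A$ and $B$ by $\underline{\sigma}I$, observe each shifted matrix is PSD with operator norm at most $\overline{\sigma}-\underline{\sigma}$, and apply the triangle inequality. The paper works directly with $\|(A-\underline{\sigma}I)v\|+\|(B-\underline{\sigma}I)v\|$ while you pass through the operator norm first, but the content is identical.
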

\begin{proof}
We subtract both $Av,Bv$ by $\underline{\sigma}I$ and apply the triangle inequality,
\begin{align}
    \|(A-B)v\| = \|(A-\underline{\sigma}I)v-(B-\underline{\sigma}I)v\|\leq \|(A-\underline{\sigma}I)v\|+\|(B-\underline{\sigma}I)v\| \leq 2(\overline{\sigma}-\underline{\sigma})\|v\|.
\end{align}
\end{proof}

\begin{proof}[Proof of Lemma~\ref{nabla_ratio_bound}]
We explicitly compute $\nabla \ln p_t^S$ and $\nabla \ln p_t^{[K]-S}$ and their difference,
\begin{align}
\nabla \ln p_t^S&=\sum_{i \in S}\frac{w_ip^i_t}{\sum_{j \in S}w_jp^j_t}\left(-(\Sigma_i^t)^{-1}(x-\mu_i(t))\right)\\
\nabla \ln p_t^{[K]-S}&=\sum_{i \in [K]-S}\frac{w_ip^i_t}{\sum_{j \in S}w_jp^j_t}\left(-(\Sigma_i^t)^{-1}(x-\mu_i(t))\right)\\
\nabla \ln p_t^S - \nabla \ln p_t^{[K]-S}&=-\left(\sum_{i \in S}\frac{w_ip^i_t}{\sum_{j \in S}w_jp^j_t}(\Sigma_i^t)^{-1}-\sum_{i \in [K]-S}\frac{w_ip^i_t}{\sum_{j \in [K]-S}w_jp^j_t}(\Sigma_i^t)^{-1}\right)x\\
&+\left(\sum_{i \in S}\frac{w_ip^i_t}{\sum_{j \in S}w_jp^j_t}(\Sigma_i^t)^{-1}\mu_i(t)-\sum_{i \in [K]-S}\frac{w_ip^i_t}{\sum_{j \in [K]-S}w_jp^j_t}(\Sigma_i^t)^{-1}\mu_i(t)\right).
\end{align}
Both $\sum_{i \in S}\frac{w_ip^i_t}{\sum_{i \in S}w_ip^i_t}(\Sigma_i^t)^{-1},\sum_{i \in [K]-S}\frac{w_ip^i_t}{\sum_{i \in S}w_ip^i_t}(\Sigma_i^t)^{-1}$ are PSD matrices with singular values in $[1/\sigmamax(t),1/\sigmamin(t)]$. Thus, by Lemma \ref{diff_between_bounded_sing_psd_matrices}, we can bound the first term in the difference in terms of the norm of $x$, 
\begin{align}
\left\|\left(\sum_{i \in S}\frac{w_ip^i_t}{\sum_{j \in S}w_jp^j_t}(\Sigma_i^t)^{-1}-\sum_{i \in [K]-S}\frac{w_ip^i_t}{\sum_{j \in [K]-S}w_jp^j_t}(\Sigma_i^t)^{-1}\right)x\right\| \leq\left(1/\sigmamin(t)-1/\sigmamax(t)\right)\|x\|.
\end{align}
By the triangle inequality, we can bound the second term with the singular values as well,
\begin{align}
\left\|\sum_{i \in S}\frac{w_ip^i_t}{\sum_{j \in S}w_jp^j_t}(\Sigma_i^t)^{-1}\mu_i(t)-\sum_{i \in [K]-S}\frac{w_ip^i_t}{\sum_{j \in [K]-S}w_jp^j_t}(\Sigma_i^t)^{-1}\mu_i(t)\right\| \lesssim \overline{R}(t)/\sigmamin(t).
\end{align}
We can decompose $\mathbb{E}_{x \sim p_t^{\{j\}}}\|x\|^4$ into $\overline{R}(t),\sigmamax(t),d$ with the triangle inequality,
\begin{align}
\mathbb{E}_{x \sim p^j_t}\|x\|^4 \lesssim \overline{R}(t)^4+\sigmamax(t)^2\mathbb{E}_{x \sim p^j_t}\|\Sigma_i(t)^{-1/2}(x-\mu_i(t))\|^4 \lesssim \overline{R}(t)^4+\sigmamax(t)^2d^2.
\end{align}
Combining these inequalities, we obtain
\begin{align}
\mathbb{E}_{x \sim p^j_t}\left[ \left \|\nabla \ln p_t^S - \nabla \ln p_t^{[K]-S}\right\|^4\right] &\lesssim \left(1/\sigmamin(t)-1/\sigmamax(t)\right)^4(\overline{R}(t)^4+\sigmamax(t)^2d^2)+\overline{R}(t)^4/\sigmamin(t)^4\\
&\leq \frac{e^{-4t}}{\underline{\lambda}^4} \left[\left(\overline{\lambda}-\underline{\lambda}\right)^4(\overline{R}(0)^4+\overline{\lambda}^2d^2)+\overline{R}(0)^4\right].
\end{align}
\end{proof}

\subsection{Ratio bound for Gaussian mixtures}\label{app:ratioboundgaussian}

Here we prove the other key ingredient in the proof of Theorem~\ref{thm::well_conditioned_gaussian_theorem}, in analogy to Lemma~\ref{ratio_inequality} in the proof of the master theorem:

\begin{restatable}{lemma}{ratioboundgaussian}\label{ratioboundgaussian}
For any $S \subset [K]$ and $j \in S$, we have 
\begin{align}
  \mathbb{E}_{x \sim p^j_t}\left( \frac{\sum_{i \in [K]-S} w_i p^i_t}{\sum_{i\in[K]} w_i p^i_t}\right)^4 \lesssim K \overline{W}\exp \left\{-e^{-2t}\Delta(\Send)^2/(8\overline{\lambda})\right\}.
\end{align}
\end{restatable}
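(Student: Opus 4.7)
My approach is to mirror the reduction used inside the proof of Theorem~\ref{masters_theorem}, but feed in a Gaussian-specific bound on the Bhattacharyya affinity in place of the generic $\sqrt{1-\TV^2}$ bound supplied by Lemma~\ref{ratio_inequality}. First I would observe that the ratio $\frac{\sum_{i \in [K]-S} w_i p^i_t}{\sum_{i\in[K]} w_i p^i_t}$ lies in $[0,1]$, so its fourth power is bounded by its first power. Distributing the sum in the numerator and using the pointwise inequality $\sum_{i'\in[K]} w_{i'} p^{i'}_t \ge w_j p^j_t + w_\ell p^\ell_t$ for each $\ell \in [K]-S$ (valid since $j \in S$, $\ell \not\in S$), one obtains
\begin{align*}
\mathbb{E}_{p^j_t}\!\left(\frac{\sum_{i \in [K]-S} w_i p^i_t}{\sum_{i\in[K]} w_i p^i_t}\right)^{\!4} \le \sum_{\ell \in [K]-S}\mathbb{E}_{p^j_t}\!\left[\frac{w_\ell p^\ell_t}{w_j p^j_t + w_\ell p^\ell_t}\right].
\end{align*}
A short case analysis, depending on whether $w_\ell/w_j \le 1$ or $\ge 1$, gives the further pointwise bound $\frac{w_\ell p^\ell_t}{w_j p^j_t + w_\ell p^\ell_t} \le \overline{W}\cdot \frac{p^\ell_t}{p^j_t + p^\ell_t}$, after which Lemma~\ref{ratio_inequality} upgrades the expectation to $\tfrac{1}{2}\overline{W}\,\rho(p^j_t,p^\ell_t)$, where $\rho \triangleq 1 - \tfrac{1}{2}\Hellinger^2$ is the Bhattacharyya affinity.

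The second step specializes $\rho$ to Gaussians. For $p^j_t = \calN(\mu_j(t), \Sigma_j(t))$ and $p^\ell_t = \calN(\mu_\ell(t), \Sigma_\ell(t))$, the standard closed form is
\[
\rho(p^j_t, p^\ell_t) = \frac{|\Sigma_j(t)|^{1/4}|\Sigma_\ell(t)|^{1/4}}{|(\Sigma_j(t)+\Sigma_\ell(t))/2|^{1/2}}\exp\!\Bigl(-\tfrac{1}{8}(\mu_j(t)-\mu_\ell(t))^\top\bigl(\tfrac{\Sigma_j(t)+\Sigma_\ell(t)}{2}\bigr)^{\!-1}(\mu_j(t)-\mu_\ell(t))\Bigr).
\]
The determinant prefactor is at most $1$ by AM-GM on determinants; Assumption~\ref{nice_covariances} gives $(\Sigma_j(t)+\Sigma_\ell(t))/2 \preceq \overline{\lambda} I$, so the quadratic form in the exponent is bounded below by $\|\mu_j(t)-\mu_\ell(t)\|^2/\overline{\lambda}$; and $\mu_j(t)-\mu_\ell(t) = e^{-t}(\mu_j-\mu_\ell)$ with $\|\mu_j-\mu_\ell\| \ge \Delta(\Send)$, since the lemma is applied with $S = \Send$. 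Altogether, $\rho(p^j_t,p^\ell_t) \le \exp(-e^{-2t}\Delta(\Send)^2/(8\overline{\lambda}))$. Summing over the at most $K$ choices of $\ell \in [K]-S$ and absorbing the factor of $1/2$ into the $\lesssim$ completes the proof.

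I do not expect a real technical obstacle. The only two places calling for a bit of care are (i) verifying the weight-adjustment inequality $\frac{w_\ell p^\ell_t}{w_j p^j_t + w_\ell p^\ell_t} \le \overline{W}\cdot \frac{p^\ell_t}{p^j_t + p^\ell_t}$, which is a one-line case split, and (ii) handling the determinant prefactor in the Gaussian Bhattacharyya formula, which is again immediate from AM-GM. Everything else is routine.
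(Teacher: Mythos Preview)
Your proposal is correct and follows essentially the same route as the paper: reduce the fourth moment to a sum of pairwise terms $\mathbb{E}_{p^j_t}[\tfrac{w_\ell p^\ell_t}{w_j p^j_t + w_\ell p^\ell_t}]$, extract the $\overline{W}$ factor, invoke Lemma~\ref{ratio_inequality} to pass to the Bhattacharyya affinity, and then use the closed-form Gaussian Hellinger formula together with the AM--GM determinant inequality (which the paper records separately as Lemmas~\ref{hellinger_gaussian} and~\ref{lemm:am_gm_det}). Your explicit case split for the weight-adjustment step is in fact cleaner than the paper's presentation, which leaves that step implicit.
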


\noindent We will need the following helper lemmas:

\begin{lemma}\label{hellinger_gaussian}
(p.~51 of \cite{Pardo:996837}) Let $P \sim \mathcal{N}(\mu_P,\Sigma_P)$ and $Q \sim \mathcal{N}(\mu_Q,\Sigma_Q)$. Then, 
\[\Hellinger^2(P,Q)=2-2\frac{|\Sigma_P|^{1/4}|\Sigma_Q|^{1/4}}{\left|\frac{\Sigma_P+\Sigma_Q}{2}\right|^{1/2}} \exp \left\{-\frac{1}{8}(\mu_P-\mu_Q)^\top \left[\frac{\Sigma_P+\Sigma_Q}{2}\right]^{-1}(\mu_P-\mu_Q)\right\}.\]
\end{lemma}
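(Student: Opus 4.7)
The plan is to evaluate the Bhattacharyya-type integral $B(P,Q) \triangleq \int \sqrt{p(x)q(x)}\,\D x$ explicitly, using the identity $\Hellinger^2(P,Q) = 2 - 2B(P,Q)$, which follows directly from expanding the defining integral $\int (\sqrt{\D P}-\sqrt{\D Q})^2$. Once $B(P,Q)$ is computed in closed form for two Gaussians, the stated formula for $\Hellinger^2$ falls out immediately.

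First I would write out the Gaussian densities and observe that
\[
\sqrt{p(x)q(x)} = (2\pi)^{-d/2}|\Sigma_P|^{-1/4}|\Sigma_Q|^{-1/4}\exp\!\Bigl(-\tfrac{1}{4}\bigl[(x-\mu_P)^\top\Sigma_P^{-1}(x-\mu_P)+(x-\mu_Q)^\top\Sigma_Q^{-1}(x-\mu_Q)\bigr]\Bigr).
\]
Setting $C \triangleq \Sigma_P^{-1}+\Sigma_Q^{-1}$ and $b \triangleq \Sigma_P^{-1}\mu_P+\Sigma_Q^{-1}\mu_Q$, I expand the bracketed sum and complete the square in $x$, which rewrites it as $(x-C^{-1}b)^\top C(x-C^{-1}b)$ plus the $x$-independent residual $\mu_P^\top\Sigma_P^{-1}\mu_P+\mu_Q^\top\Sigma_Q^{-1}\mu_Q - b^\top C^{-1}b$. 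The Gaussian integral over $x$ with precision $C/2$ evaluates to $(4\pi)^{d/2}|C|^{-1/2}$, leaving only the residual in the exponent.

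The key algebraic step is to show that this residual equals $(\mu_P-\mu_Q)^\top(\Sigma_P+\Sigma_Q)^{-1}(\mu_P-\mu_Q)$. I would prove this via the identity $A(A+B)^{-1}B = (A^{-1}+B^{-1})^{-1}$ for invertible symmetric $A,B$, checked by taking the inverse of the left-hand side: $[A(A+B)^{-1}B]^{-1} = B^{-1}(A+B)A^{-1} = A^{-1}+B^{-1}$. Applying this with $A=\Sigma_P^{-1}, B=\Sigma_Q^{-1}$ gives $A(A+B)^{-1}B = (\Sigma_P+\Sigma_Q)^{-1}$. Expanding the residual as $\mu_P^\top A\mu_P + \mu_Q^\top B\mu_Q - 2\mu_P^\top A(A+B)^{-1}B\mu_Q - \mu_P^\top A(A+B)^{-1}A\mu_P - \mu_Q^\top B(A+B)^{-1}B\mu_Q$ and using $A - A(A+B)^{-1}A = A(A+B)^{-1}B$ (similarly for $B$), every term collapses into the claimed quadratic form in $\mu_P - \mu_Q$.

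Finally, I combine constants. Using $|\Sigma_P^{-1}+\Sigma_Q^{-1}| = |\Sigma_P^{-1}(\Sigma_P+\Sigma_Q)\Sigma_Q^{-1}| = |\Sigma_P+\Sigma_Q|/(|\Sigma_P||\Sigma_Q|)$ and the rescaling $|\Sigma_P+\Sigma_Q| = 2^d|(\Sigma_P+\Sigma_Q)/2|$, the factors of $2^{d/2}$ and $(4\pi)^{d/2}/(2\pi)^{d/2}$ balance exactly, leaving
\[
B(P,Q) = \frac{|\Sigma_P|^{1/4}|\Sigma_Q|^{1/4}}{|(\Sigma_P+\Sigma_Q)/2|^{1/2}}\exp\!\Bigl(-\tfrac{1}{8}(\mu_P-\mu_Q)^\top[(\Sigma_P+\Sigma_Q)/2]^{-1}(\mu_P-\mu_Q)\Bigr),
\]
where the $1/4 \to 1/8$ rescaling of the quadratic exponent comes from $(\Sigma_P+\Sigma_Q)^{-1} = \tfrac{1}{2}[(\Sigma_P+\Sigma_Q)/2]^{-1}$. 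Substituting into $\Hellinger^2 = 2 - 2B$ yields the stated formula. The only delicate step is the matrix algebra collapsing the residual into $(\mu_P-\mu_Q)^\top(\Sigma_P+\Sigma_Q)^{-1}(\mu_P-\mu_Q)$; the rest is bookkeeping of determinants and powers of two.
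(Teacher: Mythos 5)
Your computation is correct throughout: the reduction $\Hellinger^2 = 2 - 2B(P,Q)$, the Gaussian integral with precision $C/2 = (\Sigma_P^{-1}+\Sigma_Q^{-1})/2$ yielding $(4\pi)^{d/2}|C|^{-1/2}$, the matrix identity $A(A+B)^{-1}B = (A^{-1}+B^{-1})^{-1}$ collapsing the residual into $(\mu_P-\mu_Q)^\top(\Sigma_P+\Sigma_Q)^{-1}(\mu_P-\mu_Q)$, and the determinant bookkeeping via $|\Sigma_P^{-1}+\Sigma_Q^{-1}| = |\Sigma_P+\Sigma_Q|/(|\Sigma_P||\Sigma_Q|)$ together with the factor $2^d$ from rescaling to $(\Sigma_P+\Sigma_Q)/2$ all check out, and the factor of $1/8$ in the exponent arrives correctly from $(\Sigma_P+\Sigma_Q)^{-1} = \tfrac12[(\Sigma_P+\Sigma_Q)/2]^{-1}$.

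The paper does not prove this lemma; it simply cites it to p.~51 of Pardo's text. Your proposal therefore supplies a self-contained derivation of a result the paper treats as a black box, which is a genuine (if modest) contribution: it makes the Gaussian Hellinger formula verifiable without consulting the external reference. There is no competing ``paper approach'' to compare against.
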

\begin{lemma}\label{lemm:am_gm_det}
For positive semi-definite $\Sigma_i,\Sigma_j$, we have an AM-GM-style inequality for their determinants,
\begin{align*}
|\Sigma_i| \cdot |\Sigma_j| \leq \left|\frac{\Sigma_i+\Sigma_j}{2}\right|^2.
\end{align*}
\end{lemma}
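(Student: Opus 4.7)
The plan is to reduce the matrix inequality to the scalar AM--GM inequality via simultaneous diagonalization, after first handling a minor nondegeneracy issue. First, I would assume without loss of generality that both $\Sigma_i$ and $\Sigma_j$ are \emph{positive definite}: the positive semidefinite case follows by applying the inequality to the perturbations $\Sigma_i+\varepsilon \mathrm{Id}$ and $\Sigma_j+\varepsilon\mathrm{Id}$ and letting $\varepsilon\to 0^+$, using continuity of the determinant.

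Next, since $\Sigma_i$ is positive definite and $\Sigma_j$ is positive semidefinite, there exists an invertible matrix $P$ that simultaneously diagonalizes the pair: $P^\top \Sigma_i P = \mathrm{Id}$ and $P^\top \Sigma_j P = D$, where $D = \mathrm{diag}(d_1,\ldots,d_n)$ with each $d_k\ge 0$. Taking determinants, we get $|\Sigma_i| = |P|^{-2}$, $|\Sigma_j| = |P|^{-2}\prod_k d_k$, and
\begin{align*}
\left|\tfrac{\Sigma_i+\Sigma_j}{2}\right| = |P|^{-2}\left|\tfrac{\mathrm{Id}+D}{2}\right| = |P|^{-2}\prod_k \tfrac{1+d_k}{2}.
\end{align*}
The claimed inequality thus reduces to showing $\prod_k d_k \le \prod_k \bigl(\tfrac{1+d_k}{2}\bigr)^2$, which in turn follows if we verify the scalar inequality $d_k \le \bigl(\tfrac{1+d_k}{2}\bigr)^2$ factor-by-factor. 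But this last inequality is just $0\le \bigl(\tfrac{1-d_k}{2}\bigr)^2$, which is obvious.

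There is no real obstacle here; the only small subtlety is handling the degenerate case where one of the $\Sigma$'s is singular, which I handle via the $\varepsilon\to 0$ limit above. As an alternative approach, one could simply invoke the Minkowski determinant inequality $|A+B|^{1/n}\ge |A|^{1/n}+|B|^{1/n}$ for PSD $A,B$, combined with scalar AM--GM $|A|^{1/n}+|B|^{1/n}\ge 2(|A||B|)^{1/(2n)}$, and then raise both sides to the $n$-th power and divide by $2^{2n}$; this yields the same conclusion in one line but depends on the (non-trivial) Minkowski inequality, whereas the simultaneous diagonalization route is self-contained.
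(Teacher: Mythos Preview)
Your proof is correct and follows essentially the same idea as the paper: reduce the matrix inequality to the scalar AM--GM inequality by passing to eigenvalues. The paper's version instead rewrites the inequality as $1 \le \left|\tfrac{\mathrm{Id}+\Sigma_i^{-1/2}\Sigma_j\Sigma_i^{-1/2}}{2}\right|\cdot\left|\tfrac{\mathrm{Id}+\Sigma_j^{-1/2}\Sigma_i\Sigma_j^{-1/2}}{2}\right|$, observes that the two conjugated matrices have reciprocal spectra $\{\lambda_k\}$ and $\{1/\lambda_k\}$, and then applies $\lambda_k+1/\lambda_k\ge 2$; your single congruence diagonalization $P^\top\Sigma_i P=\mathrm{Id}$, $P^\top\Sigma_j P=D$ is a cleaner packaging of the same computation (indeed $d_k=\lambda_k$), and avoids the extra step of matching up the two spectra. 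You also explicitly handle the singular case via the $\varepsilon\to 0$ limit, which the paper's argument tacitly assumes away by taking inverses.
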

\begin{proof}
It suffices to show $1\leq \left|\frac{1+\Sigma_i^{-1/2}\Sigma_j\Sigma_i^{-1/2}}{2}\right|\cdot \left|\frac{1+\Sigma_j^{-1/2}\Sigma_i\Sigma_j^{-1/2}}{2}\right|$. Both $(\Sigma_i^{-1/2}\Sigma_j\Sigma_i^{-1/2})^{-1}=\Sigma_i^{1/2}\Sigma_j^{-1}\Sigma_i^{1/2}$ and $\Sigma_j^{-1/2}\Sigma_i\Sigma_j^{-1/2}$ have the same spectrum and the same algebraic multiplicities. They are also positive semi-definite, which means the geometric multiplicities of their eigenvalues sum to $d$. Thus, we can conclude that both matrices have the same multiset of eigenvalues. Letting $\lambda_1,\lambda_2,\dots,\lambda_d>0$ be the eigenvalues of $(\Sigma_i^{-1/2}\Sigma_j\Sigma_i^{-1/2})^{-1},\Sigma_j^{-1/2}\Sigma_i\Sigma_j^{-1/2}$, the right-hand side can be bounded by
\begin{align*}
\left|\frac{1+\Sigma_i^{-1/2}\Sigma_j\Sigma_i^{-1/2}}{2}\right|\cdot \left|\frac{1+\Sigma_j^{-1/2}\Sigma_i\Sigma_j^{-1/2}}{2}\right| \geq \prod_{i=1}^d \left(\frac{1+1/\lambda_i}{2}\right) \left(\frac{1+\lambda_i}{2}\right) =  \prod_{i=1}^d \frac{2+1/\lambda_i+\lambda_i}{4} \geq 1.
\end{align*}
\end{proof}

\begin{proof}[Proof of Lemma~\ref{ratioboundgaussian}]
 Because $\mathbb{E}_{x \sim p^j_t}\left( \frac{\sum_{i \in [K]-S} w_i p^i_t}{\sum_{i\in[K]} w_i p^i_t}\right)^4   \leq \sum_{\ell \in [K]-S} \mathbb{E}_{x \sim p^j_t}\left[\frac{ w_\ell p^\ell_t}{\sum_{i\in[K]} w_i p^i_t}\right]$, it suffices to bound $\mathbb{E}_{x \sim p^j_t}\left[\frac{w_\ell p_\ell^t}{w_\ell p_\ell^t + w_jp^j_t}\right]$ for any $\ell \in [K]-S$. Using the Hellinger distance bound in Lemma \ref{ratio_inequality} and the computations in Lemmas~\ref{hellinger_gaussian} and ~\ref{lemm:am_gm_det}, we have 
\begin{align}
\mathbb{E}_{x \sim p^j_t}\left[\frac{ w_\ell p_\ell^t}{w_\ell p_\ell^t + w_jp^j_t}\right] &\leq \overline{W} \frac{|\Sigma_\ell(t)|^{1/4}|\Sigma_j(t)|^{1/4}}{\left|\frac{\Sigma_\ell(t)+\Sigma_j(t)}{2}\right|^{1/2}} \exp \left\{-\frac{e^{-2t}}{8}(\mu_\ell-\mu_j)^\top \left[\frac{\Sigma_\ell(t)+\Sigma_j(t)}{2}\right]^{-1}(\mu_\ell-\mu_j)\right\}\\
&\lesssim\overline{W} \exp \left\{-e^{-2t}\Delta(\Send)/(8\overline{\lambda})\right\}.
\end{align}
\end{proof}

\subsection{Proof of Theorem~\ref{thm::well_conditioned_gaussian_theorem}}\label{app:gauss_thm}
\mastersgaussiantheorem*
\begin{proof}
As in the proof of Theorem~\ref{masters_theorem}, we apply the data processing inequality to obtain
\begin{align}
    \TV(\modrevlaw{\Sinit}{\wh{T}}{},p^{\Send}) \leq \TV(\modrevpath{\Sinit}{\wh{T}}, \modrevpath{\Send}{\wh{T}})+\TV(\modrevpath{\Send}{\wh{T}}, \pathrev{\Send}_{\wh{T}}).
\end{align}
We begin with $\TV(p^{S_{\mathrm{init}}  }_{\widehat{T}},p^{\Send}_{\widehat{T}})$. By Lemma \ref{mixture_tv_bound}, it suffices to show for any $i \in S_{\mathrm{init}}, j \in  \Send$, $\TV(p^i_{\widehat{T}},p^j_{\widehat{T}}) \leq \epsilon$. To control this quantity, we use Pinsker's inequality to write in terms of $\KL$ and the $\KL$ formula for two Gaussians, and further bound the determinant and trace in terms of $\underline{\lambda},\overline{\lambda}$.
\begin{align}
\TV(p^{i  }_{\widehat{T}},p^{j}_{\widehat{T}}) &\leq \sqrt{\KL(p^{i  }_{\widehat{T}},p^{j}_{\widehat{T}})}\\
&=\sqrt{\ln \frac{|\Sigma^j(\widehat{T})|}{|\Sigma^i(\widehat{T})|}+d\left[\frac{1}{d}\mathrm{tr}(\Sigma_j^{-1}\Sigma_i)-1\right]+(\mu_i(\widehat{T})-\mu_j(\widehat{T}))^\top \Sigma^j(\widehat{T})^{-1}(\mu_i(\widehat{T})-\mu_j(\widehat{T}))}\\
&\leq \sqrt{d\left[\ln \left(\frac{e^{-2\widehat{T}}\overline{\lambda}+1-e^{-2\widehat{T}}}{e^{-2\widehat{T}}\underline{\lambda}+1-e^{-2\widehat{T}}}\right) +\frac{e^{-2\widehat{T}}\overline{\lambda}+1-e^{-2\widehat{T}}}{e^{-2\widehat{T}}\underline{\lambda}+1-e^{-2\widehat{T}}}-1\right]+\frac{1}{\underline{\lambda}}\|\mu_i-\mu_j\|^2e^{-2\widehat{T}}}
\end{align}
We now use the inequality $\ln(x) \leq x-1$ and note $\frac{e^{-2t}\overline{\lambda}+1-e^{-2t}}{e^{-2t}\underline{\lambda}+1-e^{-2t}}-1 \leq e^{-2t}\frac{\overline{\lambda}-\underline{\lambda}}{\underline{\lambda}}$, 
\begin{align}
\TV(p^{i  }_{\widehat{T}},p^{j}_{\widehat{T}})  \leq  \sqrt{2e^{-2\widehat{T}}d(\overline{\lambda}-\underline{\lambda})/\underline{\lambda}+\frac{1}{\underline{\lambda}}\|\mu_i-\mu_j\|^2e^{-2\widehat{T}}}\leq \epsilon
\end{align}
Now we bound $\TV(\modrevpath{\Send}{\wh{T}}, \pathrev{\Send}_{\wh{T}}).$ Following the main Cauchy-Schwarz split in Theorem~\ref{masters_theorem}, we can  apply Lemmas~\ref{nabla_ratio_bound} and ~\ref{ratioboundgaussian} to control the score error for $t \in [0,\wh{T}]$,
\begin{align}
&\mathbb{E}\left[\|\nabla \ln p^{\Send}_t(\overline{X}_t^{\Send}) -\nabla \ln p^{[K]}_t(\overline{X}_t^{\Send})\|^2\right]  \\
&\lesssim e^{-2t}\frac{\sqrt{K\overline{W}}\left[\left(\overline{\lambda}-\underline{\lambda}\right)^2(\overline{R}(0)^2+\overline{\lambda}d)+\overline{R}(0)^2\right]}{\underline{\lambda}^2} \exp \left\{-e^{-2t}\underline{\lambda}\Delta(\Send)^2/(16\overline{\lambda})\right\}.
\end{align}
The integral from $0$ to $\wh{T}$ is 
\begin{align}
&\int_0^{\wh{T}}\mathbb{E}\left[\|\nabla \ln p^{\Send}_t(\overline{X}_t^{\Send}) -\nabla \ln p^{[K]}_t(\overline{X}_t^{\Send})\|^2\right] dt \\
&\lesssim \frac{\sqrt{K\overline{W}}\overline{\lambda}\left[\left(\overline{\lambda}-\underline{\lambda}\right)^2(\overline{R}(0)^2+\overline{\lambda}d)+\overline{R}(0)^2\right]}{\underline{\lambda}^2\Delta(\Send)^2} \exp \left\{-e^{-2T_{\mathrm {upper}}}\Delta(\Send)^2/(16\overline{\lambda})\right\}\lesssim \epsilon^2.
\end{align}
\end{proof}
\subsection{Proof of Theorem~\ref{thm:hierarchy_example}}\label{app:hierarchy}
\thmhierarchy*
\begin{proof}
Using the notation from Example \ref{ex:identity_gaussians}, let 
\begin{align}
T_{\rm lower}^{j}&=\ln w(f(u_j),f(u_{j+1}))+\ln 1/\epsilon  \\
T_{\rm upper}^{j}&=\ln \Delta(f(u_{j+1})) -\ln 4-\frac{1}{2}\ln \ln \frac{R^2}{ \epsilon^2\Delta(f(u_{j+1}))^2 }.
\end{align}
It suffices to show that for a sufficiently large $k$, for all $j \leq k$, we have both $T_{\rm upper}^{j}-T_{\rm lower}^{j} > 0$ and $T_{\rm lower}^{j+1}-T_{\rm upper}^{j} > 0$. By our definition of the mixture tree, we know 
\begin{align*}
w(f(u_j),f(u_{j+1})) &\in \left[(1-\delta)\frac{R}{2^{(H'-j)^2}},(1+\delta)\frac{R}{2^{(H'-j)^2}}\right]  \\
\Delta(f(u_{j+1})) &\in \left[(1-\delta)\frac{R}{2^{(H'-j-1)^2}},(1+\delta)\frac{R}{2^{(H'-j-1)^2}}\right].
\end{align*}
$T_{\rm lower}^{j+1}-T_{\rm upper}^{j}>0$ follows from 
\begin{align*}
T^{j+1}_{\rm lower} = \ln \left[(1-\delta) \frac{R}{2^{(H'-j-1)^2}} \right] + \ln \frac{1}{\epsilon}\geq \ln \left[(1+\delta) \frac{R}{2^{(H'-j-1)^2}} \right] -\frac{1}{2}\ln \ln \frac{R^2}{ \epsilon^2\Delta(f(u_{j+1}))^2 } \geq T^j_{\rm upper}.
\end{align*}
for sufficiently small $\epsilon$. We have $T_{\rm upper}^{j}-T_{\rm lower}^{j} > 0$ if 
\begin{align}
&\ln\left[(1+\delta)\frac{R}{2^{(H'-j)^2}}\right] + \ln(1/\epsilon) \leq \ln\left[(1-\delta)\frac{R}{2^{(H'-j-1)^2}}\right]-\ln 4 - \frac{1}{2} \ln \ln \left[\frac{R^2}{\epsilon^2\left((1-\delta)\frac{R}{2^{(H'-j-1)^2}}\right)^2}\right] \\
&\ln \frac{1+\delta}{1-\delta}+ \ln \frac{1}{\epsilon} + \ln 4 + \frac{1}{2}\ln \left[2(H'-j-1)^2\ln 2 -2 \ln (1-\delta)\epsilon \right]\leq (2(H'-j)-1) \ln 2.
\end{align}
This is true for sufficiently small $j$ and large $H'$. 
\end{proof}

\section{Other concrete calculations for critical windows}

Here we include two additional calculations, one for Gaussian mixtures with imbalanced mixing weights, and one in a dictionary learning setting, that were deferred to the appendix due to space constraints.

\subsection{Dependence of $\Tupper,\Tlower$ on mixing weights}
\label{app:weights}

Consider the scenario of two Gaussians with identity covariance.
\begin{example}\label{example:weighted} (Two Gaussians with identity covariance) Let $K=2$, $p^1_0 = \mathcal{N}(\mu,\mathrm{Id})$, $p^2_0 = \mathcal{N}(-\mu,\mathrm{Id})$. Then, focusing on component $1$ we have 
\begin{align}
T_{\rm one} &= \ln \|\mu\| -\ln 2-\frac{1}{2}\ln \ln \frac{\sqrt{2w_2/w_1}}{4 \epsilon^2} \\
T_{\rm all} &= \ln \|\mu\| +\ln 2 + \ln 1/\epsilon
\end{align}
When $\widehat{T} \leq T_{\rm one}$, then $\TV(\modrevlaw{\{1\}}{\wh{T}}{},p^{\{1\}}) \lesssim \epsilon$. When $\widehat{T} \geq T_{\rm all}$, $\TV(\modrevlaw{\{1\}}{\wh{T}}{},p^{\{1,2\}})\lesssim \epsilon$. We can see that as $w_2$ increases, the cutoff $T_{\rm one}$ becomes smaller, though the amount by which it decreases only scales at $O(\ln \ln w_2/w_1)$. 
\end{example}

\subsection{Sparse dictionary example}\label{sec:dictionary}
Now we consider a dictionary learning setting, in which classes are described by subsets of nearly-orthogonal feature vectors. Consider a set of $F=\{f_1,f_2,\dots,f_n\}$ unit vectors, such that for all distinct $i,j$, $\mathrm{cov}(f_i,f_j) \leq \delta$. Fix some large $R = \Omega(d)$. Consider the families of random variables $\mathcal{Y}_\ell = \{Y \in \mathbb{R}^\ell: \mathbb{E}[Y] =0,Y \in \subG_\ell(\sigma^2)\}$. We define scalar random variables $Y_{S,i} \in \mathcal{Y}_1$ for $S \subset F$ and $i \in [n]$, that represent the scaling along each feature vector, and $Y_{S} \in \mathcal{Y}_d$, which represents variation not along the features. Classes are subsets $S \subset F$ of cardinality $|S| \leq \tilde{S}$, such that a sample $X \sim p^S_t$ has the distribution of $\sum_{i \in S} (Y_{S,i}+R) f_i+Y_S$. We let the Wasserstein-$2$ distance between any centered classes be less than $\Upsilon$. We can characterize the $T_{\rm lower},T_{\rm upper}$ in terms of the Hamming distances $H$ between classes. We define $\overline{H}(S,S'):=\max_{i\in S,j \in S'}H(i,j)$ and $\underline{H}(S)=\min_{\ell \in S,j \in [K]-S} H(i,j).$ By parameter setting with Corollary \ref{corr:wassersteinlower}, we can write $T_{\rm lower},T_{\rm upper}$ in terms of Hamming distances between classes.
\begin{restatable}{corollary}{sparsecutoffsetting}\label{corr:sparse_cutoff_setting}
    We have that $T_{\rm lower}(\epsilon) \le 3 \vee \Bigl\{ \ln \frac{1}{\epsilon}+\frac{1}{2} \ln 2 + \ln (R\sqrt{\overline{H}(S_{\rm init},S_{\rm end})+d^2\delta}+\Upsilon)\Bigr\}$ and $T_{\rm upper}(\epsilon) \ge \ln \left(R\sqrt{\underline{H}(S_{\rm end})-d^2\delta}\right) - \ln (\sigma \sqrt{\tilde{S}+1})- \ln\sqrt{8d \ln 6 + 8\ln 4/\epsilon^2}- \ln 3-\frac{1}{2}\ln 8$.
\end{restatable}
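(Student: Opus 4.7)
The plan is to apply Lemma~\ref{corr:wassersteinlower} directly, after translating the dictionary setup into the quantities $w(S_{\rm init},S_{\rm end})$, $\Delta(S_{\rm end})$, and the sub-Gaussian variance proxy that the lemma requires. The Wasserstein distance bound $\Upsilon$ between centered components is given by hypothesis, so the only remaining work is to bound the mean separations and the sub-Gaussian parameter of each class.

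First I would compute the mean separations in terms of Hamming distances. The mean of class $S \subset F$ is $\mu_S = R\sum_{f \in S}f$, so for any two classes $S, S'$,
\[
\|\mu_S - \mu_{S'}\|^2 = R^2 \Bigl\|\sum_{f \in S\triangle S'}\varepsilon_f f\Bigr\|^2 = R^2\Bigl(H(S,S') + \sum_{f \ne f'\in S\triangle S'}\varepsilon_f\varepsilon_{f'}\langle f,f'\rangle\Bigr),
\]
where $\varepsilon_f \in \{\pm 1\}$ is determined by which of $S, S'$ contains $f$. By the $\delta$-incoherence assumption, the cross terms are bounded in absolute value by $|S\triangle S'|^2\delta \le d^2\delta$. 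Maximizing over pairs in $S_{\rm init} \times S_{\rm end}$ and minimizing over $S_{\rm end} \times ([K]-S_{\rm end})$ then yields the estimates $w(S_{\rm init},S_{\rm end}) \le R\sqrt{\overline{H}(S_{\rm init},S_{\rm end})+d^2\delta}$ and $\Delta(S_{\rm end}) \ge R\sqrt{\underline{H}(S_{\rm end})-d^2\delta}$.

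Next I would bound the sub-Gaussian parameter of each centered component. A sample from $p_0^S$ is $\mu_S + \sum_{f\in S}Y_{S,f}\,f + Y_S$, a sum of at most $\tilde{S}+1$ centered sub-Gaussian summands each with variance proxy $\sigma^2$ (since the $f$ are unit vectors). Assuming these summands are independent, which is the natural reading of the model and which is needed for variance proxies to add, the centered component lies in $\subG_d((\tilde{S}+1)\sigma^2)$, giving an effective sub-Gaussian scale $\sigma\sqrt{\tilde{S}+1}$.

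Plugging $\sigma \leftarrow \sigma\sqrt{\tilde{S}+1}$, $w(S_{\rm init},S_{\rm end}) \leftarrow R\sqrt{\overline{H}(S_{\rm init},S_{\rm end})+d^2\delta}$, and $\Delta(S_{\rm end}) \leftarrow R\sqrt{\underline{H}(S_{\rm end})-d^2\delta}$ into Lemma~\ref{corr:wassersteinlower} yields the claimed expressions for $T_{\rm lower}$ and $T_{\rm upper}$. The main obstacle is the bookkeeping of the cross-terms: the slack $d^2\delta$ has to be smaller than $\underline{H}(S_{\rm end})$ for the $T_{\rm upper}$ estimate to be non-vacuous, which constrains the regime of incoherence $\delta$ in which the corollary is meaningful; a sharper sparsity-aware estimate $|S\triangle S'|^2\delta \le (2\tilde{S})^2\delta$ could be substituted if one prefers a $\tilde{S}$-dependent rather than $d$-dependent slack.
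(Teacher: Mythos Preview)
Your proposal is correct and matches the paper's own proof essentially step for step: both expand $\|\mu_S-\mu_{S'}\|^2$ to get $R^2(H(S,S')\pm d^2\delta)$ from the incoherence assumption, both use independence of the $Y_{S,i}$ and $Y_S$ to obtain a variance proxy of $(\tilde S+1)\sigma^2$ for each centered component, and both then plug these into Lemma~\ref{corr:wassersteinlower}. Your remarks about the non-vacuousness constraint $d^2\delta<\underline H(S_{\rm end})$ and the sharper $(2\tilde S)^2\delta$ slack are valid side observations not present in the paper.
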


\begin{proof}
We show that $\|\mu_i-\mu_j\|$ is only slightly differs from a constant factor from the Hamming distance, 
\begin{align*}
\|\mu_i-\mu_j\|^2 =R^2\left\|\sum_{\ell \in i \backslash j}f_\ell - \sum_{\ell \in j \backslash i}f_\ell\right\|^2 \in \left[R^2(H(i,j)-d^2\delta),R^2(H(i,j)+d^2\delta)\right]
\end{align*}
This completes $T_{\rm lower}$. For $T_{\rm upper}$, we also need to upper bound the variance proxies for each component. Letting $X \sim \sum_{i \in S} Y_{S,i}f_i+Y_S$, we can compute for all $u \in \mathbb{S}^{d-1}$ the expectation $\mathbb{E}[\exp(su^\top X)]$, 
\begin{align*}
\mathbb{E}[\exp(su^\top X)]&=\mathbb{E}[\exp(su^\top X)] = \mathbb{E}\left(\exp\left(su^\top Y_S\right)\right) \prod_{i \in S} \mathbb{E}\left(\exp\left(su^\top f_i Y_i\right)\right)  \\
&\leq \exp\left(s^2 \sigma^2 /2 \right)   \prod_{i \in S} \exp\left(s^2 \sigma^2 (u^\top f_i)^2/2 \right)  \\
&\leq  \exp\left(\frac{s^2\sigma^2(|S|+1)}{2}\right) \leq  \exp\left(\frac{s^2\sigma^2(\tilde{S}+1)}{2}\right) . 
\end{align*}
Thus $X \in \subG_d(\sigma(\tilde{S}+1))$. 
\end{proof}

\end{document}